\documentclass{article}

     \usepackage[final]{neurips_2022}

\usepackage[utf8]{inputenc} %
\usepackage[T1]{fontenc}    %
\usepackage{hyperref}       %
\usepackage{url}            %
\usepackage{booktabs}       %
\usepackage{amsfonts}       %
\usepackage{nicefrac}       %
\usepackage{microtype}      %
\usepackage[dvipsnames]{xcolor}         %

\usepackage{amsthm,amsmath,amssymb,bbm,bm}
\usepackage{cancel}
\usepackage{natbib}
\usepackage{longtable}
\usepackage{multirow}
\usepackage{setspace}
\usepackage{centernot}
\usepackage{array}
\usepackage{algorithmic}
\usepackage[linesnumbered, ruled, vlined]{algorithm2e}
\usepackage{mathrsfs}
\usepackage{dsfont}
\usepackage{relsize}
\usepackage{rotating}
\usepackage{enumitem}
\usepackage{float}
\floatstyle{plaintop}
\restylefloat{table}
\usepackage{subcaption}
\usepackage{multirow}
\usepackage{wrapfig}
\usepackage{graphicx}
\usepackage{tikzit}

\tikzstyle{Circle}=[fill=none, draw=black,ultra thick, shape=circle, align=center, text width=0.8cm]
\tikzstyle{Small Circle}=[fill=none, draw=black,ultra thick,dashed, shape=circle, align=center, text width=0.6cm]
\tikzstyle{Circle Gray}=[fill={rgb,255: red,191; green,191; blue,191}, draw=black,ultra thick, shape=circle, align=center, tikzit fill={rgb,255: red,191; green,191; blue,191}, tikzit draw=black, tikzit shape=circle]
\tikzstyle{Ellipse}=[fill=none, draw=black,ultra thick, shape=ellipse, align=center, text width=0.7cm]
\tikzstyle{Ellipse Upper Gray}=[fill=none, draw=black,ultra thick, path picture={\fill[gray] (path picture bounding box.north west) rectangle (path picture bounding box.east);}, shape=ellipse, align=center, text width=0.7cm]
\tikzstyle{Ellipse Lower Gray}=[fill=none, draw=black,ultra thick, path picture={\fill[gray] (path picture bounding box.south west) rectangle (path picture bounding box.east);}, shape=ellipse, align=center, text width=0.7cm]
\tikzstyle{Rectangle}=[fill=none, draw=black,ultra thick, shape=rectangle, tikzit draw=black, tikzit fill=none, tikzit shape=rectangle]
\tikzstyle{Rectangle Upper Gray}=[fill=none, draw=black,ultra thick, path picture={\fill[gray] (path picture bounding box.north west) rectangle (path picture bounding box.east);}, shape=rectangle, align=center, text width=0.7cm]
\tikzstyle{Rectangle Lower Gray}=[fill=none, draw=black,ultra thick, path picture={\fill[gray] (path picture bounding box.south west) rectangle (path picture bounding box.east);}, shape=rectangle, align=center, text width=0.7cm]
\tikzstyle{Rectangle Gray}=[fill={rgb,255: red,191; green,191; blue,191}, draw=black, shape=rectangle, tikzit fill={rgb,255: red,191; green,191; blue,191}, tikzit draw=black, tikzit shape=rectangle]

\tikzstyle{ArrowTo}=[draw=black,ultra thick, tikzit fill=none, tikzit draw=black, ->]
\tikzstyle{ArrowToDash}=[draw=black,ultra thick,dashed, tikzit fill=none, tikzit draw=black, ->]
\tikzstyle{ArrowFrom}=[draw=black,ultra thick, tikzit fill=none, tikzit draw=black, <-]
\tikzstyle{ArrowFromDash}=[draw=black,ultra thick,dashed, tikzit fill=none, tikzit draw=black, <-]
\tikzstyle{ArrowTwo}=[draw=black,ultra thick, tikzit fill=none, tikzit draw=black, <->]
\tikzstyle{ArrowTwoDash}=[draw=black,ultra thick,dashed, tikzit fill=none, tikzit draw=black, <->]

\hypersetup{
colorlinks=true,
linkcolor=red,
urlcolor=blue,
citecolor=blue
}

\usepackage{comment}

\newenvironment{smashedalign*}
{\par$\!\aligned}
{\endaligned$\par}

\newcommand{\numit}{\stepcounter{equation}\tag{\theequation}}

\theoremstyle{definition}
\newtheorem{ex}{Example}%
\newtheorem{assumption}{Assumption}
\newtheorem{definition}{Definition}[section]

\newtheorem{theorem}{Theorem}[section]
\newtheorem{lemma}{Lemma}[section] %
\newtheorem{proposition}{Proposition}[section]

\newtheorem{corollary}{Corollary}[section]

\newcommand{\indep}{\rotatebox[origin=c]{90}{$\models$}}
\newcommand{\dep}{\cancel{\rotatebox[origin=c]{90}{$\models$}}}

\newcommand{\proj}{\mbox{{\rm proj}}}

\def\expit{\mathrm{expit}}

\DeclareMathOperator*{\argmin}{arg\,min}
\DeclareMathOperator*{\arginf}{arg\,inf}

\newcommand{\BB}{{\mathbb{B}}}

\newcommand{\EE}{{\mathbb{E}}}

\newcommand{\NN}{{\mathbb{N}}}

\newcommand{\PP}{{\mathbb{P}}}

\newcommand{\RR}{{\mathbb{R}}}

\newcommand{\VV}{{\mathbb{V}}}

\newcommand{\bA}{{\mathbf{A}}}

\newcommand{\bI}{{\mathbf{I}}}

\newcommand{\bK}{{\mathbf{K}}}

\newcommand{\bM}{{\mathbf{M}}}

\newcommand{\bR}{{\mathbf{R}}}

\newcommand{\bU}{{\mathbf{U}}}

\newcommand{\bW}{{\mathbf{W}}}

\newcommand{\bY}{{\mathbf{Y}}}
\newcommand{\bZ}{{\mathbf{Z}}}

\newcommand{\bOmega}{{\boldsymbol{\Omega}}}
\newcommand{\bXi}{{\boldsymbol{\Xi}}}

\newcommand{\mA}{{\mathcal{A}}}
\newcommand{\mB}{{\mathcal{B}}}

\newcommand{\mD}{{\mathcal{D}}}

\newcommand{\mF}{{\mathcal{F}}}
\newcommand{\mG}{{\mathcal{G}}}
\newcommand{\mH}{{\mathcal{H}}}
\newcommand{\mI}{{\mathcal{I}}}

\newcommand{\mL}{{\mathcal{L}}}
\newcommand{\mM}{{\mathcal{M}}}
\newcommand{\mN}{{\mathcal{N}}}

\newcommand{\mP}{{\mathcal{P}}}

\newcommand{\mR}{{\mathcal{R}}}
\newcommand{\mS}{{\mathcal{S}}}

\newcommand{\mU}{{\mathcal{U}}}
\newcommand{\mV}{{\mathcal{V}}}
\newcommand{\mW}{{\mathcal{W}}}
\newcommand{\mX}{{\mathcal{X}}}

\newcommand{\mZ}{{\mathcal{Z}}}

\newcommand{\hhat}{\hat{h}}

\newcommand{\hdelta}{\widehat{\delta}}
\newcommand{\hDelta}{\widehat{\Delta}}

\newcommand{\tmG}{\tilde{\mG}}
\newcommand{\tmH}{\tilde{\mH}}

\renewcommand{\star}[1]{\mbox{{\rm star}}\{ #1 \}}

\newcommand{\norm}[1]{\| #1 \|}

\newcommand{\nmF}[1]{\| #1 \|_{\mF}}    %
\newcommand{\nmH}[1]{\| #1 \|_{\mH}}    %
\newcommand{\nmG}[1]{\| #1 \|_{\mG}}    %
\newcommand{\nmEmp}[1]{\| #1 \|_{n}}  %

\newcommand{\Vpi}{V^{\pi}}

\newcommand{\qpi}{q^{\pi}}
\newcommand{\hqpi}{\hat{q}^{\pi}}

\newcommand{\vpi}{v^{\pi}}
\newcommand{\hvpi}{\hat{v}^{\pi}}

\newcommand{\Ppi}{\mP^{\pi}}
\newcommand{\hPpi}{\widehat{\mP}^{\pi}}
\newcommand{\hmP}{\widehat{\mP}}

\newcommand{\bigO}{\ensuremath{\mathop{}\mathopen{}\mathcal{O}\mathopen{}}}

\def\qpi{q^\pi}

\newcommand{\pushright}[1]{\ifmeasuring@#1\else\omit\hfill$\displaystyle#1$\fi\ignorespaces}

\newcommand{\abs}[1]{|#1|}

\def\ones{\mathbf{1}}

\newcommand{\samfixed}[1]{}

\def\given{\mid}
\def\Given{\, \Big| \,}

\def\ds1{{\mathrm{1 \hspace{-2.6pt} I}}}

\def\calA{{\cal A}}

\def\calD{{\cal D}}

\def\calM{{\cal M}}

\def\calP{{\cal P}}

\def\calS{{\cal S}}

\def\calU{{\cal U}}
\def\calV{{\cal V}}
\def\calW{{\cal W}}

\def\calZ{{\cal Z}}

\title{Off-Policy Evaluation for Episodic Partially Observable Markov Decision Processes under Non-Parametric Models}

\author{%
  Rui Miao\\
  University of California, Irvine\\
  \texttt{rmiao2@uci.edu} \\
  \And
  Zhengling Qi\thanks{Corresponding author.} \\
  The George Washington University\\
  \texttt{qizhengling@gwu.edu}\\
  \And
  Xiaoke Zhang \\
  The George Washington University\\
  \texttt{xkzhang@gwu.edu}\\
}

\begin{document}

\allowdisplaybreaks

\maketitle
\begin{abstract}
We study the problem of off-policy evaluation (OPE) for episodic Partially Observable Markov Decision Processes (POMDPs) with continuous states. Motivated by the recently proposed proximal causal inference framework, we develop a non-parametric identification result for estimating the policy value via a sequence of so-called \textit{V-bridge} functions with the help of time-dependent proxy variables. We then develop a fitted-Q-evaluation-type algorithm to estimate V-bridge functions recursively, 
where a non-parametric instrumental variable (NPIV) problem is solved at each step. By analyzing this challenging sequential NPIV problem,
we establish the finite-sample error bounds for estimating the V-bridge functions and accordingly that for evaluating the policy value, in terms of
the sample size, length of horizon and so-called \textit{(local) measure of ill-posedness} at each step. To the best of our knowledge, this is the first finite-sample error bound for OPE in POMDPs under non-parametric models.
\end{abstract}

\section{Introduction}

In practical reinforcement learning (RL), 
a representation of the full state which makes the system Markovian and therefore amenable to most existing RL algorithms is not known {\em a priori.} Decision makers are often facing so-called \textit{partial observability} of the state information, which significantly hinders the task of RL. In general, agents have to maintain all historical information and establish a belief system on the hidden state for optimal decision making. A partially observable Markov decision process (POMDP) is often used to model the data generating process. See examples in robotics \citep{rafferty2011faster}, precision medicine \citep{tsoukalas2015data}, stochastic game \citep{hansen2004dynamic} and many others. However, it is well known that learning optimal policies in POMDP is computationally intractable \citep{papadimitriou1987complexity}. The issue of partial observability becomes more serious in the batch setting, where agents are not able to actively collect additional data and further explore the environment. For example, standard off-policy evaluation (OPE) methods, which aim to learn a policy value from the batch data generated from some behavior policy, would fail to give a consistent estimate because of unobserved state variables.

Due to this practical concern, there is a recent line of research studying the OPE under the framework of a confounded POMDP, where the behavior policy to generate the batch data is allowed to depend on some unobserved state variables \citep[e.g.,][]{tennenholtz2020off,nair2021spectral,bennett2021proximal,shi2021minimax}. Their identification results on the policy value are inspired by the negative controls or so-called proxy variables in the literature of causal inference \citep[e.g.,][]{miao2018identifying,tchetgen2020introduction}. A building block of these results is the existence of some bridge functions, namely $Q$/$V$-bridge or weight-bridge functions, which are projections of the $Q$/$V$-functions or importance weights defined over the original state space onto the observation space. %
The corresponding statistical estimation of these bridge functions mainly relies on solving linear integral equations \citep[e.g.,][]{kress1989linear}.  Different from the tabular case studied by \cite{tennenholtz2020off} and \cite{nair2021spectral} and linear models studied by \cite{shi2021minimax} theoretically, solving linear integral equations with non-parametric models in the continuous state/observation space are known to be challenging due to the potential ill-posedness \citep{chen2011rate}, leading to slow statistical convergence rates. %
However, existing theoretical results developed by \cite{bennett2021proximal} and \cite{shi2021minimax} 
require fast enough convergence rates for these bridge function estimators in order to establish  the asymptotic normality of their estimators for OPE, which could be illusive when the problem is seriously ill-posed under non-parametric models. This is different from the supervised learning where a fast enough convergence rate can be easily achieved under non-parametric models. Therefore, to fill this important theoretical gap, it is necessary to study the finite-sample performance of OPE of which bridge functions are estimated non-parametrically.

Motivated by these, in this paper, we study the OPE for confounded and episodic POMDPs with continuous states, where we non-parametrically estimate $V$-bridge functions.
Our main contribution to the literature is three-fold.
First, relying on some  time-dependent proxy variables, we establish a non-parametric identification result for OPE using $V$-bridge functions for time-inhomogeneous confounded POMDPs. Based on the identification result, we develop a new  fitted-Q-evaluation(FQE)-type approach to estimating $V$-bridge functions recursively and obtain an estimator for OPE based on the bridge function estimators. At each step of our algorithm,  we propose to fit a non-parametric instrumental variable (NPIV) regression using a min-max estimation method, i.e., solving a linear integral equation with a non-parametric model. Our algorithm can be viewed as a sequential NPIV estimation, which is not well studied in the literature. Second and most importantly, we establish the finite-sample error bound for estimating $V$-bridge functions and accordingly that for evaluating the policy value, 
in terms of the \textit{sample size}, \textit{length of horizon} and \textit{(local) measure of ill-posedness} at each step.  
Unlike the well studied standard NPIV model in the econometrics literature \citep[e.g.,][]{ai2003efficient,newey2003instrumental} where the response variable is directly observed, 
the response variable in our NPIV model at each step of the algorithm relies on the model estimate at its previous step. 
This difference makes our theoretical analysis
substantially difficult.
By carefully characterizing the statistical error due to the NPIV estimation at each step and more importantly, its propagation effect on future estimates, we are able to establish the first finite-sample result of OPE for confounded POMDPs under non-parametric models, which achieves a polynomial order over the length of horizon and sample size. %
Finally, our theoretical results on the sequential NPIV estimation are generally applicable to other sequential-type conditional moment restriction problems. The development of the
uniform finite-sample error bounds of the NPIV estimation, extending the pointwise result in the previous literature such as \cite{dikkala2020minimax}, may be of independent interest.

\section{Related Work}
Recently there is a surge of interest in studying OPE with unobserved variables in the sequential decision making problem. Specifically, \cite{zhang2016markov} are among the first who proposed the framework of confounded MDPs, which essentially considers i.i.d. confounders in the dynamic system and therefore preserves the Markovian property. Along this direction, OPE methods are developed under various identification conditions such as partial identification using sensitivity analysis \citep{namkoong2020off,kallus2020confounding,bruns2021model}, instrumental variable or mediator assisted OPE \citep{liao2021instrumental,li2021causal,shi2022off} and many others. Another line of research focuses on more general confounded POMDP models , where the Markovian assumption is violated, under which several
point estimation results were developed such as the aforementioned proxy variables related methods \citep{tennenholtz2020off,deaner2018proxy,ying2021proximal,bennett2021proximal,nair2021spectral,shi2021minimax}, spectral methods in undercomplete POMDPs \citep{hsu2012spectral,anandkumar2014tensor,jin2020sample} and predictive state representation related methods \citep{littman2001predictive,singh2012predictive,cai2022sample}. 

Our proposed method, which uses proxy variables for OPE, is closely related to those recently developed by  \cite{bennett2021proximal}, \cite{shi2021minimax}, and \cite{ying2021proximal}.
\cite{bennett2021proximal} and \cite{ying2021proximal} studied episodic POMDPs (or complex longitudinal studies) and mainly focused on developing asymptotic normality results of their policy value estimators. Their results rely on some high level rate conditions on the bridge function estimation, which are \textit{unknown} if they would be satisfied when using non-parametric models due to the aforementioned measure of ill-posedness. \cite{shi2021minimax} mainly focused on time-homogeneous infinite-horizon POMDPs and developed asymptotic normality for their estimators under similar high-level conditions, which therefore has the same issue. Besides, while \cite{shi2021minimax} also established finite-sample bounds for their bridge function estimation and corresponding OPE, they only study the tabular case or linear/parametric models, where the issue of ill-posedness \textit{does not exist}. In this paper, we provide a systematic investigation on the estimation of $V$-bridge functions and establish finite-sample guarantees for them and the corresponding OPE under non-parametric models. Specifically, we tackle the challenging episodic setting, where $V$-bridge functions are estimated sequentially. Without carefully controlling the effect of ill-posedness at each step and its propagation effect on future steps, the estimation error for these $V$-bridge functions and also that for OPE could be exponentially large in terms of the length of horizon. Motivated by the chaining argument in the empirical process theory, we successfully disentangle the effects of ill-posedness on the current step and future steps separately and thus establish finite-sample bounds for $V$-bridge functions and OPE both with a polynomial dependence on the length of horizon, which are new theoretical results we contribute to the literature.

Since our $V$-bridge function estimation can be formulated as a sequential NPIV problem, it is natually related to classical NPIV estimations, which have been extensively studied in the econometrics literature \citep[see, e.g.,][for earlier reference]{newey2003instrumental,ai2003efficient,ai2012semiparametric,hall2005nonparametric,chen2011rate,chen2018optimal,darolles2011nonparametric,blundell2007semi}. 
Recently there is also a growing interest in the min-max estimation for NPIV models
\citep[see, e.g.,][for some recent developments]{muandet2020dual,dikkala2020minimax,hartford2017deep}. As commented before, existing theoretical results for standard NPIV models cannot be directly applied to our setting due to the sequential structure of our FQE algorithm, so we need to develop new theory to address our setting. 
Technically, in order to establish a  polynomial-order finite-sample error bound over the length of horizon for OPE, which is particularly important in RL, we decompose the measure of ill-posedness at each step of our sequential NPIV estimation into two components: the so-called (local) measure of one-step transition ill-posedness and the standard (local) measure of ill-posedness \citep[e.g.,][]{chen2012estimation}. Thanks to this novel decomposition, the effect of the first component on the estimation error of $V$-bridge functions and OPE is multiplicative but can be properly controlled while that of the second component could be large but is only cumulative. See Theorem \ref{thm:decomposition}. %
Finally, we remark that \cite{ai2012semiparametric} also studied the sequential NPIV estimation problem, where the non-parametric components are estimated jointly. However, this method could be computationally inefficient in RL with a long horizon. More importantly, their results are built on the nested structure among conditional moment restriction models, which are not satisfied in our setting.

\section{Preliminaries and Notations}\label{sec:prelim}
In this section, we introduce the framework of discrete-time confounded POMDPs and its related OPE problem. Consider an episodic and confounded POMDP denoted by $\calM = (\calS, \calU, \calA, T, \calP, r)$, with $\calS$ and $\calU$ as the observed and unobserved continuous state spaces respectively, 
$\calA$ as the discrete action space, $T$ as the length of horizon, $\calP = \{\mathbb{P}_t\}_{t=1}^T$  as the transition kernel over $\calS \times \calU \times \calA$ to $\calS \times \calU$, and $r = \{r_{t}\}_{t=1}^T$ as the reward function over $\calS \times \calU \times \calA$. $\calS$ can also be treated as the observation space in the classical POMDP. Then the process of $\calM$ can be summarized as $\{S_t, U_t, A_t, R_t\}_{t=1}^T$ with $S_t$ and $U_t$ as observed and unobserved state variables, $A_t$ as the action, and $R_t$ as the reward, where $r_t(s, u, a) = \EE[R_t \given S_t = s, U_t = u, A_t = a]$ for any $(s, u, a) \in \calS \times \calU \times \calA$. For simplicity, we assume that $\abs{R_t} \leq 1$ 
uniformly in  $1 \leq t \leq T$. %

The goal of OPE in a confounded POMDP is to evaluate the performance of a target policy using the batch data collected by some behavior policy. In this paper, the target policy we focus on is a sequence of functions mapping from the state space $\calS$ to a probability mass function over the action space $\calA$, denoted by $\pi = \{\pi_t\}_{t=1}^T$, where $\pi_t(a \given s)$ is the probability of choosing an action $A_t=a$ given the state value $S_t = s$. We remark that our proposed identification results stated in Section \ref{sec:identify} can be generalized to other policies such as history-dependent ones. Given a target policy $\pi$, define its state value function as
\begin{align}\label{eqn: Q-function}
\textstyle	\Vpi_t(s, u) = \EE^\pi[\sum_{t' = t}^{T}R_{t'} \given S_t = s, U_t = u], \quad \text{for every $(s, u) \in \calS \times \calU$,}
\end{align}
where $\EE^\pi$ denotes the expectation with respect to the distribution whose action at decision time $t$ follows $\pi_t$ for any $t \geq 1$. We consider the batch setting, where the observed action $A_t$ is generated by some behavior policy $\tilde\pi^b_t$ depending on both $S_t$ and $U_t$ for $1 \leq t \leq T$. We aim to use the batch data to estimate the \textit{policy value} of a target policy $\pi$, which is defined as 
\begin{align}\label{def: integrated value fun}
	\calV(\pi) = \EE[\Vpi_1(S_1, U_1)],
\end{align}
where $\EE$ denotes the expectation with respect to the behavior policy. Due to the unobserved $U_t$, standard OPE methods that rely on the Markovianity will give bias estimations. In the following, we introduce an identification result for estimating the policy value using some proxy variables.

\emph{Notations}: For two sequences $\{\varpi(n)\}_{n\geq1}$ and $\{\theta(n)\}_{n\geq1}$, the notation $\varpi(n) \gtrsim  \theta(n)$ (resp. $\varpi(n) \lesssim \theta(n)$) means that there exists a sufficiently large constant (resp. small) constant $c_1>0$ (resp. $c_2>0$) such that $\varpi(N) \geq c_1 \theta(N)$ (resp. $\varpi(n) \leq c_2 \theta(n)$). We use $\varpi(n) \asymp \theta(n)$ when $\varpi(n) \gtrsim \theta(n)$ and $\varpi(n) \lesssim \theta(n)$. For any random variable $X$, we use $\mL^q\{X\}$ to denote the class of all measurable functions with finite $q$-th moments for $1 \leq q \leq \infty$. Then the $\mL^q$-norm is denoted by $\norm{\bullet}_{\mL^q\{X\}}$. When there is no confusion in the underlying distribution, we also write it as $\norm{\bullet}_{\mL^q}$ or $\norm{\bullet}_{q}$.  In particular, $\norm{\bullet}_\infty$ denotes the sup-norm. 
In addition, we use Big $O$ and small $o$ as the convention.

\section{Identification Results}\label{sec:identify}

Inspired by the proximal causal inference recently proposed by \cite{tchetgen2020introduction},
we develop a non-parametric identification result for estimating $\calV(\pi)$, which is similar to those by \cite{bennett2021proximal} and \cite{shi2021minimax}. Assume that we can additionally observe the so-called reward-inducing proxy variables $W_t$ that are only
related to the action $A_t$ through $(S_t, U_t)$ and action-inducing proxy variables $Z_t$ that are only related to the reward $R_t$
through $(S_t,U_t)$ at each decision time $t$. See Figure \ref{fig:POMDP} for a directed acyclic graph (DAG) to illustrate their relationships and a time series data example in \cite{miao2018confounding}. For another example, the action-inducing proxy variables $Z_t$ can be defined as the observed history before time $t$, then $Z_t$ and related arrows in Figure \ref{fig:POMDP} can be removed. Detailed assumptions and discussion are given in Appendix \ref{sec:additional assumption}. %
Denote the spaces of $\{Z_t\}_{t=1}^T$ and $\{W_t\}_{t=1}^T$ 
by $\calW$ and $\calZ$ respectively.

\begin{figure}[H]
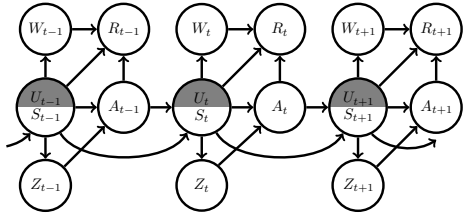

\centering
\resizebox{0.45\textwidth}{!}{
  \tikzfig{fig/pomdp}
  }
  \caption{A representative DAG to illustrate the variables involved in the confounded POMDP.} %
  \label{fig:POMDP}
\end{figure}
Since the states $\left\{ U_t \right\}_{t=1}^T$ are unmeasured, we
cannot estimate the value function by the celebrated Bellman equation. However, with the help of confounding proxies $\left\{ W_t,Z_t \right\}_{t=1}^T$, the value of a target policy $\pi$ can be non-parametrically identified using observed variables under proper assumptions. 

To proceed, we define a class of $V$-bridge functions (or $V$-bridges for short) $\{\vpi_t\}_{t=1}^T$ defined over $\calW \times \calS$ such that for every $(s, u) \in \calS \times \calU$ and $t \geq 1$,
\begin{equation}
\label{eq:V-bridge}
\textstyle\EE\left[\vpi_t(W_t, S_t) \given U_t=u, S_t=s\right] = \EE^{\pi}\left[\sum_{t'=t}^T R_{t'} \Given U_t=u, S_t=s\right].
\end{equation}
If such $V$-bridges exist, then we obtain the following identification result for the policy value
in \eqref{def: integrated value fun}. 
\begin{proposition}[Identification] \label{prop:identification}
If there exist $\{\vpi_t\}_{t=1}^T$ that satisfy \eqref{eq:V-bridge}, then the value of target policy $\pi$ can be
identified by
$
  \mV(\pi) = \EE[\vpi_1(W_1, S_1)].
$ 

\end{proposition}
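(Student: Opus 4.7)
The plan is to chain together two elementary steps: the tower property of conditional expectation applied to the outer $\EE$, and then the defining relation \eqref{eq:V-bridge} of the $V$-bridge at time $t=1$. Concretely, I will start from the definition $\mV(\pi) = \EE[V^\pi_1(S_1, U_1)]$ in \eqref{def: integrated value fun} and show that $\EE[v^\pi_1(W_1, S_1)]$ equals this same quantity by conditioning on $(S_1, U_1)$.

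First I would write
\begin{align*}
\EE[v^\pi_1(W_1, S_1)] \;=\; \EE\bigl[\,\EE[v^\pi_1(W_1, S_1) \mid S_1, U_1]\,\bigr],
\end{align*}
which is just the tower property (the outer $\EE$ being under the data-generating/behavior distribution). Next, I would substitute the inner conditional expectation using \eqref{eq:V-bridge} at $t=1$, which gives
\begin{align*}
\EE[v^\pi_1(W_1, S_1) \mid S_1, U_1] \;=\; \EE^\pi\!\left[\sum_{t'=1}^T R_{t'} \,\Big|\, S_1, U_1\right] \;=\; V^\pi_1(S_1, U_1),
\end{align*}
by the definition \eqref{eqn: Q-function}. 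Taking the outer expectation then yields $\EE[v^\pi_1(W_1, S_1)] = \EE[V^\pi_1(S_1, U_1)] = \mV(\pi)$.

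The only subtlety, and the place I would be careful, is the two different measures at play: the inner conditional expectation on the left is under the behavior distribution (since the proxy $W_1$ is an observed variable), while the inner conditional expectation on the right is under the target-policy distribution $\EE^\pi$. The bridge equation \eqref{eq:V-bridge} equates these two objects as functions of $(s,u)$, so the identity holds pointwise and can be integrated against any distribution of $(S_1, U_1)$; in particular, the marginal distribution of $(S_1, U_1)$ is the same under the behavior and target policies (initial states are exogenous of the policy), so the outer expectations agree. This hinges on the standard proxy-variable assumptions deferred to Appendix \ref{sec:additional assumption}, in particular that $W_1 \ind (A_1, \text{future}) \mid (S_1, U_1)$, which is what makes the inner conditional expectation on the left well-defined without reference to a policy. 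Beyond this bookkeeping, the proof is a one-line application of the tower property, so I do not anticipate any technical obstacle.
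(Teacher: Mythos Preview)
Your proof is correct and is exactly the natural argument: tower on $(S_1,U_1)$, then apply \eqref{eq:V-bridge} at $t=1$ together with \eqref{eqn: Q-function} and \eqref{def: integrated value fun}. The paper in fact does not spell out a separate proof of this proposition (it is treated as immediate from the definitions), so there is nothing further to compare; your handling of the behavior-versus-target measure bookkeeping is the only point worth noting, and you addressed it correctly.
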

Note that $V$-bridges $\{\vpi_t\}_{t=1}^T$ that satisfy \eqref{eq:V-bridge} are not necessarily unique, but 
we can uniquely identify
$\mV(\pi)$ based on any of them. Next, we provide a theoretical guarantee for the existence of $V$-bridges
$\{\vpi_t\}_{t=1}^T$ in terms of a sequence of linear integral equations.
\begin{theorem}\label{thm:existence}
  For a POMDP model of which variables satisfy the relationships illustrated in Figure \ref{fig:POMDP} and some regularity conditions given in Appendix \ref{sec:additional assumption}, there always exist
  $V$-bridges $\{\vpi_t\}_{t=1}^T$ %
  satisfying \eqref{eq:V-bridge}.
  With $\vpi_{T+1}=0$, a particular sequence of $V$-bridges $\{\vpi_t\}_{t=1}^T$ can be obtained by solving the following linear integral equations:
\begin{equation}
  \label{eq:nonpara-identification}
  \EE \left\{\qpi_t(W_t,S_t,A_t) - R_t-\vpi_{t+1}(W_{t+1},S_{t+1})\given Z_t,S_t,A_t \right\} =0,
\end{equation}
where $\{\qpi_t\}_{t=1}^T$ are $Q$-bridges defined over $\calW \times \calS \times \calA$ such that
\begin{equation}
\label{eq:Q-bridge}
\textstyle\EE\left[\qpi_t(W_t,S_t, A_t) \given U_t=u, S_t=s, A_t = a \right] = \EE^{\pi}\left[\sum_{t'=t}^T R_{t'} \Given U_t=u, S_t=s, A_t = a\right],
\end{equation}
for every $(s, u, a) \in \calS \times \calU \times \calA$ and $t \geq 1$, and $\vpi_t(w,s) = \sum_{a\in\mA}\pi_t(a\given s) \qpi_t(w,s,a)$.
Clearly $Q$-bridges $\{\qpi_t\}_{t=1}^T$ also exist.
\end{theorem}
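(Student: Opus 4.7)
The plan is a backward induction on $t$ from $T+1$ down to $1$, producing $Q$-bridges and $V$-bridges simultaneously. The base case $\vpi_{T+1}\equiv 0$ vacuously satisfies \eqref{eq:V-bridge}. For the inductive step, given that $\vpi_{t+1}$ is a $V$-bridge, I will (i) produce a $\qpi_t$ that solves the linear integral equation \eqref{eq:nonpara-identification} and verify it is a $Q$-bridge in the sense of \eqref{eq:Q-bridge}; and (ii) show that the integrated version $\vpi_t(w,s):=\sum_{a\in\calA}\pi_t(a\given s)\qpi_t(w,s,a)$ solves \eqref{eq:V-bridge}.

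Step (i) has two substeps. For the reduction from \eqref{eq:nonpara-identification} to \eqref{eq:Q-bridge}, I will use the proxy conditional independencies encoded in Figure~\ref{fig:POMDP} (in particular $W_t\indep Z_t\mid(S_t,U_t,A_t)$ and $(R_t,W_{t+1},S_{t+1})\indep Z_t\mid(S_t,U_t,A_t)$, stated formally in Appendix~\ref{sec:additional assumption}) together with the tower property to rewrite \eqref{eq:nonpara-identification} as $\EE\{h(U_t,S_t,A_t)\mid Z_t,S_t,A_t\}=0$, where
\begin{equation*}
h(u,s,a):=\EE[\qpi_t(W_t,S_t,A_t)-R_t-\vpi_{t+1}(W_{t+1},S_{t+1})\mid U_t=u,S_t=s,A_t=a].
\end{equation*}
The completeness condition on $P(Z_t\mid U_t,S_t,A_t)$ (also in Appendix~\ref{sec:additional assumption}) then forces $h\equiv 0$, and substituting the inductive hypothesis together with a further tower-property argument (using that the one-step transition kernel does not depend on the policy) yields \eqref{eq:Q-bridge} at time $t$. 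For the existence of $\qpi_t$ solving \eqref{eq:nonpara-identification}, this is an infinite-dimensional linear inverse problem for the conditional expectation operator $f\mapsto\EE[f(W_t,S_t,A_t)\mid Z_t,S_t,A_t]$; I will invoke the classical Picard theorem together with compactness and range conditions on this operator from Appendix~\ref{sec:additional assumption}.

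For step (ii), the conditional independence $W_t\indep A_t\mid(S_t,U_t)$ gives
\begin{align*}
\EE[\vpi_t(W_t,S_t)\mid U_t=u,S_t=s]
&= \textstyle\sum_{a}\pi_t(a\given s)\,\EE[\qpi_t(W_t,S_t,A_t)\mid U_t=u,S_t=s,A_t=a]\\
&= \textstyle\sum_{a}\pi_t(a\given s)\,\EE^\pi\Bigl[\sum_{t'=t}^T R_{t'}\Given U_t=u,S_t=s,A_t=a\Bigr]\\
&= \EE^\pi\Bigl[\textstyle\sum_{t'=t}^T R_{t'}\Given U_t=u,S_t=s\Bigr],
\end{align*}
closing the induction. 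The main obstacle is the existence claim in step (i): the conditional expectation operator in \eqref{eq:nonpara-identification} is typically non-injective with non-closed range, so solvability is not automatic, and the real content of the proof lies in checking that the regularity assumptions in Appendix~\ref{sec:additional assumption} put the right-hand side into the closure of the range of this operator. Once existence and the completeness condition are in hand, the manipulations leading to \eqref{eq:Q-bridge} and then to \eqref{eq:V-bridge} are routine.
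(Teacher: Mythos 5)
Your proposal is correct and follows essentially the same route as the paper's proof: the reduction of \eqref{eq:nonpara-identification} to a statement on $\EE[h(U_t,S_t,A_t)\mid Z_t,S_t,A_t]=0$ via the action-proxy conditional independencies, the use of completeness of $P(U_t\mid Z_t,S_t,A_t)$ to force $h\equiv 0$ (the paper's Bellman-like equation \eqref{eq:bellman-like}), the backward induction using Markovianity and the reward-proxy independence to verify \eqref{eq:Q-bridge} and \eqref{eq:V-bridge}, and Picard's theorem together with the compactness/range conditions in Assumptions \ref{ass:completeness} and \ref{ass:regularity} for solvability. The only organizational difference is that you interleave the existence step into each stage of the induction and pass explicitly through \eqref{eq:Q-bridge} to reach \eqref{eq:V-bridge}, whereas the paper separates the verification (Part I) from the existence argument (Part II) and verifies \eqref{eq:V-bridge} directly, noting the $Q$-bridge version follows by restricting to $A_t=a$; the mathematical content is identical.
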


Theorem \ref{thm:existence} guarantees the existence of both $V$-bridges and $Q$-bridges, and also provides a natural procedure  \eqref{eq:nonpara-identification} to find $\{\vpi_t\}_{t=1}^T$
and eventually estimate the policy value $\calV(\pi)$. 
Then based on Proposition \ref{prop:identification} and Theorem \ref{thm:existence}, we can perform OPE via Algorithm \ref{alg:identification} in the population level. Specifically at each  step we will solve \eqref{eq:nonpara-identification} via a non-parametric model, which is a NPIV problem.

\noindent
\begin{minipage}{\linewidth}
\begin{algorithm}[H] \label{alg:identification}
\SetAlgoLined
\textbf{Input:} $\{(S_t,W_t,Z_t,A_t,R_t)\}_{t=1}^T$, a target policy $\pi = \{\pi_t\}_{t=1}^T$. \\
Let $\vpi_{T+1}=0$.\\
Repeat for $t=T,\dots,1$:\\
\Indp
Solve $\vpi_t$ and $\qpi_t$ by
  $\EE \left\{\qpi_t(W_t,S_t,A_t) - R_t-\vpi_{t+1}(W_{t+1},S_{t+1})\given Z_t,S_t,A_t \right\} =0$ with $\vpi_t(W_t,S_t) \triangleq \sum_{a\in\mA}\pi_t(a\given S_t)
\qpi_t(W_t,S_t,a)$.\\

\Indm
\textbf{Output:} $\mV(\pi) = \EE[\vpi_1(W_1, S_1)]$.
\caption{Identification of $\mV(\pi)$}
\end{algorithm}
\end{minipage}

\section{Estimation} \label{sec:estimation}

In this section, we discuss how to estimate $\calV(\pi)$ using batch data based on results given in Theorem \ref{thm:existence} and Algorithm \ref{alg:identification}. Let a pre-collected training dataset be $
\calD_n = \{ \left(S_{t,i}, W_{t,i}, Z_{t,i}, A_{t,i}, R_{t,i}\right)_{t=1}^T: i=1,\ldots, n\}$, which
consists of $n$ i.i.d. copies
of the observable trajectory $\left(S_{t}, W_{t}, Z_{t}, A_{t}, R_{t}\right)_{t=1}^T$ of a confounded POMDP.
Following Algorithm \ref{alg:identification}, we develop a FQE-type approach where 
we propose to solve a min-max problem for estimating $\vpi_t$  at the $t$-th step using the idea of \cite{dikkala2020minimax}, and then apply Proposition \ref{prop:identification} for OPE.

For convenience, we first rewrite the linear integral equations \eqref{eq:nonpara-identification} for solving $V$-bridges in terms of operators.
Define an operator $\widetilde\mP_t: \mL^2\{\mR \times\mW \times\mS\}\rightarrow \mL^2\{\mZ\times\mS\times\mA\}$ such that 
$[\widetilde\mP_tg](Z_t,S_t,A_t) = \EE [ g(R_t,W_{t+1},S_{t+1}) \given Z_t,S_t,A_t]$
for any $g\in\mL^2\{\mR\times\mW\times\mS\}$.
Define another operator $\overline{\mP}_t:\mL^2\{\mW\times\mS\times\mA\} \rightarrow\mL^2\{\mZ\times\mS\times\mA\}$ 
such that for any $h\in\mL^2\{\mW\times\mS\times\mA\}$,
$
  [\overline{\mP_t}h](Z_t,S_t,A_t) = \EE \left[ h(W_t,S_t,A_t) \given Z_t,S_t,A_t \right].
  $
Motivated by \eqref{eq:nonpara-identification}, %
we define the \emph{V-bridge transition operator}
$\Ppi_t: \mL^2\{\mR\times\mW\times\mS\}\rightarrow \mL^2\{\mW\times\mS\}$
such that
\begin{align*}
    \Ppi_tg = \left\langle
  \pi_t,\mP_tg \right\rangle, \  \text{ where $\mP_tg = \overline{\mP_t}^{-1}\widetilde{\mP}_tg$ for all
$g\in\mL^2\{\mR\times\mW\times\mS\}$.}
\end{align*}
 In particular, $\langle\pi_t(\cdot\given S_t), [\mP_t g](W_t,S_t,\cdot)\rangle \triangleq \sum_{a\in\mA} \pi_t(a\given S_t)[\mP_t g](W_t,S_t,a)$, 
and $\widetilde\mP_tg$ is invertible by $\overline{\mP_t}$. The invertibility is ensured by Assumption \ref{ass:completeness2} in Appendix \ref{sec:additional assumption}.

Then by the definition of $V$-bridges and \eqref{eq:nonpara-identification},  we can identify $\{\vpi_t\}_{t=1}^T$ via solving
\begin{equation}\label{eqn: linear integral equation}
\vpi_t = \Ppi_t(\vpi_{t+1}+R_t), \quad \text{for $t \geq 1$.}
\end{equation}
To find the estimated V-bridges $\{\hvpi_t\}_{t=1}^T$,
it suffices to estimate $\Ppi_t$. Note that one can regard \eqref{eqn: linear integral equation} as a series of conditional moment model restrictions
and we propose to solve them via a sequential NPIV estimation. In particular, at the $t$-th step, we adopt the min-max estimation method proposed by \citet{dikkala2020minimax} to estimate $\Ppi_t$
non-parametrically as follows: $\hPpi_tg = \left\langle \pi_t, \widehat{\mP}_tg \right\rangle$, where 
\begin{equation}\label{eqn: minmax estimation}
  \widehat{\mP}_t g / (T-t+1) = \argmin_{h\in\mH^{(t)}} \Big[\sup_{f\in\mF^{(t)}} \Big\{\Psi_{t,n}(h, f,g) - \lambda(\|f\|^2_{\mF^{(t)}} + \frac{M}{\delta^2}\norm{f}_n^2) \Big\} + \lambda\mu \|h\|^2_{\mH^{(t)}} \Big],
\end{equation}
where $\norm{f}_n^2 = n^{-1}\sum_{i=1}^n f^2(Z_{t,i},S_{t,i},A_{t,i})$ for $f \in \mF^{(t)}$.
$\mH^{(t)}$ on $\mW\times\mS\times\mA$ and $\mF^{(t)}$ on $\mZ\times\mS\times\mA$ are two user-defined function spaces  endowed with norms $\norm{\bullet}_{\mH^{(t)}}$ and $\norm{\bullet}_{\mF^{(t)}}$ respectively, $\lambda,\mu, M,\delta >0$ are tuning parameters, and 
\begin{equation*}
  \Psi_{t,n}(h,f,g) = n^{-1}\textstyle\sum_{i=1}^n [h(W_{t,i},S_{t,i},A_{t,i}) - (T-t+1)^{-1} g(R_{t,i},W_{t+1,i},S_{t+1,i})]f(Z_{t,i},S_{t,i},A_{t,i}),
\end{equation*}
where $g(R_t,W_{t+1},S_{t+1}) = R_t + \bar g(W_{t+1},S_{t+1})$ for some $\bar g\in\mG^{(t+1)}$ on $\mW\times\mS$, endowed with norm $\norm{\bullet}_{\mG^{(t+1)}}$. 

The rational behind \eqref{eqn: minmax estimation} is that when $\lambda, \lambda\mu \rightarrow 0$ and $\lambda M/\delta^2 \asymp 1$, the following two population-version min-max optimization problems 
\begin{align*}
    \min_{h\in\mH^{(t)}} \sup_{f\in\mF^{(t)}}&\EE [h(W_t,S_t,A_t) - (T-t+1)^{-1}g(R_t,W_{t+1},S_{t+1})]f(Z_t,S_t,A_t) - \textstyle\frac{1}{2}f^2(Z_t,S_t,A_t), \\
    \min_{h\in\mH^{(t)}} &\EE\{\EE[h(W_t,S_t,A_t) - (T-t+1)^{-1}g(R_t,W_{t+1},S_{t+1})\given Z_t,S_t,A_t]\}^2,
\end{align*}
have the same solution $h$ when the space $\mF^{(t)}$ of testing functions is rich enough. Note that $(T-t+1)^{-1}$ used above and in \eqref{eqn: minmax estimation} are for scaling purpose.

After $T$ steps, we output our estimator for the policy value based on the empirical counterpart of Proposition \ref{prop:identification}. Our FQE-type algorithm is summarized in Algorithms \ref{alg: one-step NPIV} and \ref{alg:DetailedFQE} in Appendix \ref{sec: simulation appendix}. %

\section{Theoretical Results}
\label{sec:theory}
In this section, we establish the finite-sample bounds for the $\mL^2$ error of estimating $V$-bridge $\vpi_1$  and the error of OPE, in terms of the sample size, length of horizon and two (local) measures of ill-posedness. Our bounds also rely on the critical radii of certain
spaces related to the user-defined function spaces $\mH^{(t)}$ and $\mF^{(t)}$ in \eqref{eqn: minmax estimation}, and also $\mG^{(t)}$ 
of $V$-bridge functions.

\textbf{1. Technical preliminaries.} Before presenting our main results, we first
introduce some concepts from the empirical process theory \citep{wainwright2019high}.%
\begin{definition}[Local Rademacher Complexity]
  Given any real-valued function class $\mF$ defined over a random vector $X$ and any radius $\delta > 0$, the
  local Rademacher complexity is given by
\begin{equation}
\label{eq:Local Rademacher Complexity}
\mR_n(\mF,\delta) = \EE_{\epsilon,X} [ \textstyle\sup_{f\in\mF:\norm{f}_n\leq\delta} | n^{-1}\textstyle\sum_{i=1}^n \epsilon_i f(X_i) | ],
\end{equation}
where $\{ X_i \}_{i=1}^n$ are i.i.d. copies of $X$ and  $\left\{ \epsilon_i \right\}_{i=1}^n$ are i.i.d. Rademacher random
variables. 
\end{definition}
By bounding the local Rademacher complexity, which measures the complexity of
the functional class $\mF$ locally in a neighborhood of the ground truth, we
can control the error rate of the proposed $V$-bridge estimator in each step.
A crucial parameter for local Rademacher complexity of a function class $\mF$
is called \emph{critical radius}.
\begin{definition}[Critical Radius]
Assume that $\mF$ is a star-shaped function class, i.e. $\alpha f\in\mF$ for any $f\in\mF$ and
scalar $\alpha\in[0,1]$, and also that $\mF$ is
$b$-uniformly bounded, i.e., $\norm{f}_{\infty}\leq b <\infty$, $\forall f\in\mF$.
The critical radius of $\mF$, denoted by $\delta_n$, is the solution to the inequality $\mR_n(\mF,\delta) \leq \delta^2/b.$
\end{definition}

\emph{Additional Notations}: 
We assume that the test functions $f$ belong to a star shaped, symmetric space
$\mF^{(t)}\subseteq\mL^2(\mZ\times\mS\times\mA)$
endowed with norm $\norm{\cdot}_{\mF^{(t)}}$.
For brevity of notation, hereafter we suppress the time-step indicator $(t)$ in the context
unless necessary.
 For a function space $\mF$, we define $\alpha\mF=\{\alpha f: f\in\mF\}$, for some $\alpha\in\RR$. Define $\mF_B = \{f\in\mF:\norm{f}_{\mF}^2\leq B\}$, for any $B>0$.
Define the projected root mean squared error $\norm{\proj_t f}_2 = \sqrt{\EE \{\EE [f(X)\given Z_t,S_t,A_t]\}^2}$, for any squared integrable $f$ with respect to the conditional distribution of $X$ given $(Z_t,S_t,A_t)$.

\emph{Standard  (Local) Measures of ill-posedness}: Let $\bar{\tau}_{1} =
\sup_{g\in\mG^{(1)}}\norm{g(W_1,S_1)}_2/\norm{\EE[g(W_1,S_1)\given Z_1,S_1]}_2$ be the measure of ill-posedness for $\mG^{(1)}(\mW_1\times\mS_1)$ projected on $\mZ_1\times\mS_1$. 
Let $\tau_t =
\sup_{h\in\mH^{(t)}}\norm{h(W_t,S_t,A_t)}_2/\norm{\proj_th(W_t,S_t,A_t)}_2$ be the
standard measure of ill-posedness for $\mH^{(t)}(\mW\times\mS\times\mA)$
projected on $\mZ\times\mS\times\mA$. It can be seen that $\bar{\tau}_{1}, \tau_t \geq 1$ for $t \geq 1$. Indeed we only require measuring  $\bar{\tau}_{1}$ and $\tau_t$ \textit{locally}. See more details in Appendix \ref{sec:proofs}.

\textbf{2. Results.}
We first give Assumption \ref{ass:technical} used to develop our theoretical results below.
\begin{assumption}
  \label{ass:technical}
  For each $t=1,\dots,T$,
\begin{enumerate}[leftmargin=.2in]
\item[(1)] \label{ass:closeness} Closeness. For any $g\in\mG^{(t+1)}$, $\mP_t(g+R_t)\in\mH^{(t)}$; For
  any $h\in\mH^{(t)}$, $\left\langle \pi_t,h \right\rangle\in\mG^{(t)}$.
\item[(2)] For any $h\in(T-t)\mH^{(t+1)}$, %
we have $\norm{\mP_t \left( \frac{R_t+\left\langle \pi_{t+1}, h \right\rangle}{T-t+1} \right)}_{\mH^{(t)}}^2\leq \norm{\frac{h}{T-t}}_{\mH^{(t+1)}}^2$.
\item[(3)] There exists a constant $C_{\mG}>0$ such that 
$\norm{\left\langle \pi_t,h \right\rangle}_{\mG^{(t)}}^2 \leq C_{\mG} \norm{h}_{\mH^{(t)}}^2$, 
for $h\in\mH^{(t)}$.

\item[(4)] $\qpi_t\in(T-t+1)\mH^{(t)}(\mW,\mS,\mA)$ and $\norm{\qpi_T}_{\mH^{(T)}}^2\leq M_{\mH}$, where $M_{\mH} > 0$ is a constant.
\item[(5)] Testing function class $\mF^{(t)}$ is sufficiently rich such that
  there exists $L>0$, $\norm{f^{*}-\proj_th_t}_2 \leq \eta_n^{(t)}$, where
  $f^{*}\in\argmin_{f\in\mF_{L^2\nmH{h_t}^2}^{(t)} } \norm{f-\proj_th_t}_2$, 
  for all $h_t\in\mH^{(t)}$.
\item[(6)] Behavior policies: there exists a constant $b_\pi$ such that $\pi_t^b(a \given s) \triangleq \EE[\tilde\pi_t^b (a\given U_t,S_t)\given S_t=s]\geq b_\pi>0$ for all $(s, a) \in \calS \times \calA$.
\end{enumerate}
\end{assumption}
Assumption \ref{ass:technical}~(1) is similar to Bellman completeness, which has been widely used in RL without unobserved states \citep[e.g.,][]{antos_learning_2008}. Note that both $\mG^{(t)}$ and $\mH^{(t)}$ can be chosen as infinite-dimensional spaces, e.g., RKHSs. Hence this assumption is relatively mild. Assumption \ref{ass:technical}~(2) requires the operator $\mP_t$ to be bounded, which can be ensured under some continuity conditions on  transition kernels \citep{kress1989linear}.  Assumption \ref{ass:technical}~(3) is a technical condition for controlling the complexity of $\mG$ by $\mH$. Assumption \ref{ass:technical}~(4) essentially assumes that we can model $\qpi$ (and $\vpi$) correctly at each $t$-step, which is again mild as $\mH^{(t)}$
for $t\geq 1$ can all be chosen as infinite-dimensional spaces. This assumption is also called realizability of value functions, which is commonly seen in the literature of RL \citep[e.g.,][]{antos_learning_2008}. Assumption \ref{ass:technical}~(5) is imposed to ensure that the space of testing functions $\mF$ is large enough so that we are able to capture the conditional expectation operator in each min-max estimation \eqref{eqn: minmax estimation}. Assumption \ref{ass:technical}~(6) basically requires a full coverage of our batch data generating process induced by the behavior policy, which is widely used in OPE \citep{precup2000eligibility,antos_learning_2008}. Next, we provide a key decomposition of the $\mL^2$ error for $V$-bridge estimation.

\begin{theorem}[Error decomposition]
\label{thm:decomposition}
  Under Assumption \ref{ass:technical} (1) and (6),  we can decompose
  the $\mL^2$ error of the estimated $V$-bridge by
\begin{align*}
  \norm{\vpi_1-\hvpi_1}_2 \leq \bar{\tau}_{1}\textstyle\sum_{t=1}^T \{\Pi_{t'=1}^{t}C_{t',t'-1}^{(t)}\}
  \tau_t\norm{\pi_t/\pi_t^b}_{\infty}\norm{\proj_t(\hmP_t-\mP_t)(\hvpi_{t+1}+R_t)}_2,
\end{align*}
where %
the measures of one-step transition ill-posedness $C_{1,0}^{(t)}\triangleq 1$ and
$C_{t',t'-1}^{(t)}, 2\leq t' \leq t \leq T$ are defined after Corollary \ref{cor:RKHS radii}. 
\end{theorem}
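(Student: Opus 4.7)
My plan is to propagate the single-step estimation error $(\hmP_t-\mP_t)(\hvpi_{t+1}+R_t)$, measured in the projected norm at time $t$, back through the recursion to $\hvpi_1-\vpi_1$, via three moves: a telescoping identity, one application of $\bar\tau_1$ to pass from an unprojected to a projected norm at time $1$, and a backward unrolling of the exact $V$-bridge transition operators $\Ppi_{t'}$ that generates the product $\prod_{t'=1}^{t} C^{(t)}_{t',t'-1}$.

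First I derive the one-step identity $\hvpi_t-\vpi_t = (\hPpi_t-\Ppi_t)(\hvpi_{t+1}+R_t) + \Ppi_t(\hvpi_{t+1}-\vpi_{t+1})$ directly from $\vpi_t=\Ppi_t(\vpi_{t+1}+R_t)$ and its plug-in analogue, which unrolls (using $\vpi_{T+1}=\hvpi_{T+1}=0$) into
$$\hvpi_1 - \vpi_1 \;=\; \sum_{t=1}^T \Ppi_1\Ppi_2\cdots\Ppi_{t-1}\bigl[(\hPpi_t-\Ppi_t)(\hvpi_{t+1}+R_t)\bigr].$$
Taking $\mL^2$-norms and using the triangle inequality reduces the task to bounding each of the $T$ summands. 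Assumption~\ref{ass:technical}(1) (closeness) guarantees that the composition $\Ppi_1\cdots\Ppi_{t-1}[(\hPpi_t-\Ppi_t)(\hvpi_{t+1}+R_t)]$ remains in $\mG^{(1)}$, so the definition of $\bar\tau_1$ gives $\norm{\Ppi_1\cdots\Ppi_{t-1}g_t}_2 \leq \bar\tau_1\,\norm{\EE[\Ppi_1\cdots\Ppi_{t-1}g_t\mid Z_1,S_1]}_2$ with $g_t := (\hPpi_t-\Ppi_t)(\hvpi_{t+1}+R_t)$.

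Next I unroll the composition backwards one step at a time. The key device is to define $C^{(t)}_{t',t'-1}$ (for $2\le t'\le t$) as the smallest constant such that $\norm{\EE[\Ppi_{t'-1} h\mid Z_{t'-1},S_{t'-1}]}_2 \leq C^{(t)}_{t',t'-1}\,\norm{\EE[h\mid Z_{t'},S_{t'}]}_2$ for $h$ ranging over the relevant local class at step $t'$, with the convention $C^{(t)}_{1,0}\triangleq 1$. Iterating this inequality $t-1$ times reduces the projected-at-step-$1$ norm of $\Ppi_1\cdots\Ppi_{t-1}g_t$ to the projected-at-step-$t$ norm $\norm{\EE[g_t\mid Z_t,S_t]}_2$, accumulating the product $\prod_{t'=1}^{t} C^{(t)}_{t',t'-1}$. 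Finally, writing $g_t = \langle\pi_t,(\hmP_t-\mP_t)(\hvpi_{t+1}+R_t)\rangle$ as a $\pi_t$-weighted sum over actions and changing measure to the behavior policy (using Assumption~\ref{ass:technical}(6) so that $\pi_t/\pi_t^b$ is well-defined) introduces the factor $\norm{\pi_t/\pi_t^b}_\infty$, and the standard ill-posedness $\tau_t$ for $\mH^{(t)}$ yields the innermost factor $\tau_t\norm{\proj_t(\hmP_t-\mP_t)(\hvpi_{t+1}+R_t)}_2$, completing the stated bound.

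\textbf{Main obstacle.} The delicate step is the backward unrolling. A naive route that peels off each $\Ppi_{t'}$ using the individual ill-posedness $\tau_{t'}$ at every intermediate step would generate a product $\prod_{t'<t}\tau_{t'}$ that is catastrophic over a long horizon $T$, destroying any hope of a polynomial-in-$T$ rate. The one-step transition ill-posedness $C^{(t)}_{t',t'-1}$ is designed precisely to bundle the effect of one extra application of $\Ppi_{t'-1}$ with the outer conditional expectation into a single constant that is typically much smaller than $\tau_{t'-1}$, by exploiting the defining identity $\overline{\mP}_{t'-1}\mP_{t'-1}=\widetilde{\mP}_{t'-1}$. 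Verifying that the one-step inequality actually holds on the right local classes, and that those classes are stable under the compositions mandated by the telescoping identity, is the main technical content of the proof.
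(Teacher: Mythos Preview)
Your proposal is correct and follows essentially the same route as the paper: the same telescoping decomposition $\hvpi_1-\vpi_1=\sum_{t=1}^T \Ppi_{1:t-1}g_t$ with $g_t=(\hPpi_t-\Ppi_t)(\hvpi_{t+1}+R_t)$, one application of $\bar\tau_1$, backward unrolling via the identity $\EE[\Ppi_{t'}g\mid Z_{t'},S_{t'}]=\EE^{\pi_{t'}}[g(W_{t'+1},S_{t'+1})\mid Z_{t'},S_{t'}]$ to accumulate the product of $C^{(t)}_{t',t'-1}$, and the final chain $\|\EE[g_t\mid Z_t,S_t]\|_2\le\|g_t\|_2\le\|\pi_t/\pi_t^b\|_\infty\|(\hmP_t-\mP_t)(\cdot)\|_2\le\tau_t\|\pi_t/\pi_t^b\|_\infty\|\proj_t(\hmP_t-\mP_t)(\cdot)\|_2$. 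The only thing to make explicit is that intermediate identity, which is exactly what your reference to $\overline{\mP}_{t'-1}\mP_{t'-1}=\widetilde{\mP}_{t'-1}$ encodes.
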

Theorem \ref{thm:decomposition} shows that there are four key components for upper bounding the $\mL^2$ error of $\hvpi_1$. The first component is the probability ratio  $\norm{\pi_t/\pi_t^b}_{\infty}$, which is used to measure the distributional mismatch between the target and behavior policies. The second component is 
$\norm{\proj_t(\hmP_t-\mP_t)(\hvpi_{t+1}+R_t)}_2$, the one-step projected error of $\hmP_t$ to $\mP_t$, where $\hvpi_{t+1}$ is the estimate for $\vpi_{t+1}$ depending on the observed data after $t$-step. We remark that this is different from the analysis in the standard NPIV estimation with a directly measured outcome. Hence the
results, e.g., from \cite{dikkala2020minimax}, cannot be directly applied to bound this component. The last two components are related to the (local) measure of ill-posedness. The third component $\tau_t$ is the measure of ill-posedness for characterizing the difficulty of estimating $\qpi_t$ by \eqref{eq:nonpara-identification} using $\mH^{(t)}$ at the $t$-th step. $\{\tau_t \}_{t=1}^T$ are similar to those used in the standard NPIV estimation such as \cite{chen2011rate}, and
the effect of each $\tau_t$ on the upper bound is cumulative.  The last component {\small $\{\Pi_{t'=1}^{t}C_{t',t'-1}^{(t)}\}_{t=1}^T$} quantify the propagation effect of estimation errors in previous steps
on the last step of estimating $\vpi_1$, which is multiplicative in terms of {\small$C_{t',t'-1}^{(t)}$}. 
We call {\small$C_{t',t'-1}^{(t)}$} the measure of one-step transition ill-posedness from $t'$ to $t'-1$ related to $t$-step NPIV estimations. %
Next we provide detailed bounds for the second and last components. The discussion of the third component can be found in Appendix \ref{sec:proofs}.4. %

\emph{Component 2: one-step projected error}.
In the following, we show
that $\norm{\proj_t(\hmP_t-\mP_t)(\hvpi_{t+1}+R_t)}_2$ is bounded by the
critical radii of some spaces defined as balls $\mH_{B}^{(t)}$, $\mG_{C_G(T-t+1)M_{\mH}}^{(t+1)}$ %
in hypothesis spaces $\mH^{(t)}$,
$\mG^{(t+1)}$ respectively and a ball $\mF_{3M}^{(t)}$ %
in testing space $\mF^{(t)}$, for some fixed constants
$M, B>0$ such that functions in $\mH_{B}^{(t)}$ and $\mF_{3M}^{(t)}$ have uniformly bounded ranges in $[-1,1]$ for all $1\leq t\leq T$.
Let
\begin{align*}
  \boldsymbol{\Omega}^{(t)} &=\{ (s_t,w_t,z_t,a_t,s_{t+1},w_{t+1}) \mapsto r(h_g^{*}(w_t,s_t,a_t) - g(w_{t+1},s_{t+1}))f(z_t,s_t,a_t):\\
    &\qquad g\in\mG_{C_G(T-t+1)M_{\mH}}^{(t+1)}, f\in\mF_{3M}^{(t)}, r\in[0,1] \},\text{ and}\\
  \boldsymbol{\Xi}^{(t)} &= \{(s_t,w_t,z_t,a_t)\mapsto r[h-h_g^{*}](w_t,s_t,a_t)f^{L^2 B}(z_t,s_t,a_t):\\
                     & \qquad h\in\mH^{(t)}, h-h_g^{*}\in\mH_B^{(t)},g\in\mG_{C_G(T-t+1)M_{\mH}}^{(t+1)}, r\in[0,1]\},
\end{align*}
where $h_g^{*}\in\mH^{(t)}$ is the solution to $\EE \left[ h(W_t,S_t,A_t) -
  g(W_{t+1},S_{t+1})\given Z_t,S_t,A_t \right]=0$, and $f^{L^2 B} =
\argmin_{f\in\mF_{L^2 B}^{(t)}}\norm{f-\proj_t(h-h_g^{*})}_2$ for a given $L>0$. An upper bound for $\norm{\proj_t(\hmP_t-\mP_t)(\hvpi_{t+1}+R_t)}_2$ is given in Theorem \ref{thm:one-step}.
\begin{theorem}
  \label{thm:one-step}
Suppose that Assumption \ref{ass:technical} holds.
Let $\delta_n^{(t)}=\bar\delta_n^{(t)}+c_0\sqrt{\frac{\log(c_1T/\zeta)}{n}}$ for some universal constants $c_0,c_1>0$ where $\bar\delta_n^{(t)}$ is the upper
bound of the critical radii of $\mF_{3M}^{(t)}$, $\boldsymbol{\Omega}^{(t)} $ and $\boldsymbol{\Xi}^{(t)}$.
Assume that the approximation error in Assumption \ref{ass:technical} (5) can be bounded by $\eta_n^{(t)}\leq \delta_n^{(t)}$. 
Furthermore, letting
tuning parameters satisfy $M\lambda \asymp (\delta_n^{(t)})^2$ and 
$\mu\geq \bigO(L^2+M/B)$, 
with probability at least $1-\zeta$, we have 
\begin{align*}
  \norm{\proj_t(\hmP_t - \mP_t)(\hvpi_{t+1}+R_t)}_2 \lesssim M_{\mH}(T-t+1)^2\delta_n^{(t)} \quad \text{for all $1\leq t\leq T$}.
\end{align*}
\end{theorem}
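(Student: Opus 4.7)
The plan is to adapt the min-max oracle inequality of \cite{dikkala2020minimax} to our sequential NPIV setting, with the key new difficulty that the response $g=\hvpi_{t+1}+R_t$ in the conditional moment restriction at step $t$ is itself random and data-dependent, so the standard pointwise NPIV analysis treating $g$ as fixed does not apply. The resolution is to replace pointwise concentration of $\Psi_{t,n}$ by \emph{uniform} concentration over a suitably localized class of candidate $g$'s, which is the purpose of the enlarged product classes $\boldsymbol{\Omega}^{(t)}$ and $\boldsymbol{\Xi}^{(t)}$ appearing in the statement. First I would rescale by $1/(T-t+1)$ so that the target $h_g^{*}=\mP_tg/(T-t+1)$ and the rescaled response $g/(T-t+1)$ have uniformly bounded ranges, which by Assumption \ref{ass:technical} (2)--(4) places the problem inside the fixed balls $\mH_B^{(t)}$ and $\mG_{C_\mG(T-t+1)M_\mH}^{(t+1)}$ and allows restricting testing functions to $\mF_{3M}^{(t)}$ for an absolute constant $M$ to be chosen together with $\lambda$. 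The scalar $r\in[0,1]$ baked into $\boldsymbol{\Omega}^{(t)}$ and $\boldsymbol{\Xi}^{(t)}$ is precisely what makes these product classes star-shaped, a prerequisite for the critical-radius fixed-point argument.

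Using $\hat h_g = \hmP_tg/(T-t+1)$ and the first-order optimality in \eqref{eqn: minmax estimation} evaluated at the realized $\hvpi_{t+1}$, I would derive the basic inequality
\begin{align*}
\sup_{f\in\mF^{(t)}} \bigl\{\Psi_{t,n}(\hat h_g,f,g) - \lambda\|f\|_{\mF}^{2} - (\lambda M/\delta^{2})\|f\|_n^{2}\bigr\} + \lambda\mu\|\hat h_g\|_{\mH}^{2} \leq \lambda\mu\|h_g^{*}\|_{\mH}^{2},
\end{align*}
since the conditional moment equation is exactly satisfied at $h_g^{*}$ in population. Plugging the test direction $f^{*}\approx\proj_t(\hat h_g - h_g^{*})$ guaranteed by Assumption \ref{ass:technical} (5) into the inner maximization and using standard star-shaped localization (e.g., \cite{wainwright2019high}, Chap.~14) lets me convert the empirical inner product $\Psi_{t,n}$ to its population counterpart at a cost controlled by $\delta_n^{(t)}$, uniformly over $g$ and over the relevant increment direction $h-h_g^{*}$; the uniformity is supplied by bounding the local Rademacher complexities of $\mF_{3M}^{(t)}$, $\boldsymbol{\Omega}^{(t)}$, and $\boldsymbol{\Xi}^{(t)}$. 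The additional $\sqrt{\log(c_1 T/\zeta)/n}$ term in $\delta_n^{(t)}$ arises from a Talagrand-type concentration inequality, and a union bound over $t=1,\dots,T$ then yields the total probability $1-\zeta$.

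Choosing $M\lambda\asymp (\delta_n^{(t)})^{2}$ and $\mu\gtrsim L^{2}+M/B$ balances the quadratic-in-$f$ regularization against the linear noise terms, reducing the basic inequality to $\|\proj_t(\hat h_g - h_g^{*})\|_2^{2} \lesssim (\delta_n^{(t)})^{2}\bigl(1+\|h_g^{*}\|_{\mH}^{2}\bigr)$. Multiplying by $(T-t+1)$ undoes the rescaling and gives one factor of $(T-t+1)$ in the bound on $\|\proj_t(\hmP_t-\mP_t)g\|_2$; the remaining factor $(T-t+1)M_\mH$ enters through $\|h_g^{*}\|_\mH$, which by Assumption \ref{ass:technical} (2) applied recursively and Assumption \ref{ass:technical} (4) is of order $(T-t+1)M_\mH$. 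The main obstacle is precisely the data-dependence of $\hvpi_{t+1}$: conditioning on it breaks sample independence, while a crude uniform bound over all of $\mG^{(t+1)}$ is too loose to yield the $\delta_n^{(t)}$ rate. The delicate technical core is to localize to $\mG_{C_\mG(T-t+1)M_\mH}^{(t+1)}$ and to show, via a peeling argument, that the local Rademacher complexities of the product classes $\boldsymbol{\Omega}^{(t)}$ and $\boldsymbol{\Xi}^{(t)}$ remain of the same order as those of the underlying hypothesis and testing spaces, all while tracking the $(T-t+1)$ and $M_\mH$ dependencies carefully.
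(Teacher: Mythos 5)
Your proposal follows essentially the same route as the paper: rescale by $(T-t+1)$, observe that the data-dependent outcome $g=\hvpi_{t+1}+R_t$ forces a \emph{uniform-in-$g$} version of the Dikkala min-max oracle inequality, package the required uniformity through the enlarged product classes $\boldsymbol{\Omega}^{(t)}$ and $\boldsymbol{\Xi}^{(t)}$ (with the $r\in[0,1]$ scalar providing star-shapedness), derive the basic inequality from the min-max optimality of the estimator, and take a union bound over $t$. This is exactly what the paper does via Lemma~\ref{lem:CMM}, its uniform extension of Lemma~\ref{lem:dikkala}, applied iteratively from $t=T$ down to $t=1$.

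There is, however, one genuine gap in your justification of the localization. You write that Assumptions~\ref{ass:technical}~(2)--(4) place $\hvpi_{t+1}/(T-t)$ inside $\mG^{(t+1)}_{C_{\mG}(T-t+1)M_{\mH}}$ and that $\|h_g^{*}\|_{\mH^{(t)}}^2$ is of order $(T-t+1)M_{\mH}$ ``by Assumption~(2) applied recursively and Assumption~(4).'' But Assumption~(2) applied recursively to the \emph{true} bridges yields the constant bound $M_{\mH}$, not $(T-t+1)M_{\mH}$; and more importantly, Assumption~(4) bounds only the true $\|\qpi_t\|_{\mH^{(t)}}$, not the \emph{estimated} $\|\hqpi_{t+1}\|_{\mH^{(t+1)}}$, which is what actually appears once $g=\hvpi_{t+1}+R_t$. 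The paper closes this gap by carrying a second conclusion in Lemma~\ref{lem:CMM}, namely $\|\hhat_g\|_{\mH}^2\leq C+\|h_g^{*}\|_{\mH}^2$ uniformly over $g$, and then proving by downward induction from $t=T$ that $\|\hqpi_t/(T-t+1)\|_{\mH^{(t)}}^2\leq (T-t+2)M_{\mH}$ --- a bound growing linearly in $T-t$ because each step accrues a constant excess. This inductive \emph{estimated}-norm control is what entitles one to evaluate the uniform oracle inequality at the realized $\hvpi_{t+1}$, and it is also where the second factor of $(T-t+1)$ (hence the $(T-t+1)^2$ in the theorem) actually originates. Without this extra stability conclusion, Assumptions~(2) and~(4) alone do not justify the localization. (A minor point: the paper controls the critical radii of $\boldsymbol{\Omega}^{(t)}$, $\boldsymbol{\Xi}^{(t)}$ via covering-number and RKHS tensor-product spectral arguments rather than the peeling you allude to, but since $\bar\delta_n^{(t)}$ is taken as given in the statement this is immaterial here.)
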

Depending on the choices of $\mH^{(t)}$, $\mG^{(t+1)}$, and $\mF^{(t)}$, 
we can obtain different finite-sample error bounds of the one-step projected error for each $t$. Below we provide two examples.

\begin{corollary}
\label{cor:VC radii}
  Let $\mF^{(t)}$, $\mH^{(t)}$ and $\mG^{(t+1)}$ be VC-subgraph classes  with VC dimensions $\VV(\mF^{(t)})$, $\VV(\mH^{(t)})$ and $\VV(\mG^{(t+1)})$ respectively. Then  
  with probability at least $1-\zeta$, for all $1\leq t\leq T$,
\begin{align*}
  \norm{\proj_t(\hmP_t - \mP_t)(\hvpi_{t+1}+R_t)}_2 \lesssim (T-t+1)^{2.5}\textstyle\sqrt{\frac{\log(c_1T/\zeta)\max\{\VV(\mF^{(t)}),\VV(\mH^{(t)}),\VV(\mG^{(t+1)})\}}{n}}. %
\end{align*}
\end{corollary}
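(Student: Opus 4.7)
The strategy is to invoke Theorem~\ref{thm:one-step} directly and reduce the corollary to bounding $\bar\delta_n^{(t)}$, the maximum of the critical radii of $\mF_{3M}^{(t)}$, $\boldsymbol{\Omega}^{(t)}$, and $\boldsymbol{\Xi}^{(t)}$. Since Theorem~\ref{thm:one-step} gives $\norm{\proj_t(\hmP_t-\mP_t)(\hvpi_{t+1}+R_t)}_2 \lesssim M_{\mH}(T-t+1)^2 \delta_n^{(t)}$ with $\delta_n^{(t)}=\bar\delta_n^{(t)}+c_0\sqrt{\log(c_1 T/\zeta)/n}$, it suffices to show that $\bar\delta_n^{(t)} \lesssim \sqrt{(T-t+1) V_{\max}\log(c_1 T/\zeta)/n}$, where $V_{\max}=\max\{\VV(\mF^{(t)}),\VV(\mH^{(t)}),\VV(\mG^{(t+1)})\}$. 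The extra $\sqrt{T-t+1}$ factor here, combined with the $(T-t+1)^2$ prefactor from Theorem~\ref{thm:one-step}, is precisely what yields the $(T-t+1)^{2.5}$ rate stated in the corollary.

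For each of the three classes, I would compute the critical radius via a standard Dudley-chaining argument for VC-subgraph classes. Using the classical VC entropy bound, a uniformly bounded VC-subgraph class $\mF$ with dimension $V$ and envelope $b$ satisfies $\log N(\epsilon,\mF, L^2(P)) \lesssim V\log(b/\epsilon)$. Plugging this into Dudley's entropy integral and solving the critical-radius fixed-point equation $\mR_n(\mF,\delta) \leq \delta^2/b$ gives a critical radius of order $b\sqrt{V/n}$ up to logarithmic factors, which are absorbed into the final $\log(c_1 T/\zeta)$ term via the high-probability concentration step. For $\mF_{3M}^{(t)}$, whose envelope is an absolute constant, this immediately yields $\delta_n\lesssim\sqrt{\VV(\mF^{(t)})/n}$.

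The product classes $\boldsymbol{\Omega}^{(t)}$ and $\boldsymbol{\Xi}^{(t)}$ consist of products and differences of functions drawn from $\mG^{(t+1)}$, $\mH^{(t)}$, and $\mF^{(t)}$. The $L^2(P)$-covering number of such a product class is bounded by the product of individual covering numbers (with envelopes rescaling the discretization), so the entropy bound still scales as $V_{\max}\log(b/\epsilon)$. The quantitative point, and the principal technical hurdle, is tracking how the ball scale $C_{\mG}(T-t+1)M_{\mH}$ propagates to the envelope: elements $g\in\mG_{C_{\mG}(T-t+1)M_{\mH}}^{(t+1)}$ have $\mG^{(t+1)}$-norm of order $\sqrt{T-t+1}$, and the corresponding solutions $h_g^{*}\in\mH^{(t)}$ inherit a comparable uniform bound via Assumption~\ref{ass:technical}~(2). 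Under the uniform-boundedness convention implicit in the construction of $\mH_B^{(t)}$ and $\mF_{3M}^{(t)}$ in Theorem~\ref{thm:one-step}, this produces an envelope of order $\sqrt{T-t+1}$ for $\boldsymbol{\Omega}^{(t)}$ (and analogously for $\boldsymbol{\Xi}^{(t)}$), so the Dudley bound delivers a critical radius of order $\sqrt{(T-t+1) V_{\max}/n}$.

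Combining, $\bar\delta_n^{(t)}\lesssim \sqrt{(T-t+1) V_{\max}/n}$, which together with the concentration term $\sqrt{\log(c_1 T/\zeta)/n}$ in $\delta_n^{(t)}$ and the $M_{\mH}(T-t+1)^2$ prefactor from Theorem~\ref{thm:one-step} yields the stated bound $(T-t+1)^{2.5}\sqrt{\log(c_1 T/\zeta)V_{\max}/n}$. Uniformity across $t$ is built into the $\log(c_1 T/\zeta)$ term via the union bound already encoded in Theorem~\ref{thm:one-step}. The main obstacle, as noted, is the careful bookkeeping of envelopes and ball scales through the product structure of $\boldsymbol{\Omega}^{(t)}$ and $\boldsymbol{\Xi}^{(t)}$, plus verifying that the approximation-error requirement $\eta_n^{(t)}\leq \delta_n^{(t)}$ in Assumption~\ref{ass:technical}~(5) is consistent with the VC-subgraph choice of $\mF^{(t)}$---this holds provided $\mF^{(t)}$ is rich enough to contain the relevant conditional-expectation projections of $\mH^{(t)}$, which is guaranteed by Assumption~\ref{ass:technical}~(1).
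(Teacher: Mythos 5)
Your overall structure — reduce to Theorem~\ref{thm:one-step} and then bound $\bar\delta_n^{(t)}$ via Dudley chaining for VC classes — matches the paper's intent, and the final expression you write down is consistent with the stated corollary. However, the mechanism you invent to produce the extra $\sqrt{T-t+1}$ is not correct, and it is not what the paper does.

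You claim that the envelope of $\boldsymbol{\Omega}^{(t)}$ and $\boldsymbol{\Xi}^{(t)}$ grows like $\sqrt{T-t+1}$ because the ball radius $C_\mG(T-t+1)M_\mH$ in $\mG^{(t+1)}$ scales with $T-t+1$. This contradicts the paper's explicit scaling convention: immediately above the definitions of $\boldsymbol{\Omega}^{(t)}$ and $\boldsymbol{\Xi}^{(t)}$ the paper fixes $M,B$ so that functions in $\mH_B^{(t)}$ and $\mF_{3M}^{(t)}$ have range in $[-1,1]$, and Lemma~\ref{lem:CMM} (the one-step workhorse applied in Theorem~\ref{thm:one-step}) requires $\|g\|_\infty\le 1$, $\|h\|_\infty\le 1$, $\|f\|_\infty\le 1$ throughout. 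In the proof of Theorem~\ref{thm:one-step} the bridge is rescaled — Lemma~\ref{lem:CMM} is applied to $g=(R_t+\hvpi_{t+1})/(T-t+1)$ — precisely to keep the envelope $\bigO(1)$; the factors of $T-t+1$ that appear in the final bound come from this rescaling and from $\nmH{h_g^*}^2\lesssim (T-t+1)M_\mH$, not from a growing envelope. Consistent with this, Example~\ref{ex:VC} (the paper's own VC-critical-radius computation, which feeds Corollary~\ref{cor:VC radii}) gives $\delta_n\lesssim\sqrt{V_{\max}/n}+\sqrt{\log(1/\zeta)/n}$ with no $T$-dependence, because for a VC-subgraph class with a fixed $\bigO(1)$ envelope the $L^2(P_n)$-covering numbers do not grow with the $\mG$-ball radius $D$.

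Where the $\sqrt{T-t+1}$ factor genuinely appears in the paper is in the RKHS computations (Lemma~\ref{lem:radius by spectra} and Example~\ref{ex:RKHSpolynomial}), where the local Rademacher complexity of $\boldsymbol{\Omega}^{(t)}$ carries a $\sqrt{D}(1+\sqrt{A})$ prefactor with $D=C_\mG(T-t+2)M_\mH$ — a phenomenon specific to RKHS balls, whose Rademacher complexity scales with $\sqrt{D}$. The $(T-t+1)^{2.5}$ in Corollary~\ref{cor:VC radii} is therefore a uniform (and in the VC case slightly loose) way of stating the result across Corollaries~\ref{cor:VC radii} and~\ref{cor:RKHS radii}; the direct combination of Theorem~\ref{thm:one-step} and Example~\ref{ex:VC} already yields the tighter $(T-t+1)^2\sqrt{V_{\max}\log(c_1T/\zeta)/n}$, which is $\le$ the stated bound since $T-t+1\ge 1$. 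Your derivation reaches an expression matching the corollary, but by postulating a $\sqrt{T-t+1}$ envelope that is inconsistent with the paper's normalization and with Example~\ref{ex:VC}; the correct observation is that no $T$-dependent critical-radius term is needed here at all.
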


The definition of the VC-subgraph class can be found in, e.g., \cite{wainwright2019high}. This is a broad class. For example, if one lets each of $\mF^{(t)}$, $\mH^{(t)}$ and $\mG^{(t+1)}$ be a linear space
$\mF = \{\theta^{\top}\phi(\cdot):\theta\in\RR^d\}$ with basis functions
$\phi(\cdot)$, then $\VV(\mF)=d+1$. Then the upper bound for the one-step projected error becomes $\bigO((T-t+1)^{2.5} d/\sqrt{n})$. %

\begin{corollary}
  \label{cor:RKHS radii}
  Let $\mH^{(t)}$, $\mG^{(t+1)}$ and $\mF^{(t)}$ be reproducing kernel
  Hilbert spaces (RKHSs) equipped with kernels $K_{\mH^{(t)}}$, $K_{\mG^{(t+1)}}$ and $K_{\mF^{(t)}}$
  respectively. For a given positive definite kernel $K$, we denote its
  nonincreasing eigenvalue sequence by $\{\lambda_j^{\downarrow}(K)\}_{j=1}^{\infty}$. We consider two scenarios for $\{\lambda_j^{\downarrow}(K)\}_{j=1}^{\infty}$. 
  
  (1) \textbf{Polynomial eigen-decay}: If
  $\lambda_j^{\downarrow}(K_{\mH^{(t)}})\leq aj^{-2\alpha_{\mH}}$, $\lambda_j^{\downarrow}(K_{\mG^{(t+1)}})\leq aj^{-2\alpha_{\mG}}$ and $\lambda_j^{\downarrow}(K_{\mF^{(t)}})\leq aj^{-2\alpha_{\mF}}$ for constants $\alpha_{\mH},\alpha_{\mG},\alpha_{\mF}>1/2$ 
    and $a>0$, then under the assumptions in Theorem \ref{thm:one-step}, %
     with probability at least $1-\zeta$, for all $1\leq t \leq T$, we have 
    \begin{align*}
        \norm{\proj_t(\hmP_t - \mP_t)(\hvpi_{t+1}+R_t)}_2 \lesssim (T-t+1)^{2.5}\sqrt{\log(c_1T/\zeta)} n^{-\frac{1}{2+\max\{1/\alpha_{\mH},1/\alpha_{\mG},1/\alpha_{\mF}\}}}\log(n).
    \end{align*}
    (2) \textbf{Exponential eigen-decay}: If
    $\lambda_j^{\downarrow}(K_{\mH^{(t)}})\leq a_1 e^{-a_2 j^{\beta_{\mH}}}$,
    $\lambda_j^{\downarrow}(K_{\mG^{(t+1)}})\leq a_1 e^{-a_2 j^{\beta_{\mG}}}$ and
    $\lambda_j^{\downarrow}(K_{\mF^{(t)}})\leq a_1 e^{-a_2 j^{\beta_{\mF}}}$, for
    constants $a_1,a_2,\beta_{\mH},\beta_{\mG},\beta_{\mF}>0$, then under the assumptions in Theorem \ref{thm:one-step}, 
   with probability at least $1-\zeta$, for all $1\leq t \leq T$, we have
    \begin{align*}
        \norm{\proj_t(\hmP_t - \mP_t)(\hvpi_{t+1}+R_t)}_2 
      \lesssim (T-t+1)^{2.5}\textstyle\left\{\sqrt{\frac{(\log n)^{1/\min\{\beta_{\mH},\beta_{\mG},\beta_{\mF}\}}}{n}} +  \sqrt{\frac{\log(c_1T/\zeta)}{n}}\right\}.
    \end{align*}
\end{corollary}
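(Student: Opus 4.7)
My approach is to apply Theorem~\ref{thm:one-step}, which already bounds the one-step projected error by $\lesssim M_\mH(T-t+1)^2 \delta_n^{(t)}$, so the task reduces to bounding the common upper bound $\bar\delta_n^{(t)}$ on the critical radii of $\mF_{3M}^{(t)}$, $\boldsymbol{\Omega}^{(t)}$ and $\boldsymbol{\Xi}^{(t)}$ under each of the two eigen-decay regimes; a final union bound over $t=1,\ldots,T$ is absorbed into the $\log(c_1T/\zeta)$ factor already present in $\delta_n^{(t)}$.

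\textbf{Controlling the three critical radii.} For a ball $\mathcal{B}_B$ of radius $B$ in an RKHS with bounded kernel and nonincreasing eigenvalues $\{\lambda_j^{\downarrow}\}_{j\geq 1}$, I would invoke the classical sub-root upper bound $\mR_n(\mathcal{B}_B,\delta)\lesssim \sqrt{n^{-1}\sum_{j\geq 1}\min\{\delta^2,B\lambda_j^{\downarrow}\}}$ together with the envelope bound $\sup_{h\in\mathcal{B}_B}\|h\|_\infty\lesssim \sqrt{B}$, and solve the critical inequality $\mR_n(\mathcal{B}_B,\delta)\leq \delta^2/b$ (see, e.g., Chapter~14 of \cite{wainwright2019high}). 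Under polynomial decay $\lambda_j^{\downarrow}\leq aj^{-2\alpha}$ this calculation yields critical radius of order $\sqrt{B}\cdot n^{-1/(2+1/\alpha)}\log n$ (the $\log n$ being the standard sub-root slack), and under exponential decay $\lambda_j^{\downarrow}\leq a_1 e^{-a_2 j^\beta}$ of order $\sqrt{B(\log n)^{1/\beta}/n}$. For the product classes $\boldsymbol{\Omega}^{(t)}$ and $\boldsymbol{\Xi}^{(t)}$ I would first peel off the scalar $r\in[0,1]$ via the Ledoux--Talagrand contraction inequality, then apply the standard product inequality $\mR_n(\mathcal{A}\cdot\mathcal{B},\delta)\lesssim \|\mathcal{A}\|_\infty\mR_n(\mathcal{B},\delta)+\|\mathcal{B}\|_\infty\mR_n(\mathcal{A},\delta)$, and reduce to the single-class bounds above applied to each of $\mF^{(t)}$, $\mH^{(t)}$, $\mG^{(t+1)}$; the slowest-decaying kernel dominates, producing the exponent $\max\{1/\alpha_\mH,1/\alpha_\mG,1/\alpha_\mF\}$ (resp.\ $\min\{\beta_\mH,\beta_\mG,\beta_\mF\}$) that appears in the corollary.

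\textbf{Horizon scaling and the main obstacle.} The extra $(T-t+1)^{0.5}$ above the $(T-t+1)^2$ of Theorem~\ref{thm:one-step} is driven by the ball radius $B=C_\mG(T-t+1)M_\mH$ attached to $\mG^{(t+1)}$ in the definitions of $\boldsymbol{\Omega}^{(t)}$ and $\boldsymbol{\Xi}^{(t)}$: this inflates both the $L^\infty$-envelope (hence the uniform bound $b$ appearing in $\mR_n\leq \delta^2/b$) and the $\sqrt{B}$ prefactor in the single-class critical-radius formula, and these two effects combine into a factor $\sqrt{T-t+1}$ which, multiplied by $(T-t+1)^2$, gives the advertised $(T-t+1)^{2.5}$. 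I anticipate this bookkeeping to be the main technical obstacle: one must carefully track two distinct $(T-t+1)$-dependences through the critical-radius computation for the product classes, verify that they combine into precisely $\sqrt{T-t+1}$ rather than something worse, and confirm that the approximation-error hypothesis $\eta_n^{(t)}\leq \delta_n^{(t)}$ of Assumption~\ref{ass:technical}(5) remains compatible with the $B$-dependent rates produced above. Substituting the final critical-radius bound back into Theorem~\ref{thm:one-step} and collecting the $\sqrt{\log(c_1T/\zeta)/n}$ term from $\delta_n^{(t)}$ then completes the proof in both regimes.
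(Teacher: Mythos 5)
Your high-level plan is correct and matches the paper: invoke Theorem~\ref{thm:one-step} to reduce the statement to bounding the common critical radius $\bar\delta_n^{(t)}$ of $\mF_{3M}^{(t)}$, $\boldsymbol{\Omega}^{(t)}$, $\boldsymbol{\Xi}^{(t)}$, then union-bound over $t$. Your analysis of the extra $(T-t+1)^{1/2}$ is also on the mark: it comes from the $\mG^{(t+1)}$-ball radius $D=C_\mG(T-t+1)M_\mH$, which enters the critical-radius bound as $\sqrt{D}$.

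The genuine gap is in how you propose to control the critical radii of the product classes $\boldsymbol{\Omega}^{(t)}$ and $\boldsymbol{\Xi}^{(t)}$. The ``standard product inequality'' $\mR_n(\mA\cdot\mB,\delta)\lesssim\|\mA\|_\infty\mR_n(\mB,\delta)+\|\mB\|_\infty\mR_n(\mA,\delta)$ is not available for \emph{local} Rademacher complexity: the localization constraint $\|ab\|_n\le\delta$ does not decouple into constraints on $\|a\|_n$ and $\|b\|_n$ separately (one factor can be tiny while the other is large, so neither factor's empirical norm is controlled by $\delta$). Contraction arguments can strip off the scalar $r\in[0,1]$ and handle Lipschitz compositions, but they do not yield a product-to-sum decomposition with localization intact. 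The paper avoids this entirely: $\boldsymbol{\Xi}^{(t)}$ is embedded into a ball of the tensor-product RKHS with kernel $K_{\mH}\otimes K_{\mF}$, and $\boldsymbol{\Omega}^{(t)}$ into a ball of the RKHS with kernel $(K_{\tmH}+K_{\tmG})\otimes K_{\mF}$; then Lemma~\ref{lem:criticalradRKHS} (the Mendelson-type RKHS local-complexity bound) applies directly, with Weyl's inequality $\lambda_{i+j-1}^{\downarrow}(K_{\tmH}+K_{\tmG})\le\lambda_i^{\downarrow}(K_{\mH})+\lambda_j^{\downarrow}(K_{\mG})$ handling the sum $h-g$ (see Lemma~\ref{lem:radius by spectra}).

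A secondary but related issue: the $\log(n)$ factor in the polynomial-decay case is \emph{not} ``standard sub-root slack.'' The single-class RKHS critical radius under $\lambda_j\lesssim j^{-2\alpha}$ is $n^{-\alpha/(2\alpha+1)}$ with no logarithm. The $\log(n)$ here comes from the eigenvalue asymptotics of the \emph{tensor-product} kernel $K_{\mH}\otimes K_{\mF}$: sorting $\{\lambda_i(K_\mH)\lambda_j(K_\mF)\}$ in decreasing order yields a sequence that decays like $m^{-2\min(\alpha_\mH,\alpha_\mF)}$ up to polylogarithmic corrections (this is the content of \cite{krieg2018tensor} cited in Example~\ref{ex:RKHSpolynomial}). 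Your reduce-to-single-class route, even if the product inequality were valid, would not reproduce this log factor from the correct source. To fix the argument, replace the product-inequality step with the tensor-product embedding plus Weyl's inequality, as in Lemmas~\ref{lem:criticalradRKHS}--\ref{lem:radius by spectra} and Examples~\ref{ex:RKHSpolynomial}--\ref{ex:RKHSexponential}.
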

Kernels of the two types of eigen-decay considered above are very common. For example, the kernel of the $\alpha$-order Soblev space with $\alpha>1/2$, has a polynomial eigen-decay while the Gaussian kernel has an exponential eigen-decay, with $\beta=2$ for Lebesgue measure on real line and $\beta=1$ on a compact domain \citep{wei2017early}.%

\emph{Components 4: measure of one-step transition ill-posedness}.
We first provide more insights on  $\{\Pi_{t'=1}^{t}C_{t',t'-1}^{(t)}\}_{t=1}^T$ before providing an upper bound.
We formally define the local measure of one-step transition ill-posedness
$C_{t'+1,t'}^{(t)}$ 
recursively based on $C_{t,t-1}^{(t)}$ to $C_{2,1}^{(t)}$ as 
\begin{eqnarray*}
&& \hspace{-0.3in} C_{t'+1,t'}^{(t)}\triangleq\sup_{g\in\mG(W_{t'+1}\times S_{t'+1})} \frac{\norm{\EE^{\pi_{t'}}[g(W_{t'+1},S_{t'+1})\given Z_{t'},S_{t'}]}_2}{\norm{\EE[g(W_{t'+1},S_{t'+1})\given Z_{t'+1},S_{t'+1}]}_2}, \quad    \text{subject to}\\
&& \hspace{-0.4in} \norm{\EE[g(W_{t'+1},S_{t'+1})\given Z_{t'+1},S_{t'+1}]}_2
                   \lesssim  \tau_t (T-t+1)^2\delta_n^{(t)} \norm{\pi_{t'}/\pi_{t'}^b}_{\infty}\textstyle\prod_{s=t'+1}^{t-1}C_{s+1,s}^{(t)},
                   \end{eqnarray*}
with {\small $C_{t+1,t}^{(t)}\triangleq 1$} for each $t=1,\dots,T$. 
For {\small $C_{t,t-1}^{(t)}$}, we can upper bound $\norm{\EE[\hvpi(W_{t},S_{t})\given Z_{t},S_{t}]}_2$ by the projected error $\norm{\proj_t(\hmP_t-\mP_t)(\hvpi_{t+1}+R_t)}_2$ multiplied by the ill-posedness $\tau_t$.
By Theorem \ref{thm:one-step}, the projected error can be well controlled by $\delta^{(t)}_{n}$ with high probability, so {\small $C_{t,t-1}^{(t)}$} can be defined locally. Therefore, we can provide an upper bound for the denominator sequentially and define all {\small $C_{t'+1,t'}^{(t)}$} locally, which indicates that all {\small $C_{t'+1,t'}^{(t)}$} could be small.

For example, if we use the observed history as the action-inducing proxy, %
then
$\sigma(\mZ_1\times\mS_1)\subset\sigma(\mZ_2\times\mS_2)\subset\dots\subset\sigma(\mZ_T\times\mS_T)$
is a filtration. 
In this case, {\small $\prod_{t'=2}^t C_{t',t'-1}^{(t)}$} are expected to be small for $t\geq 2$ if the target policy is stationary.
While this can enlarge the critical radii $\delta_n^{(t)}$ due to the dimension of the action-inducing proxy, this only affects one-step errors. %
See detailed discussion in Appendix \ref{sec:additional results}. Motivated by this, it is reasonable to impose Assumption \ref{ass: one step transition ill-posedness} below on $C_{t'+1,t'}^{(t)}$.
\begin{assumption}\label{ass: one step transition ill-posedness}
 For every $t \geq 2$ and $2 \leq m \leq t$, $C_{m,m-1}^{(t)}\leq 1 + \frac{a_{t}}{m^{\alpha_t}}$ with time-dependent constants $a_{t}>0, \alpha_t\geq\alpha>1$.
\end{assumption}
\begin{corollary}
\label{cor: transition ill-posedness bound}
If Assumption \ref{ass: one step transition ill-posedness} holds, then $\prod_{t'=1}^t C_{t',t'-1}^{(t)} \leq \exp\{a_t\zeta(\alpha_t)\}$, where $\zeta(\alpha_t) = \sum_{n=1}^{\infty}(1/n)^{\alpha_t}$ is uniformly bounded for $t \geq 1$.
\end{corollary}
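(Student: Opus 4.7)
The plan is to bound the product directly using Assumption \ref{ass: one step transition ill-posedness} together with the standard inequality $1+x\le e^{x}$, which converts a product bound into a sum bound.

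First I would separate out the factor $C^{(t)}_{1,0}=1$ from the product, so that
\[
\prod_{t'=1}^{t} C^{(t)}_{t',t'-1} \;=\; \prod_{m=2}^{t} C^{(t)}_{m,m-1}.
\]
Next, applying the hypothesis $C^{(t)}_{m,m-1}\le 1+a_t/m^{\alpha_t}$ termwise and then $1+x\le e^x$ gives
\[
\prod_{m=2}^{t} C^{(t)}_{m,m-1} \;\le\; \prod_{m=2}^{t}\Big(1+\tfrac{a_t}{m^{\alpha_t}}\Big) \;\le\; \exp\!\Big(\sum_{m=2}^{t} \tfrac{a_t}{m^{\alpha_t}}\Big).
\]
Then, since all the summands are nonnegative, I would extend the finite sum to the full series and identify it with the Riemann zeta function:
\[
\sum_{m=2}^{t}\tfrac{a_t}{m^{\alpha_t}} \;\le\; a_t\sum_{m=1}^{\infty}\tfrac{1}{m^{\alpha_t}} \;=\; a_t\,\zeta(\alpha_t),
\]
which is finite because $\alpha_t>1$. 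Chaining these inequalities yields the desired bound $\prod_{t'=1}^{t}C^{(t)}_{t',t'-1}\le \exp\{a_t\zeta(\alpha_t)\}$.

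Finally, the uniform boundedness claim for $\zeta(\alpha_t)$ across $t$ follows from the monotonicity $\zeta(\alpha_t)\le \zeta(\alpha)$, which holds because $\zeta$ is decreasing on $(1,\infty)$ and by assumption $\alpha_t\ge \alpha>1$. There is no real obstacle in this argument; the work is entirely in the preceding step of \emph{defining} $C^{(t)}_{m,m-1}$ locally and justifying Assumption \ref{ass: one step transition ill-posedness}, while the corollary itself is essentially a two-line consequence of the product-to-exponential trick combined with summability of $\sum m^{-\alpha_t}$.
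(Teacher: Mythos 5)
Your proof is correct and is exactly the argument one would expect: separate out $C^{(t)}_{1,0}=1$, apply the hypothesis termwise together with $1+x\le e^x$, and then bound the finite sum $\sum_{m=2}^{t}a_t m^{-\alpha_t}$ by the full series $a_t\zeta(\alpha_t)$, with uniform boundedness following from $\alpha_t\ge\alpha>1$ and the monotonicity of $\zeta$ on $(1,\infty)$. The paper states this corollary without a dedicated proof (it is invoked directly in Appendix C.5.2), and your two-line derivation is the natural justification it relies on.
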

\textbf{\emph{Main result: error bounds for $V$-bridge estimation and OPE}}. 
Define $\texttt{trans-ill}=\max_{1 \leq t \leq T}\exp\{a_t\zeta(\alpha_t)\}$
and let $\texttt{ill}_{\max}=\bar{\tau}_{1}\max_{1\leq t\leq T}\tau_t\norm{\pi_t/\pi_t^b}_\infty$. Summarizing all aforementioned results, we have the following main theorem based on the polynomial eigen-decay case in Corollary \ref{cor:RKHS radii}. Other cases can be found in Appendix \ref{sec:additional results}.
\begin{theorem}[Finite-sample error bounds for $V$-bridges and policy value]
\label{thm:main}
Under Assumptions \ref{ass:technical} and \ref{ass: one step transition ill-posedness}, and assumptions in Theorem
\ref{thm:one-step} and Corollary \ref{cor:RKHS radii}~(1), with probability at least $1-\zeta$, we have
\begin{align*}
 \norm{\vpi_1-\hvpi_1}_2  & \lesssim \texttt{ill}_{\max} \times \texttt{trans-ill}\times T^{7/2}\sqrt{\log(c_1T/\zeta)} n^{-\frac{1}{2+\max\{1/\alpha_{\mH},1/\alpha_{\mG},1/\alpha_{\mF}\}}}\log(n), \text{and}\\
  |\mV(\pi) - \hat\mV(\pi)| & \lesssim \texttt{ill}_{\max} \times \texttt{trans-ill}\times T^{7/2}\sqrt{\log(c_1T/\zeta)}n^{-\frac{1}{2+\max\{1/\alpha_{\mH},1/\alpha_{\mG},1/\alpha_{\mF}\}}}\log(n).
\end{align*}
\end{theorem}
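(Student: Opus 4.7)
The proof is essentially an assembly of the four intermediate results already in hand: the error decomposition (Theorem \ref{thm:decomposition}), the one-step projected-error bound (Theorem \ref{thm:one-step}), the polynomial eigen-decay rate for critical radii (Corollary \ref{cor:RKHS radii}(1)), and the uniform control on the cumulative transition ill-posedness (Corollary \ref{cor: transition ill-posedness bound}). My plan is to start from Theorem \ref{thm:decomposition}, plug in bounds for each of its four components, handle the probabilistic statements by a union bound over $t$, and then convert the $V$-bridge error into an OPE error via Proposition \ref{prop:identification} and a standard sample-average deviation.

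First I would apply Theorem \ref{thm:one-step} at every $t = 1, \dots, T$. Each application holds with probability at least $1 - \zeta/T$ by absorbing the $\log(T/\zeta)$ term into the stated $\delta_n^{(t)}$; taking a union bound yields that, with probability at least $1-\zeta$,
\begin{equation*}
  \norm{\proj_t(\hmP_t - \mP_t)(\hvpi_{t+1}+R_t)}_2 \;\lesssim\; M_{\mH}(T-t+1)^{2}\,\delta_n^{(t)} \quad \text{for all } 1\le t\le T.
\end{equation*}
Under the polynomial eigen-decay assumption, Corollary \ref{cor:RKHS radii}(1) further refines this to $(T-t+1)^{2.5}\sqrt{\log(c_1 T/\zeta)}\,n^{-1/(2+\max\{1/\alpha_{\mH},1/\alpha_{\mG},1/\alpha_{\mF}\})}\log(n)$, again uniformly in $t$. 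Simultaneously, Assumption \ref{ass: one step transition ill-posedness} combined with Corollary \ref{cor: transition ill-posedness bound} yields $\prod_{t'=1}^{t} C^{(t)}_{t',t'-1} \le \exp\{a_t\zeta(\alpha_t)\}\le \texttt{trans-ill}$ for every $t$.

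Next I substitute all three ingredients into the decomposition of Theorem \ref{thm:decomposition}. Factoring out $\bar\tau_1$, the worst-case $\tau_t \norm{\pi_t/\pi_t^b}_\infty$, and $\texttt{trans-ill}$, what remains is
\begin{equation*}
  \norm{\vpi_1 - \hvpi_1}_2 \;\lesssim\; \texttt{ill}_{\max}\cdot\texttt{trans-ill}\cdot \sum_{t=1}^{T} (T-t+1)^{2.5}\sqrt{\log(c_1T/\zeta)}\,n^{-\frac{1}{2+\max\{1/\alpha_{\mH},1/\alpha_{\mG},1/\alpha_{\mF}\}}}\log n.
\end{equation*}
The sum $\sum_{t=1}^{T}(T-t+1)^{2.5}\lesssim T^{3.5}$, which produces the $T^{7/2}$ factor claimed. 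This establishes the first bound.

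For the OPE bound, I use $\hat{\mV}(\pi) = n^{-1}\sum_{i=1}^n \hvpi_1(W_{1,i},S_{1,i})$ (the empirical counterpart of $\mV(\pi)=\EE[\vpi_1(W_1,S_1)]$ from Proposition \ref{prop:identification}) and split via the triangle inequality as $|\mV(\pi) - \hat\mV(\pi)| \le |\EE[\vpi_1 - \hvpi_1]| + |\EE[\hvpi_1] - n^{-1}\sum_i \hvpi_1(W_{1,i},S_{1,i})|$. The first piece is dominated by $\norm{\vpi_1 - \hvpi_1}_2$ (Jensen's inequality), and the second is a uniform sample-mean deviation over the bounded function class $\mH^{(1)}$, which by the critical-radius bound of $\mH^{(1)}$ is $\bigO(\delta_n^{(1)}\sqrt{\log(1/\zeta)})$, a lower-order term than the first piece. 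Combining yields the second stated rate.

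The main obstacle is not any single estimate but rather the bookkeeping needed to make the uniform-in-$t$ statement simultaneously consistent across Theorem \ref{thm:one-step}, Corollary \ref{cor:RKHS radii}, and the local definition of $C^{(t)}_{t',t'-1}$. In particular, the ``local'' measures of one-step transition ill-posedness are defined relative to the event that $\norm{\EE[g(W_{t'+1},S_{t'+1})\mid Z_{t'+1},S_{t'+1}]}_2$ is bounded by the projected error at step $t$, so one must verify that the high-probability event under which Theorem \ref{thm:one-step} holds is exactly the event on which Corollary \ref{cor: transition ill-posedness bound} can be invoked with the constants $a_t,\alpha_t$ in Assumption \ref{ass: one step transition ill-posedness}. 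Once the union bound is taken at the $\zeta/T$ level and the locality is checked, the rest is arithmetic bookkeeping on powers of $T$.
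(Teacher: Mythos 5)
Your proposal is correct and follows essentially the same route as the paper: it combines the error decomposition of Theorem~\ref{thm:decomposition}, the one-step projected error rate of Theorem~\ref{thm:one-step} specialized via Corollary~\ref{cor:RKHS radii}(1), the uniform transition-ill-posedness control from Corollary~\ref{cor: transition ill-posedness bound}, the sum $\sum_{t=1}^T (T-t+1)^{2.5}\lesssim T^{7/2}$, and a triangle-inequality split of $|\mV(\pi)-\hat\mV(\pi)|$ into a bias term bounded by $\norm{\vpi_1-\hvpi_1}_2$ and a sample-mean deviation that is lower order. The only cosmetic differences from the paper's proof are (i) the paper splits the sample-mean deviation further into the two pieces $|\EE\vpi_1-\EE_n\vpi_1|$ (Hoeffding) and $|(\EE_n-\EE)(\vpi_1-\hvpi_1)|$ (localized uniform law, Theorem~14.20 of \cite{wainwright2019high}), whereas you bound $|\EE\hvpi_1-\EE_n\hvpi_1|$ directly, and (ii) the relevant function class is $\mG^{(1)}$ (where $\hvpi_1$ lives) rather than $\mH^{(1)}$, and the deviation inherits a factor of order $T$ from $\norm{\hvpi_1}_\infty\leq T$; neither changes the conclusion since this term is still dominated by the main $T^{7/2}$ piece.
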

Theorem \ref{thm:main} provides the first finite-sample error bound for OPE
under confounded and episodic POMDPs in terms of the sample size, length of
horizon and two (\textit{local}) measures of ill-posedness. Without considering the measures of ill-posedness, the
derived error bound for $V$-bridge function nearly achieves the optimal
$\mL^2$-convergence rate in the classical non-parametric regression
\citep{stone1982optimal}. Moreover, our OPE error bound depends on a polynomial
order of $T$, i.e., $T^{7/2}$, which is larger than the standard $\bigO(T^3)$ in
the OPE without unobserved variables. However, when the function class consider in \eqref{eqn: minmax estimation} grows with the sample size $n$, $\texttt{ill}_{\max}$ will also
increase and therefore the convergence rates in Theorem \ref{thm:main} could be
much slower. 
Next we study a case when we can control the local measures of ill-posedness
$\{\tau_t\}_{t=1}^T$, by assuming that
$\lambda_{\min}(\Gamma_m^{(t)})\geq \nu_m$ for all $1\leq t\leq T$ almost surely and other regularity conditions in Lemma \ref{lem:
  ill-posedness bound}, where
$
\Gamma_m^{(t)} \triangleq \EE \left\{ \EE[e_I^{(t)}(W_t,S_t,A_t)\given Z_t,S_t,A_t] \EE[e_I^{(t)}(W_t,S_t,A_t)\given Z_t,S_t,A_t]^{\top} \right\}$ with
$e_I^{(t)}=(e_1^{(t)},\dots,e_m^{(t)})$ as the first $m$ eigenfunctions of
kernel $K_{\mH^{(t)}}$. Similar conditions can be imposed to control $\bar{\tau}_1$, which is
omitted here for simplicity.
Let
$
\eta(n, T, \zeta, \alpha_\mathcal{H}, \alpha_\mathcal{F}, \alpha_\mathcal{G}, b) \triangleq  T^{\frac{7(\alpha_{\mH}-1/2)+10b}{2(\alpha_{\mH}-1/2)+4b}} \big(\sqrt{\log(c_1T/\zeta)}  n^{-\frac{1}{2+\max\{1/\alpha_{\mH},1/\alpha_{\mG},1/\alpha_{\mF}\}}}\log(n)\big)^{\frac{\alpha_{\mH}-1/2}{\alpha_{\mH}-1/2+2b}},
$
with $b$ defined below.

\begin{corollary} \label{cor: ill-posedness main}
If assumptions in Theorem \ref{thm:main} holds and $\nu_m \geq m^{-2b}$ for some $b \geq 0$,  then
\begin{align*}
 \norm{\vpi_1-\hvpi_1}_2  & \lesssim \bar{\tau}_1\textstyle\max_{1\leq t\leq T}\norm{\pi_t/\pi_t^b}_{\infty} \times \texttt{trans-ill}\times %
 \eta(n, T, \zeta, \alpha_\mathcal{H}, \alpha_\mathcal{F}, \alpha_\mathcal{G}, b),\\
|\mV(\pi) - \hat\mV(\pi)| & \lesssim \bar{\tau}_1\textstyle\max_{1\leq t\leq T}\norm{\pi_t/\pi_t^b}_{\infty} \times \texttt{trans-ill}\times %
  \eta(n, T, \zeta, \alpha_\mathcal{H}, \alpha_\mathcal{F}, \alpha_\mathcal{G}, b).
\end{align*}
\end{corollary}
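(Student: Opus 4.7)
The plan is to refine the bound in Theorem \ref{thm:main} by replacing the global measure of ill-posedness $\tau_t$ with its local counterpart, which can be controlled under the eigenvalue condition $\nu_m \geq m^{-2b}$ through Lemma \ref{lem: ill-posedness bound}.

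First, I would establish a local ill-posedness bound of the following flavor: for any $h\in\mH^{(t)}$ with RKHS norm $\norm{h}_{\mH^{(t)}} \leq B$ and projected norm $\norm{\proj_t h}_2 \leq \delta$, one has $\norm{h}_2 \lesssim B^{1-\theta}\delta^{\theta}$ with $\theta = (\alpha_\mH-1/2)/(\alpha_\mH-1/2+2b)$. The derivation proceeds by decomposing $h = h_I + h_{I^c}$, where $h_I$ is the projection onto the first $m$ eigenfunctions of $K_{\mH^{(t)}}$ and $h_{I^c}$ is its tail. The tail $\norm{h_{I^c}}_2$ is controlled via the polynomial eigen-decay of $K_{\mH^{(t)}}$, while the head contribution is bounded by $\norm{h_I}_2 \leq \nu_m^{-1/2}\norm{\proj_t h_I}_2 \leq m^b(\norm{\proj_t h}_2 + \norm{h_{I^c}}_2)$ via the eigenvalue bound on $\Gamma_m^{(t)}$. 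Optimizing the truncation level $m$ balances the two contributions and yields the claimed interpolation exponent $\theta$.

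Next, I would apply this local bound to $h_t = (\hmP_t-\mP_t)(\hvpi_{t+1}+R_t)$ at each step $t$. By Assumption \ref{ass:technical}~(2) and~(4), the RKHS norm $\norm{h_t}_{\mH^{(t)}}$ is bounded by a constant multiple of $T-t+1 \lesssim T$. Using Theorem \ref{thm:one-step} specialized to the polynomial eigen-decay setting of Corollary \ref{cor:RKHS radii}~(1), the projected error $\norm{\proj_t h_t}_2$ admits a high-probability bound of order $(T-t+1)^{5/2}\sqrt{\log(c_1T/\zeta)}\, n^{-1/(2+\max\{1/\alpha_\mH,1/\alpha_\mG,1/\alpha_\mF\})}\log n$. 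Plugging these into the local ill-posedness inequality and then into the decomposition of Theorem \ref{thm:decomposition}, the factor $\tau_t\norm{\proj_t h_t}_2$ is replaced by $\norm{h_t}_2 \lesssim T^{1-\theta}\bigl((T-t+1)^{5/2}\delta_n\bigr)^{\theta}$ with $\delta_n$ denoting the sample-size-dependent piece. Summing over $t$ (which contributes an extra factor $T$), invoking Corollary \ref{cor: transition ill-posedness bound} to absorb $\prod_{t'=1}^{t}C_{t',t'-1}^{(t)}$ into $\texttt{trans-ill}$, and retaining $\bar\tau_1\max_t\norm{\pi_t/\pi_t^b}_\infty$, yields the claimed bound for $\norm{\vpi_1-\hvpi_1}_2$. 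The bound for $|\mV(\pi)-\hat{\mV}(\pi)|$ then follows immediately from the identification $\mV(\pi) = \EE[\vpi_1(W_1,S_1)]$, its plug-in estimator, and Jensen/Cauchy--Schwarz to pass from $\mL^2$ control to mean control.

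The main obstacle is proving the sharp local ill-posedness bound with interpolation exponent $(\alpha_\mH-1/2)/(\alpha_\mH-1/2+2b)$: a naive combination of the truncation inequality $\norm{h_{I^c}}_2^2 \leq \lambda_{m+1}\norm{h}_{\mH}^2 \lesssim m^{-2\alpha_\mH}\norm{h}_\mH^2$ with the head bound only yields the weaker exponent $(\alpha_\mH-b)/\alpha_\mH$. Attaining the sharper rate requires a careful argument that leverages the joint structure of the RKHS norm constraint on $h$ and the tail behavior of the eigenvalues of $K_{\mH^{(t)}}$, most plausibly through a refined interpolation inequality, possibly chained with a bootstrapping step that first uses a crude $\norm{h}_2$ bound to then sharpen the tail control. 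The remaining algebra of tracking the compounding powers of $T$ from the interpolation exponent, the $(T-t+1)^{5/2}$ one-step projected rate, and the outer summation is routine but must be executed carefully to verify that the exponents assemble to $\frac{7(\alpha_\mH-1/2)+10b}{2(\alpha_\mH-1/2)+4b}$ as claimed.
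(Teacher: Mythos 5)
The high-level strategy is right, and it matches the paper: apply a local measure of ill-posedness to replace $\tau_t$, feed it into the error decomposition of Theorem \ref{thm:decomposition}, account for the $T$-dependence in the one-step projected error and the RKHS norm, and sum over $t$. You also correctly identify the interpolation form $\norm{h}_2 \lesssim B^{1-\theta}\delta^{\theta}$ with $\theta = (\alpha_{\mH}-1/2)/(\alpha_{\mH}-1/2+2b)$.

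However, the claimed ``main obstacle'' is not actually an obstacle: the paper simply invokes Lemma \ref{lem: ill-posedness bound} directly (Lemma 11 of Dikkala et al.), which is already stated and proved in the paper and whose hypotheses are assumed in the corollary via ``other regularity conditions in Lemma \ref{lem: ill-posedness bound}.'' The reason that lemma yields the exponent $(\alpha_{\mH}-1/2)/(\alpha_{\mH}-1/2+2b)$ is \emph{not} a refinement of your naive truncation argument. The lemma imposes the additional cross-correlation condition \eqref{eq: projection condition}, which lets one lower-bound $\norm{\proj_t h}_2^2$ directly by $\nu_m\norm{a_I}_2^2 - 2c\nu_m B \sqrt{\sum_i \lambda_i}\sqrt{\sum_{j>m}\lambda_j}$ without ever passing through the Jensen contraction $\norm{\proj_t h_{I^c}}_2 \leq \norm{h_{I^c}}_2$. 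It is precisely the tail term $\sqrt{\sum_{j>m}\lambda_j}\sim m^{-(\alpha_{\mH}-1/2)}$ --- not $\lambda_{m+1}\sim m^{-2\alpha_{\mH}}$ --- that produces the denominator $\alpha_{\mH}-1/2+2b$ after balancing against $\delta^2/\nu_m\sim\delta^2m^{2b}$. Your naive decomposition $h = h_I + h_{I^c}$ followed by contraction will never produce this term; no amount of bootstrapping turns $\lambda_{m+1}$ into $\sqrt{\sum_{j>m}\lambda_j}$. Moreover, your claim that the naive exponent $(\alpha_{\mH}-b)/\alpha_{\mH}$ is ``weaker'' is incorrect: the two are incomparable. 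For example, when $\alpha_{\mH} > 2b - 1/2$ one has $(\alpha_{\mH}-b)/\alpha_{\mH} > (\alpha_{\mH}-1/2)/(\alpha_{\mH}-1/2+2b)$, so the naive bound would actually be \emph{tighter} in that regime. The two exponents come from arguments with different hypotheses (only $\lambda_{\min}(\Gamma_m)\geq\nu_m$ for yours; additionally \eqref{eq: projection condition} for the paper's), so neither dominates.

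There is also a bookkeeping error that changes your $T$-power. You assert $\norm{h_t}_{\mH^{(t)}} \lesssim T-t+1$, but the proof of Theorem \ref{thm:one-step} gives $\norm{\hqpi_t/(T-t+1)}_{\mH^{(t)}}^2 \lesssim (T-t+2)M_{\mH}$, hence the squared RKHS norm relevant to $\tau_t^*$ is $\lesssim (T-t+1)^3 M_{\mH}$, i.e.\ $\norm{h_t}_{\mH^{(t)}} \lesssim (T-t+1)^{3/2}$, not $(T-t+1)$. With the correct bound, the one-step term is of order $(T-t+1)^{3(1-\theta)/2}\bigl((T-t+1)^{5/2}\delta_n\bigr)^{\theta} = (T-t+1)^{3/2+\theta}\delta_n^{\theta}$ (where $\delta_n$ is the $n$-dependent piece), and summing over $t$ gives the extra $T$, yielding $T^{5/2+\theta}$, which one can check equals the claimed $T^{(7(\alpha_{\mH}-1/2)+10b)/(2(\alpha_{\mH}-1/2)+4b)}$. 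Your form $T^{1-\theta}(T-t+1)^{5\theta/2}$ summed over $t$ gives $T^{2+3\theta/2}$, which is strictly smaller for $b>0$ and does not match the statement. Finally, your transition from the $\mL^2$ bound to the OPE bound mentions only Jensen; the paper's decomposition also controls the empirical fluctuation terms (I) and (III) via Hoeffding and localized empirical-process bounds, though these are lower order and do not change the rate.
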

Corollary \ref{cor: ill-posedness main} considers the mildly ill-posed case, i.e., $\nu_m \geq m^{-2b}$, and shows that the local measure of ill-posedness can
deteriorate convergence rate of $\hvpi_1$ significantly. If $b$ is
large relative to $\alpha_{\mH}$ or further severely ill-posed case is considered (i.e., $\nu_m$ decays exponentially
fast, see Appendix \ref{sec: applying decompisition of OPE error}), then the convergence rate of $V$-bridge estimation could be much slower and the typical requirement on the nuisance parameter for achieving asymptotic normality for the policy value will fail. On the other hand,  it can be seen that when $b=0$, the finite sample error bounds match the results
in Theorem \ref{thm:main}.

\section{Simulation}
In this section, we perform a simulation study to evaluate the performance of our proposed OPE estimation and to verify the finite-sample error bound of our OPE estimator in Theorem \ref{thm:main}.

Let $\mS=\RR^2$, $\mU=\RR,\mW=\RR,\mZ=\RR$, and $\mA=\left\{ 1,-1 \right\}$.
At time $t$, the hidden state $U_t$, two proximal variables $Z_t$, $W_t$ satisfy the following multivariate normal distribution given $(S_t,A_t)$:
\begin{equation}
\label{eq:ZWU|SA}
(Z_t,W_t,U_t)\given (S_t,A_t) \sim \mN \left(
  \begin{bmatrix}
    \alpha_0 +\alpha_aA_t +\alpha_sS_t\\
    \mu_0    +\mu_aA_t    +\mu_sS_t\\
    \kappa_0 +\kappa_aA_t +\kappa_sS_t\\
  \end{bmatrix},
  \Sigma = \begin{bmatrix}
    \sigma_z^2 & \sigma_{zw} & \sigma_{zu}\\
    \sigma_{zw} & \sigma_w^2 & \sigma_{wu}\\
    \sigma_{zu} & \sigma_{wu} & \sigma_u^2
  \end{bmatrix}
\right), 
\end{equation}
where parameters are given in the Appendix.

The behavior policy is given by
$\tilde{\pi}_t^b (A_t\given U_t,S_t) = \expit \left\{ -A_t \left( t_0 + t_uU_t+t_s^{\top}S_t \right) \right\}$,
where $t_0=0$, $t_u=1$, and $t_s^{\top}=[-0.5, -0.5]$. 
Then by Assumption \ref{ass:technical} (6), $\pi_t^{b}(A_t\given S_t) = \expit\{-A_t\left(t_0+t_u\kappa_0 + (t_s+t_u\kappa_s)^{\top}S_t\right)\}$. %
The initial $S_1$ is uniformly sampled from $\RR^2$.
At time $t$, given $(S_t,U_t,A_t)$, we generate
$S_{t+1} = S_t + A_tU_t\ones_2 + e_{S_{t+1}}$,
where $\ones_2 = [1,1]^{\top}$ and the random error $e_{S_{t+1}}\sim \mN([0,0]^{\top}, \bI_2)$ with $\bI_2$ denoting the $2$-by-$2$ identity matrix. The reward is given by
$R_t = \expit\left\{\frac{1}{2}A_t(U_t+[1,-2]S_t)\right\} + e_t,$
where $e_t\sim \text{Uniform}[-0.1,0.1]$. One can verify that our simulation setting satisfies the conditions in Section \ref{sec: basic assumptions} so that our method can be applied.
\begin{figure}[H]
    \centering
    \includegraphics[width=0.4\textwidth]{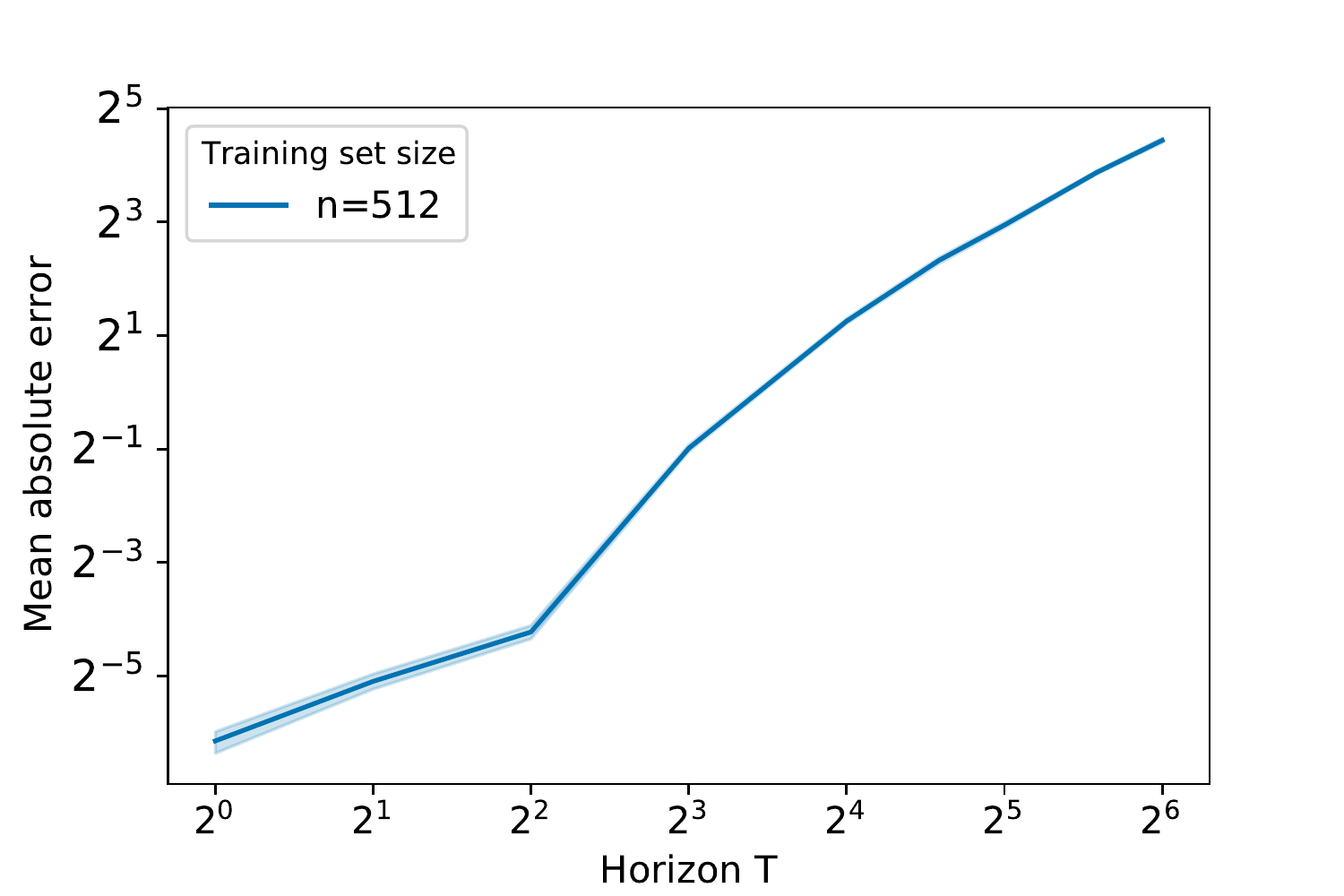}
    \includegraphics[width=0.4\textwidth]{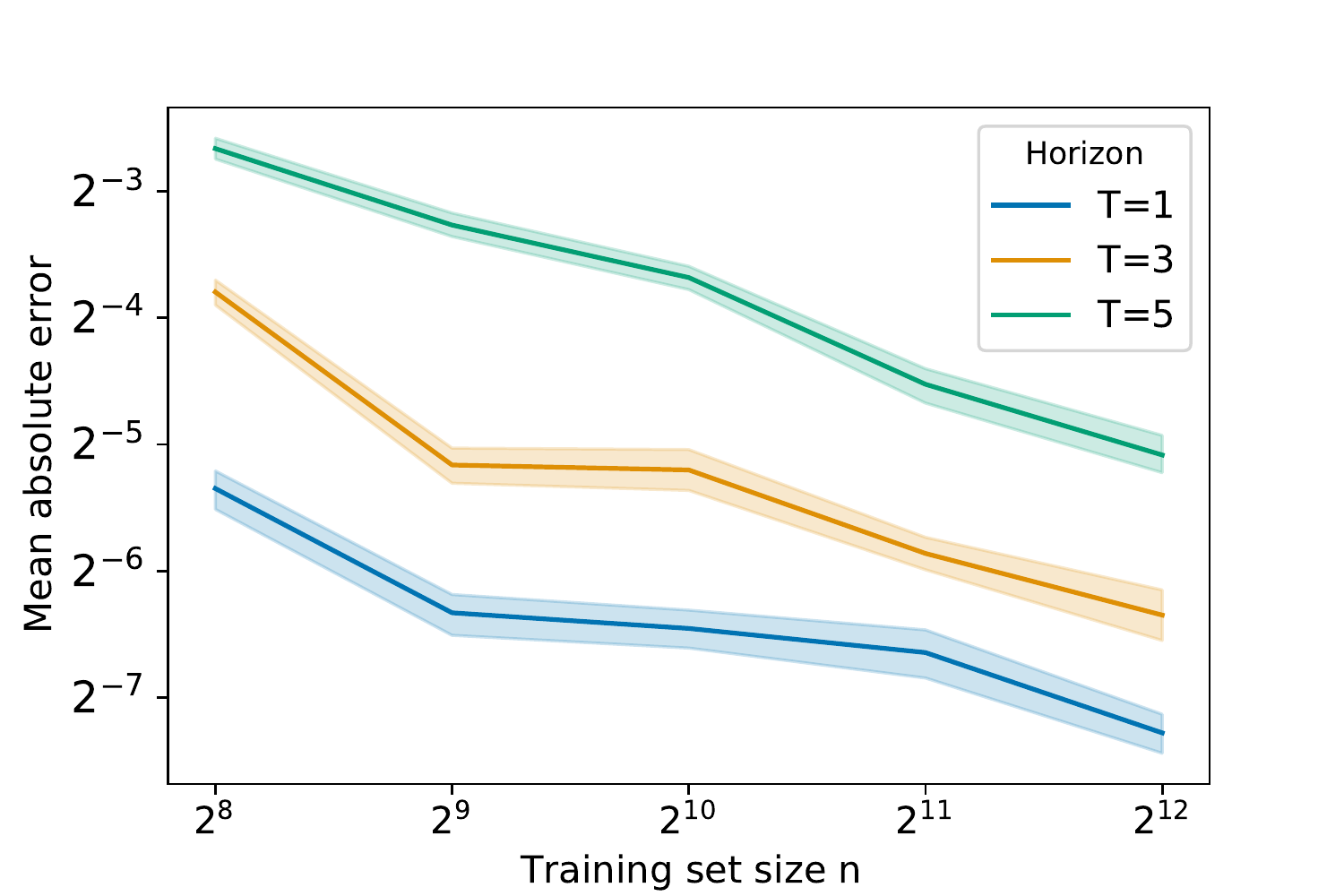}
    
     (a)  ~~~~~~~~~~~~~~~~~~~~~~~~~~~~~~~~~~~~~~~~~~~~~~~~~~~~~~~~~~~~~~~~~~~~~(b) 
    \caption{Simulation results for OPE errors $|\widehat{\mV}(\pi) - \mV(\pi)|$. Mean absolute errors (solid lines) and their standard error bands (shaded regions) are displayed for different combinations of $(n,T)$.} %
    \label{fig:simu_results}
\end{figure}

We choose $\mF^{(t)}$ and $\mH^{(t)}$ as RKHSs endowed with Gaussian kernels, with bandwidths selected according to the median heuristic trick by \cite{fukumizu2009kernel} for each $1\leq t\leq T$. The pool of scaling factors \texttt{SCALE} contains 30 positive numbers spaced evenly on a log scale between 0.001 to 0.05. The number of cross-validation partition $K=5$.
The true target policy value of $\pi$ is estimated by the mean cumulative rewards of $50,000$ Monte Carlo trajectories with policy $\pi$. 
We compare our OPE estimator $\widehat{\mathcal{V}}(\pi)$ with the target policy value by computing mean absolute error (MAE) for each setting of $(n,T)$, as reported in Figure \ref{fig:simu_results}. Figure \ref{fig:simu_results} validate the derived finite-sample error bound of our OPE estimator in Theorem \ref{thm:main}. Specifically,
Figure \ref{fig:simu_results} (a) shows that the OPE estimation error is polynomial in $T$, but with an order slightly smaller than $\bigO(T^{7/2})$ as stated in Theorem \ref{thm:main}. Figure \ref{fig:simu_results} (b) shows that the convergence rate in terms of the sample size $n$ for our OPE estimator is slower than $\bigO(n^{-1/2})$, which also justifies our theoretical results.

\section{Discussion}

In this paper, we propose a non-parametric identification and estimation method for OPE in episodic confounded POMDPs with continuous states, relying on time-dependent proxy variables. 
We develop a fitted-$Q$-evaluation-type algorithm for estimating the $V$-bridge functions sequentially and for OPE based on the estimated $V$-bridges. The first finite-sample error bound for estimating the policy value under confounded POMDPs is established, which achieves a polynomial order with respect to the sample size and the length of horizon. Our OPE results can serve as a foundation for developing new policy optimization algorithms in the confounded POMDP, which We will leave for future work.

\section*{Acknowledgement}
Zhang's research is partially supported by GW University Facilitating Fund.

\bibliographystyle{abbrvnat}
\bibliography{reference}

\newpage

\appendix

{\centering 
\large\bf Supplementary Material:\\
Off-Policy Evaluation for Episodic Partially Observable Markov Decision Processes under Non-Parametric Models

}

\tableofcontents
\newpage
\allowdisplaybreaks[0]
\section*{List of Notations}
\begin{table}[H]
    \centering
    \begin{tabular}{c|c}
    \hline
    $\mM$ & the episodic and confounded POMDP\\
    $S_t\in\mS$ & observed state at $t$ and observed state space\\
    $U_t\in\mU$ & unobserved state at $t$ and unobserved state space\\
    $A_t\in\mA$ & action at $t$ and discrete action space\\
    $T$   & length of horizon\\
    $r = \{r_t\}_{t=1}^T$ & reward functions over $\mS\times\mU\times\mA$\\
    $R_t$ & reward at $t$\\
    $W_t\in\mW$ & reward-proxy variable at $t$ and corresponding space\\
    $Z_t\in\mW$ & action-proxy variable at $t$ and corresponding space\\
    $X_{t,i}$ & variable $X$ at $t$ from sample trajectory $i$\\
    $\pi=\{\pi_t\}_{t=1}^T$ & target policy depending on $S_t$\\
    $\tilde{\pi}_t^b$ & behavior policy at $t$ depending on $S_t,U_t$\\
    $V_t^{\pi}(s,u)$ & state value function\\
    $\mV(\pi)$ ($\widehat{\mV}(\pi)$) & (estimated) policy value of a target policy $\pi$\\
    $\vpi_t$ ($\hvpi_t$) & (estimated) V-bridge function (or V-bridge for short) at $t$\\
    $\qpi_t$ ($\hqpi_t$) & (estimated) Q-bridge function (or Q-bridge for short) at $t$\\
    $\widetilde{\mP}_t$ & operator $[\widetilde{\mP}_t](Z_t,S_t,A_t) = \EE[g(R_t,W_{t+1},S_{t+1})\given Z_t,S_t,A_t]$ \\
    $\overline{\mP_t}$ & operator $[\overline{\mP}_t](Z_t,S_t,A_t) = \EE[h(W_t,S_t,A_t)\given Z_t,S_t,A_t]$ \\ 
    $\mP_t$ ($\widehat{\mP}_t$) & operator $\mP_t g = \overline{\mP_t}^{-1}\widetilde{\mP}_t g$ (estimator of $\mP_t$ defined in \eqref{eqn: minmax estimation})\\
    $\Ppi_t$ ($\hPpi_t$) & operator $\Ppi_t g = \langle \pi_t, \mP_t g\rangle$ (estimator of $\Ppi_t$: $\hPpi_tg = \left\langle \pi_t, \widehat{\mP}_t g\right\rangle$)\\
    $\mH^{(t)}$ & user-defined function space on $\mW\times\mS\times\mA$\\
    $\mF^{(t)}$ & user-defined function space on $\mZ\times\mS\times\mA$\\
    $\mG^{(t)}$ & user-defined function space on $\mZ\times\mS$\\
    $\mR_n(\mF,\delta)$ & local Rademacher complexity for function class $\mF$ and radius $\delta>0$\\
    $\widehat{\mR}_n(\mF,\delta)$ & local empirical Rademacher complexity for function class $\mF$ and radius $\delta>0$\\
    $N_n(\epsilon,\mG)$ & the smallest empirical $\epsilon$-covering number of $\mG$\\
    $\alpha \mF$ & $\alpha \mF = \{\alpha f: f\in\mF\}$ for some $\alpha\in\RR$\\
    $\mF_B$ & $\mF_B = \{f\in\mF: \nmF{f}^2 \leq B\}$ for any $B>0$\\
    $\norm{\proj_t f}_2$ & $\norm{\proj_t f}_2 = \sqrt{\EE\{f(X)\given Z_t,S_t,A_t\}^2}$\\
    $\bar{\tau}_{1}$ & ill-posedness $\bar{\tau}_{1}=
\sup_{g\in\mG^{(1)}}\norm{g(W_1,S_1)}_2/\norm{\EE[g(W_1,S_1)\given Z_1,S_1]}_2$\\
$\tau_t$ & ill-posedness $\tau_t=\sup_{h\in\mH^{(t)}}\norm{h(W_t,S_t,A_t)}_2/\norm{\proj_th(W_t,S_t,A_t)}_2$ \\
$C_{t',t'-1}^{(t)}$ & one-step transition ill-posedness defined after Corollary \ref{cor:RKHS radii}\\
$\VV(\mF)$ & VC dimension of $\mF$\\
$\zeta(\alpha)$ & Riemann Zeta function $\zeta(\alpha) = \sum_{n=1}^{\infty} (1/n)^{\alpha}$\\
$\text{Ker}(K)$ & $\text{Ker}(K) = \{g: Kg=0\}$ null space of linear operator $K$\\
$A^{\perp}$ & orthogonal complement of space $A$\\
$|\mZ|$ & cardinality of class $Z$\\
    
    \hline
    \end{tabular}
    \caption{List of Notations}
    \label{tab: notations}
\end{table}

\newpage

\section{Additional Identification Assumptions}
\label{sec:additional assumption}

In this section, we list Assumptions \ref{ass: Markovian}-\ref{ass:regularity} which are needed for Theorem \ref{thm:existence}.

\subsection{Basic assumptions on the confounded POMDP structure}
\label{sec: basic assumptions}

For the confounded POMDP with trajectory $\left(U_{t}, S_{t}, W_{t}, Z_{t}, A_{t}, R_{t}\right)_{t=1}^T$, 
we list three basic assumptions below. 
Let $\indep$ and $\dep$ denote statistical independence and dependence respectively.

\begin{assumption}[Markovian] \label{ass: Markovian}
For all $1\leq t\leq T$, the time-variant transition kernel $\PP_t$ satisfies that for any $(s,u)\in\mS\times\mU$, $a\in\mA$ and set $F\in\mB(\mS\times\mU)$,
\begin{align*}
&\Pr((S_{t+1},U_{t+1})\in F\given S_t=s, U_t=u,A_t=a, \{S_j,U_j,A_j\}_{1\leq j <t}) \\
 &= \PP_t((S_{t+1},U_{t+1})\in F\given S_t=s,U_t=u,A_t=a),
\end{align*}
where $\mB(\mS\times\mU)$ is the family of Borel subsets of $\mS\times\mU$ and 
$\{S_j,U_j,A_j\}_{1\leq j <t}\neq\emptyset$ if $t=1$. 
\end{assumption}

\begin{assumption}[Reward proxy]
\label{ass:RewardProxy} 
$W_t\indep (A_t,U_{t-1},S_{t-1}) \given U_t, S_t$ and $W_t\dep U_t\given S_t$ for $1\leq t\leq T$.
\end{assumption}
\begin{assumption}[Action proxy]
\label{ass:ActionProxy} %
  $Z_t\indep W_t \given (U_t, S_t,A_t)$,
  $Z_t\indep R_t \given (U_t, S_t,A_t)$ and
  $Z_t\indep (S_{t+1}, W_{t+1}) \given (U_t, S_t, A_t)$, $1\leq t \leq T$.
\end{assumption}

It can be easily verified that the DAG in Figure \ref{fig:POMDP}
satisfies Assumptions \ref{ass: Markovian}-\ref{ass:ActionProxy}.
Assumption \ref{ass: Markovian} requires that given the current full state and action $(U_t,S_t,A_t)$, the future are independent of the past.

Assumption \ref{ass:RewardProxy} requires that the reward proxy $W_t$ is associated
with the hidden state $U_t$ after adjusting observed state $S_t$ but $W_t$ is
not causally affected by action $A_t$ and past state $(U_{t-1},S_{t-1})$ after adjusting the full current state $(U_t,S_t)$. 
This assumption does not restrict the association between $W_t$ and $R_t$.
Assumption \ref{ass:ActionProxy} requires that upon conditioning on the current
full state and action tuple $(U_t,S_t,A_t)$, the action proxy $Z_t$ does not
affect the reward proxy $W_t$ and outcomes $R_t,S_{t+1},W_{t+1}$
after the action $A_t$. Again, this assumption does not restrict the association between $Z_t$ and $A_t$.

However, based on above three assumptions, we cannot directly identify the value of target policy $\pi$ by adjusting $(U_t,S_t)$ since $U_t$ is unobserved. In addition to Assumptions \ref{ass:RewardProxy} and \ref{ass:ActionProxy}, we also need Assumption 
\ref{ass:completeness} to be stated in Section \ref{sec: bridge assumption} below to get around the hidden state $U_t$. 

\subsection{Assumptions on the existence of bridge functions}
\label{sec: bridge assumption}

\begin{assumption}[Completeness] \label{ass:completeness} For any $(s,a)\in\mS\times\mA$, $t=1,\dots,T$,
\begin{enumerate}[]
\item[(a)] For any square-integrable function $g$,
  $\EE\{g(U_t)\given Z_t, S_t=s, A_t=a\}=0$ a.s. if and only if $g=0$ a.s;
\item[(b)] For any square-integrable function $g$,
  $\EE\{g(Z_t)\given W_t, S_t=s, A_t=a\}=0$ a.s. if and only if $g=0$ a.s.
\end{enumerate}
\end{assumption}
Completeness is a commonly made technical assumption in value identification problems,
e.g., instrumental variable identification
\citep{newey2003instrumental,d2011completeness,chen2014local}, and proximal
causal inference \citep{miao2018identifying,miao2018confounding,tchetgen2020introduction}.
Together with the regularity conditions in Assumption \ref{ass:regularity}, we can
ensure the existence of $Q$-bridges $\qpi_t$ and $V$-bridges $\vpi_t$, $1\leq
t\leq T$.

For a probability measure function $\mu$, let $\mL^2\{\mu(x)\}$
denote the space of all squared integrable functions of $x$ with respect to
measure $\mu(x)$, which is a Hilbert space endowed with the inner product
$\left\langle g_1,g_2 \right\rangle = \int g_1(x)g_2(x){\rm d}\mu(x)$. 
For all $s,a,t$, define the following operator 
\begin{align*}
  K_{s,a;t}:  \mL^2 \left\{ \mu_{W_t\given S_t,A_t}(w\given s,a)\right\} &\rightarrow \mL^2 \left\{ \mu_{Z_t\given S_t,A_t}(z\given s,a) \right\}\\
 h &\mapsto\EE \left\{ h(W_t)\given Z_t=z, S_t=s, A_t=a \right\},
\end{align*}
and its adjoint operator
\begin{align*}
  K_{s,a;t}^{*}: \mL^2 \left\{ \mu_{Z_t\given S_t, A_t}(z\given s,a)\right\}&\rightarrow \mL^2 \left\{ \mu_{W_t\given S_t,A_t}(w\given s,a) \right\}\\ 
 g &\mapsto \EE \left\{ g(Z_t)\given W_t=w, S_t=s, A_t=a \right\}.
\end{align*}

\begin{assumption}[Regularity Conditions] \label{ass:regularity}

For any $Z_t=z, S_t=s, W_t=w, A_t=a$ and $1\leq t\leq T$,
\begin{enumerate}[]
\item[(a)] $\iint_{\mW\times\mZ} f_{W_t\given Z_t,S_t,A_t}(w\given z,s,a)
  f_{Z_t\given W_t,S_t,A_t}(z\given w,s,a) {\rm d}w {\rm d}z < \infty$,
  where $f_{W_t\given Z_t,S_t,A_t}$ and $f_{Z_t\given W_t,S_t,A_t}$ are conditional density functions.

\item[(b)] For
  any $g\in\mG^{(t+1)}$,
$$
\int_{\mZ} \left[ \EE \left\{ R_t + g(W_{t+1},S_{t+1})\given
      Z_t=z,S_t=s,A_t=a \right\} \right]^2 f_{Z_t \given S_t, A_t}(z\given s,a){\rm d}z < \infty.
      $$
\item[(c)] There exists a  singular decomposition $\left(
    \lambda_{s,a;t;\nu},\phi_{s,a;t;\nu},\psi_{s,a;t;\nu} \right)_{\nu=1}^{\infty}$ of
  $K_{s,a;t}$ such that for all $g\in\mG^{(t+1)}$,
  \begin{equation*}
    \sum_{\nu=1}^{\infty} \lambda_{s,a;t;\nu}^{-2} \left| \left\langle \EE \left\{ R_t+g(W_{t+1},S_{t+1})\given Z_t=z,S_t=s,A_t=a \right\},\psi_{s,a;t;\nu} \right\rangle \right|^2 < \infty.
  \end{equation*}
\item[(d)] For all $1\leq t \leq T$, $\vpi_t\in\mG^{(t)}$ where $\mG^{(t)}$ satisfies the regularity conditions (b) and (c) above.
\end{enumerate}
Note that the existence of the singular decomposition of $K_{s,a,;t}$ in Assumption \ref{ass:regularity} (c) can be ensured by Assumption \ref{ass:regularity} (a), which is a sufficient condition for the compactness of
$K_{s,a;t}$ by Lemma \ref{lem:Picard}. 
\end{assumption}

\noindent For tabular $(\mU,\mW,\mZ)$, Corollary \ref{cor:discrete} provides a  sufficient condition for Assumptions
\ref{ass:completeness} and \ref{ass:regularity} \citep{shi2020multiply}. 
\begin{corollary}\citep{shi2020multiply}
\label{cor:discrete}
  Suppose that all $\mU$, $\mW$, and $\mZ$ are tabular. If both $Z_t$ and $W_t$ have at least as many categories as $U_t$ for
  $1\leq t\leq T$, i.e., $|\mZ|, |\mW| \geq |\mU|$ (where $|\mX|$ is the cardinality of set $\mX$), and transition probability matrices $P_t(\bW\given \bU,s)\triangleq \left[P_t(w_i\given u_j,s)\right]_{w_i\in\mW,u_j\in \mU}$
  and $P_t(\bU\given\bZ,a,s)\triangleq \left[P_t(u_i\given z_j,a,s)\right]_{u_i\in\mU,z_j\in\mZ}$ %
  are of full rank with rank $|\mU|$ for all $a,s,t$, 
  then Assumptions \ref{ass:completeness} and \ref{ass:regularity} hold.
  
\end{corollary}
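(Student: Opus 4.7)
The plan is to verify each sub-condition in Assumptions~\ref{ass:completeness} and \ref{ass:regularity} by reducing it to a finite-dimensional linear algebra statement, which is possible because $\mU$, $\mW$, and $\mZ$ are all tabular. After fixing $(s,a,t)$, every conditional distribution becomes a stochastic matrix, every square-integrable function becomes a finite-dimensional vector, every conditional expectation becomes a matrix--vector product, and the operator $K_{s,a;t}$ becomes a finite-rank linear map between finite-dimensional weighted $\ell^2$ spaces.

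The completeness conditions come first. For (a), the identity $\EE[g(U_t)\mid Z_t=z,S_t=s,A_t=a]=\sum_u g(u)\,P_t(u\mid z,a,s)$ is the $z$-th entry of $P_t(\bU\mid\bZ,a,s)^{\top}g$, so the rank-$|\mU|$ hypothesis on $P_t(\bU\mid\bZ,a,s)$ makes its transpose injective on $\RR^{|\mU|}$ and forces $g\equiv 0$. For (b), I would combine Bayes' rule with the conditional independences $W_t\indep A_t\mid (U_t,S_t)$ (from Assumption~\ref{ass:RewardProxy}) and $Z_t\indep W_t\mid (U_t,S_t,A_t)$ (from Assumption~\ref{ass:ActionProxy}) to derive the factorization
\[
 P_t(\bW\mid\bZ,a,s)\;=\;P_t(\bW\mid\bU,s)\,P_t(\bU\mid\bZ,a,s),
\]
and a further Bayes step relates $P_t(\bZ\mid\bW,a,s)$ to this product via diagonal scalings by the strictly positive marginals. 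The required injectivity then follows from the two full-rank hypotheses after routing through the hidden state: I would define $h(u)\triangleq\EE[g(Z_t)\mid U_t=u,s,a]$, use full rank of $P_t(\bW\mid\bU,s)$ to force $h\equiv 0$, and then use the Bayes-rule counterpart of $P_t(\bU\mid\bZ,a,s)$ to force $g\equiv 0$ on the effective support of $Z_t$.

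For Assumption~\ref{ass:regularity}, the tabular structure makes everything essentially automatic. Condition (a) is a finite double sum of non-negative quantities, hence finite. Condition (b) is a finite sum of squared bounded quantities (since $|R_t|\leq 1$ and any $g\in\mG^{(t+1)}$ is a bounded function on a finite support). For condition (c), the operator $K_{s,a;t}$ is a finite-dimensional linear map with rank at most $|\mU|$, so its singular value decomposition has only finitely many nonzero singular values and the Picard-type series collapses to a finite sum, which is finite term-by-term by condition (b). Condition (d) then holds by choosing $\mG^{(t)}$ to be the space of all real-valued functions on the finite support of $(W_t,S_t)$, which automatically contains $\vpi_t$ and inherits (b)--(c).

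The main obstacle I anticipate is completeness~(b): the product matrix $P_t(\bW\mid\bU,s)\,P_t(\bU\mid\bZ,a,s)$ has rank at most $|\mU|$, which may be strictly less than $|\mZ|$, so a naive ``injective matrix'' argument does not close the proof when $|\mZ|>|\mU|$. The resolution is to carry out the argument through the hidden state as above, and to interpret the almost-sure vanishing in (b) with respect to the effective support induced by the strictly positive conditional marginals guaranteed by Assumption~\ref{ass:technical}~(6) together with the full-rank hypotheses. This is precisely the step where the double-proxy architecture and the asymmetric rank conditions on $P_t(\bW\mid\bU,s)$ and $P_t(\bU\mid\bZ,a,s)$ genuinely interlock, and where one must be careful about the ambient $\mL^2$ spaces against which completeness is stated.
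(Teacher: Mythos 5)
The paper itself does not prove this corollary; it is attributed directly to \citet{shi2020multiply}. Your finite-dimensional linear-algebra reduction is the natural route and matches in spirit what that reference does for the square case $|\mZ|=|\mW|=|\mU|$. Your treatment of completeness~(a) (injectivity of $P_t(\bU\mid\bZ,a,s)^{\top}$) and of the regularity conditions (a)--(d) (finite sums, finite-rank SVD) is correct.

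The step you yourself flag as the obstacle in completeness~(b), however, is a genuine gap, and the proposed ``effective support'' fix does not close it. After routing through the hidden state and concluding $h(u)\triangleq\EE[g(Z_t)\mid U_t=u,s,a]\equiv 0$, the remaining map $g\mapsto h$ is given by the $|\mU|\times|\mZ|$ matrix $[P_t(z\mid u,s,a)]_{u,z}$, whose rank is at most $|\mU|$ regardless of the rank hypotheses; its kernel therefore has dimension at least $|\mZ|-|\mU|$. Moreover, the assumed full rank of $P_t(\bU\mid\bZ,a,s)$ already forces the effective alphabet of $Z_t$ given $(s,a)$ to have at least $|\mU|$ elements, and whenever it has strictly more, those kernel elements are nonzero almost surely. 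Abstractly: $K_{s,a;t}^{*}$ factors through the $|\mU|$-dimensional space $\mL^2\{\mu_{U_t\mid S_t,A_t}\}$ (using $Z_t\indep W_t\mid U_t,S_t,A_t$), so it cannot be injective on a space of larger effective dimension, and Assumption~\ref{ass:completeness}(b) \emph{as literally stated} cannot hold when $|\mZ|>|\mU|$. Your argument thus only establishes the corollary when $|\mZ|=|\mU|$, consistent with the equality required in Corollary~\ref{cor:discrete2}. To handle $|\mZ|>|\mU|$ one would need to weaken the conclusion: note that the existence proof in Theorem~\ref{thm:existence} only uses that $\EE[R_t+\vpi_{t+1}(W_{t+1},S_{t+1})\mid Z_t,S_t=s,A_t=a]$ lies in $\Ker(K_{s,a;t}^{*})^{\perp}$, which does hold --- that conditional expectation factors through $U_t$, and every element of $\Ker(K_{s,a;t}^{*})$ has vanishing projection onto $U_t$ --- even when $\Ker(K_{s,a;t}^{*})$ is nontrivial. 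That weakening, rather than full injectivity of $K_{s,a;t}^{*}$, is what a correct proof should target in the rectangular case.
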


\subsection{Assumptions on the uniqueness of bridge functions}

In general, we do not need to impose restrictions on the uniqueness of $V$-bridges 
$\{\vpi_t\}_{t=1}^T$ for policy value identification. To simplify our theoretical analysis on the estimation
error of $V$-bridges, we need the uniqueness of $V$-bridges
$\{\vpi_t\}_{t=1}^T$ and $Q$-bridges $\{\qpi_t\}_{t=1}^T$, which can be ensured
by the following Assumption \ref{ass:completeness2}.

\begin{assumption} \label{ass:completeness2}
  For any square-integrable function $g$ and for any $(s,a)\in\mS\times\mA$,
  $\EE\{g(W_t)\mid Z_t,S_t=s,A_t=a\}=0$ a.s. if and only $g=0$ a.s.
\end{assumption}

\begin{corollary} \label{cor:unique-q}
  Under Assumption \ref{ass:completeness2} and all conditions in Theorem
  \ref{thm:existence}, 
  the $V$-bridges $\left\{ \vpi_t \right\}_{t=1}^T$ 
  that satisfy \eqref{eq:V-bridge} and $Q$-bridges $\{\qpi_t\}_{t=1}^T$ that satisfy \eqref{eq:Q-bridge}
  are both unique. Moreover, they can be
  non-parametrically identified by \eqref{eq:nonpara-identification}. 
\end{corollary}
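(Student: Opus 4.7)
The plan is to establish uniqueness first for $Q$-bridges and then for $V$-bridges by converting each defining moment restriction, which conditions on the unobserved $U_t$, into an equivalent moment restriction conditioning on the action proxy $Z_t$, after which Assumption \ref{ass:completeness2} delivers uniqueness directly. Once uniqueness is in hand, identification by \eqref{eq:nonpara-identification} is immediate from Theorem \ref{thm:existence}.

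First I would do the $Q$-bridge case. Suppose $q_1,q_2$ both satisfy \eqref{eq:Q-bridge}, and set $d_t=q_1-q_2$, so that $\EE[d_t(W_t,S_t,A_t)\mid U_t,S_t,A_t]=0$ a.s. By the action-proxy Assumption \ref{ass:ActionProxy}, $Z_t\indep W_t\mid(U_t,S_t,A_t)$, hence
\[
\EE[d_t(W_t,S_t,A_t)\mid Z_t,U_t,S_t,A_t]=\EE[d_t(W_t,S_t,A_t)\mid U_t,S_t,A_t]=0.
\]
Taking iterated expectation over $U_t$ given $(Z_t,S_t,A_t)$ gives $\EE[d_t(W_t,S_t,A_t)\mid Z_t,S_t,A_t]=0$ a.s. Fixing $(s,a)\in\mS\times\mA$ and applying Assumption \ref{ass:completeness2} to the function $w\mapsto d_t(w,s,a)$ yields $d_t(\cdot,s,a)\equiv 0$ a.s., so $q_1=q_2$.

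Next I would handle the $V$-bridge case analogously. For $v_1,v_2$ satisfying \eqref{eq:V-bridge} with $\tilde d_t=v_1-v_2$, we have $\EE[\tilde d_t(W_t,S_t)\mid U_t,S_t]=0$. The reward-proxy Assumption \ref{ass:RewardProxy} implies $W_t\indep A_t\mid(U_t,S_t)$, so we may freely add $A_t$ to the conditioning set, giving $\EE[\tilde d_t(W_t,S_t)\mid U_t,S_t,A_t]=0$. Combining this with the same action-proxy step as above,
\[
\EE[\tilde d_t(W_t,S_t)\mid Z_t,S_t,A_t]=\EE\!\left\{\EE[\tilde d_t(W_t,S_t)\mid U_t,S_t,A_t]\,\middle|\,Z_t,S_t,A_t\right\}=0,
\]
and a second application of Assumption \ref{ass:completeness2} (at each fixed $(s,a)$) yields $\tilde d_t\equiv 0$. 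Uniqueness of both sequences follows for every $t$.

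For the last claim, Theorem \ref{thm:existence} has already shown that $\{\qpi_t\}_{t=1}^T$ obtained by the recursion \eqref{eq:nonpara-identification}, together with $\vpi_t(w,s)=\sum_{a\in\mA}\pi_t(a\mid s)\qpi_t(w,s,a)$, furnishes $Q$- and $V$-bridges satisfying \eqref{eq:Q-bridge} and \eqref{eq:V-bridge}. By the uniqueness just proved, these are the only such bridges, so \eqref{eq:nonpara-identification} non-parametrically identifies them. The only delicate step is the insertion of $A_t$ into the conditioning set in the $V$-bridge argument, which relies crucially on the reward-proxy independence; once this is observed, the rest is a clean two-step reduction to Assumption \ref{ass:completeness2}.
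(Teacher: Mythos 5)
Your argument is correct but takes a genuinely different route from the paper's. The paper's proof considers two sets of $Q$-bridges that both solve the recursive observed-variable equations \eqref{eq:nonpara-identification}, and argues (by an implicit backward induction from $t=T$) that their difference has zero projection onto $(Z_t,S_t,A_t)$, so Assumption \ref{ass:completeness2} makes them coincide; it opens with the remark that it suffices to handle $Q$-bridges and leaves the $V$-bridge uniqueness claim implicit. Your proof instead works directly with the defining moment restrictions \eqref{eq:Q-bridge} and \eqref{eq:V-bridge}, which condition on the unobserved $U_t$, and converts the difference condition into a moment restriction conditional on $Z_t$ by exploiting Assumption \ref{ass:ActionProxy} (action proxy) and, for $V$-bridges, also Assumption \ref{ass:RewardProxy} (reward proxy, to insert $A_t$ into the conditioning set) before invoking completeness. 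What this buys you is twofold: (i) it treats each $t$ separately with no induction and no detour through the recursion, and (ii) it yields a self-contained uniqueness proof for $V$-bridges directly, which the paper effectively derives from $Q$-bridge uniqueness together with the construction $\vpi_t=\sum_a\pi_t(a\mid s)\qpi_t(w,s,a)$ — a step that only covers $V$-bridges arising in that way unless one also argues (as you implicitly do) that all $V$-bridges satisfying \eqref{eq:V-bridge} must coincide. Both proofs rely on the same completeness assumption in the end, but yours makes the role of the proxy independences more explicit, at the cost of needing to invoke Assumptions \ref{ass:RewardProxy}--\ref{ass:ActionProxy} directly, whereas the paper's approach leans on the machinery of \eqref{eq:nonpara-identification} already set up in Theorem \ref{thm:existence}.
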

\begin{proof}
Apparently it suffices to prove the uniqueness of $Q$-bridges $\{\qpi_t\}_{t=1}^T$. If there is another set of $\{\tilde{q}_t^{\pi}\}_{t=1}^T$ that is also a solution to
  \eqref{eq:nonpara-identification}, then
\begin{equation*}
\EE \left\{ \tilde{q}_t^{\pi}(W_t,S_t,A_t) - \qpi_t(W_t,S_t,A_t) \given Z_t,S_t=s,A_t=a \right\} = 0, \quad \text{ a.s.} 
\end{equation*}
By Assumption \ref{ass:completeness2},  $\tilde{q}_t^{\pi}(W_t,s,a) = \qpi_t(W_t,s,a)$ a.s. for all
$(s,a)\in\mS\times\mA$.
\end{proof}

For the tabular case, we have the following corollary for the uniqueness of $V$-bridges and $Q$-bridges.
\begin{corollary}\citep{shi2020multiply}
\label{cor:discrete2}
  Under the conditions in Corollary \ref{cor:discrete}, if $|\mZ| = |\mW| = |\mU|$,
  then Assumptions \ref{ass:completeness}--\ref{ass:completeness2} are satisfied.
\end{corollary}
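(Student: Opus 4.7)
The plan is to exploit the fact that in a tabular setting every conditional expectation operator reduces to multiplication by a stochastic matrix, so completeness becomes invertibility (or full column rank) of the relevant matrix. The key preliminary identity I would establish is the factorization
\[
P_t(\bW \mid \bZ, a, s) \;=\; P_t(\bW \mid \bU, s)\, P_t(\bU \mid \bZ, a, s),
\]
valid for every $(a,s,t)$, obtained by conditioning on $U_t$ and using Assumption \ref{ass:ActionProxy} to drop $Z_t$ from the conditioning set of $W_t$, then Assumption \ref{ass:RewardProxy} to drop $A_t$. Under the hypothesis $|\mZ|=|\mW|=|\mU|$ together with the rank conditions inherited from Corollary \ref{cor:discrete}, both factors on the right-hand side are square $|\mU| \times |\mU|$ matrices of full rank, so the product $P_t(\bW\mid\bZ,a,s)$ is itself invertible.

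Given this, I would verify each completeness statement by translating it into an invertibility statement on the appropriate matrix. For Assumption \ref{ass:completeness} (a), the vector $\{\EE[g(U_t)\mid Z_t=z,S_t=s,A_t=a]\}_z$ equals $P_t(\bU\mid\bZ,a,s)^\top g$, and the assumed full rank of $P_t(\bU\mid\bZ,a,s)$ forces $g=0$. Assumption \ref{ass:completeness2} is the same statement with $W_t$ replacing $U_t$, and follows from the invertibility of $P_t(\bW\mid\bZ,a,s)$ just established. For Assumption \ref{ass:completeness} (b), a Bayes' rule calculation yields
\[
P_t(\bZ\mid \bW,a,s) \;=\; D_W^{-1}\, P_t(\bW\mid \bZ,a,s)^\top\, D_Z,
\]
where $D_W, D_Z$ are diagonal matrices of strictly positive marginals (after discarding any probability-zero categories, without loss of generality). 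Invertibility therefore transfers, and completeness follows.

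Finally, the regularity conditions in Assumption \ref{ass:regularity} are essentially automatic in the finite tabular setting. Conditions (a) and (b) reduce to finite sums of bounded quantities, hence are trivially finite. The operator $K_{s,a;t}$ acts between finite-dimensional Hilbert spaces, so it is automatically compact and admits a finite singular value decomposition whose singular values are all strictly positive by the invertibility argument above; this makes Picard's summability condition in (c) a finite sum. Condition (d) holds by taking $\mG^{(t)}$ to be the full $\mL^2$ space over the finite set $\mW\times\mS$, which trivially contains $\vpi_t$ and inherits (b) and (c). I do not foresee a serious obstacle here: the cardinality and full-rank hypotheses force every relevant matrix to be invertible, so the main step is just to apply the conditional independence relations carefully when marginalizing over $U_t$ to derive the initial factorization.
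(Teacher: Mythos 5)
The paper does not supply its own proof of this corollary; it is cited directly from \cite{shi2020multiply}, so there is no internal argument to compare against. Your reconstruction is correct and is essentially the standard linear-algebraic argument. One observation worth making: Corollary \ref{cor:discrete}, whose hypotheses are contained in those of Corollary \ref{cor:discrete2}, already furnishes Assumptions \ref{ass:completeness} and \ref{ass:regularity}, so the only genuinely new claim to verify here is Assumption \ref{ass:completeness2}. Your treatment of that claim is exactly right: the factorization $P_t(\bW\mid\bZ,a,s)=P_t(\bW\mid\bU,s)\,P_t(\bU\mid\bZ,a,s)$ follows by marginalizing over $U_t$ and applying Assumption \ref{ass:ActionProxy} (to drop $Z_t$ from the conditioning of $W_t$) and then Assumption \ref{ass:RewardProxy} (to drop $A_t$); under $|\mZ|=|\mW|=|\mU|$ both factors are square of full rank $|\mU|$ by the Corollary \ref{cor:discrete} hypotheses, so the product is invertible, and injectivity of the induced map on functions of $W_t$ gives Assumption \ref{ass:completeness2} immediately. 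The re-derivations of Assumptions \ref{ass:completeness}(a)/(b) and \ref{ass:regularity} are correct but redundant given Corollary \ref{cor:discrete}; and the Bayes'-rule sandwiching $P_t(\bZ\mid\bW,a,s)$ between diagonal matrices has the factors placed on sides that depend on your row-indexing convention, but the invertibility-transfer conclusion is unaffected.
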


\section{Additional Results}
\label{sec:additional results}

In this section, we derive finite-sample error bounds for $V$-bridge estimation and OPE when hypothesis spaces $\mH^{(t)}$, $\mG^{(t)}$ and testing space $\mF^{(t)}$ are VC-subgraph classes or RKHSs with exponential eigen-decay. Then we discuss possible choices of proximal variables $W_t$ and $Z_t$.

\subsection{Additional Finite-sample error bounds for $V$-bridge estimation and OPE}
\subsubsection{VC-subgraph class}
\begin{theorem}
  \label{thm:VC}
  Under Assumptions \ref{ass:technical} and \ref{ass: one step transition
  ill-posedness}, and the assumptions in Theorem \ref{thm:one-step} and Corollary
  \ref{cor:VC radii}, with probability at least $1-\zeta$, we have
  \begin{align*}
    \norm{\vpi_1-\hvpi_1}_2  & \lesssim \texttt{ill}_{\max} \times \texttt{trans-ill}\\
    & \qquad\times T^{7/2}\left\{\sqrt{\frac{\max_{1\leq t\leq T}\left\{ \VV(\mF^{(t)}),\VV(\mH^{(t)}),\VV(\mG^{(t+1)}) \right\}}{n}}+\sqrt{\frac{\log(T/\zeta)}{n}}\right\}, \text{and}\\
    |\mV(\pi) - \hat\mV(\pi)| & \lesssim \texttt{ill}_{\max} \times \texttt{trans-ill}\\
    & \qquad\times T^{7/2}\left\{\sqrt{\frac{\max_{1\leq t\leq T}\left\{ \VV(\mF^{(t)}),\VV(\mH^{(t)}),\VV(\mG^{(t+1)}) \right\}}{n}}+\sqrt{\frac{\log(T/\zeta)}{n}}\right\},
  \end{align*}
  where $\texttt{trans-ill}=\max_{1\leq t\leq T}\exp\{a_t\zeta(\alpha_t)\}$ with
  $\zeta(\alpha) = \sum_{t=1}^{\infty} t^{-\alpha}$, and
  $\texttt{ill}_{\max}=\tau_{\pi_1}\max_{1\leq t\leq T}\tau_t\norm{\pi_t/\pi_t^b}_{\infty}^2$.
\end{theorem}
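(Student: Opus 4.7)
\textbf{Proof Proposal for Theorem \ref{thm:VC}.}
The plan is to assemble Theorem \ref{thm:VC} from three pieces that are already in hand: the error decomposition of Theorem \ref{thm:decomposition}, the per-step one-step projected-error bound of Corollary \ref{cor:VC radii}, and the uniform control on the product of one-step transition ill-posedness constants given by Corollary \ref{cor: transition ill-posedness bound}. Since Corollary \ref{cor:VC radii} is stated with probability at least $1-\zeta$ simultaneously for all $1 \le t \le T$ (the union-bound cost is absorbed into the $\sqrt{\log(c_1 T/\zeta)/n}$ term of $\delta_n^{(t)}$ from Theorem \ref{thm:one-step}), we may work on the good event throughout without an extra union bound.

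First I would apply Theorem \ref{thm:decomposition} to write
\begin{align*}
\norm{\vpi_1-\hvpi_1}_2
\;\le\; \bar\tau_1 \sum_{t=1}^{T}
\Bigl\{\textstyle\prod_{t'=1}^{t} C_{t',t'-1}^{(t)}\Bigr\}\,
\tau_t\,\norm{\pi_t/\pi_t^b}_{\infty}\,
\norm{\proj_t(\hmP_t-\mP_t)(\hvpi_{t+1}+R_t)}_2.
\end{align*}
On the good event, Corollary \ref{cor: transition ill-posedness bound} (applied under Assumption \ref{ass: one step transition ill-posedness}) gives $\prod_{t'=1}^{t} C_{t',t'-1}^{(t)} \le \exp\{a_t\zeta(\alpha_t)\} \le \texttt{trans-ill}$ uniformly in $t$, and Corollary \ref{cor:VC radii} gives
\begin{align*}
\norm{\proj_t(\hmP_t-\mP_t)(\hvpi_{t+1}+R_t)}_2
\;\lesssim\;
(T-t+1)^{5/2}\sqrt{\tfrac{\log(c_1 T/\zeta)\,\max\{\VV(\mF^{(t)}),\VV(\mH^{(t)}),\VV(\mG^{(t+1)})\}}{n}}.
\end{align*}
Factoring out $\bar\tau_1 \cdot \max_t \tau_t \norm{\pi_t/\pi_t^b}_{\infty} = \texttt{ill}_{\max}$ and $\texttt{trans-ill}$, taking the maximum of the VC dimensions over $t$ outside the sum, and using $\sum_{t=1}^{T}(T-t+1)^{5/2} = \sum_{k=1}^{T} k^{5/2} \asymp T^{7/2}$, I obtain exactly the first claimed bound.

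For the policy value bound, by Proposition \ref{prop:identification} the target equals $\mV(\pi)=\EE[\vpi_1(W_1,S_1)]$ and the estimator is the empirical mean $\hat\mV(\pi)=n^{-1}\sum_i \hvpi_1(W_{1,i},S_{1,i})$. I would split
\begin{align*}
|\mV(\pi)-\hat\mV(\pi)|
\;\le\;
\bigl|\EE[\vpi_1-\hvpi_1]\bigr|
+\bigl|\EE[\hvpi_1]-n^{-1}\textstyle\sum_i \hvpi_1(W_{1,i},S_{1,i})\bigr|.
\end{align*}
The first term is bounded by $\norm{\vpi_1-\hvpi_1}_1 \le \norm{\vpi_1-\hvpi_1}_2$, so inherits the bound just derived. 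The second term is a Monte-Carlo error of a bounded random variable (since, by Assumption \ref{ass:technical}(4) and the construction in \eqref{eqn: minmax estimation}, $\hvpi_1$ is uniformly bounded), and Hoeffding's inequality contributes a term of order $\sqrt{\log(1/\zeta)/n}$, which is dominated by the first term and can be absorbed into the $\sqrt{\log(T/\zeta)/n}$ factor up to constants. Combining with the union bound used to get the good event (cost $\zeta$), I arrive at the stated bound on $|\mV(\pi)-\hat\mV(\pi)|$ with probability at least $1-\zeta$.

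The main obstacle I anticipate is purely bookkeeping: (i) ensuring that the verification of Assumption \ref{ass:technical}(5) for VC-subgraph classes is compatible with the regularization choices in Theorem \ref{thm:one-step} so that $\eta_n^{(t)}\le\delta_n^{(t)}$ holds, and (ii) being careful that the high-probability event for Corollary \ref{cor:VC radii} already covers all $t$, so no additional factor of $T$ appears inside the logarithm. Neither raises a substantive difficulty beyond what is already packaged in the cited results, so the remainder is a routine triangle-inequality/summation argument as above.
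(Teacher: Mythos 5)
Your bound on $\norm{\vpi_1-\hvpi_1}_2$ follows the paper's own route exactly: Theorem \ref{thm:decomposition} gives the sum, Corollary \ref{cor: transition ill-posedness bound} controls the products $\prod_{t'}C^{(t)}_{t',t'-1}\le\texttt{trans-ill}$, Corollary \ref{cor:VC radii} controls each one-step projected error, and $\sum_{k=1}^T k^{5/2}\asymp T^{7/2}$ finishes it; that part is correct.

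The gap is in your treatment of the policy-value error. You write $|\mV(\pi)-\hat\mV(\pi)|\le|\EE[\vpi_1-\hvpi_1]|+|\EE[\hvpi_1]-\EE_n\hvpi_1|$ and claim the second term is ``Monte-Carlo error of a bounded random variable'' controllable by Hoeffding. This is not a valid application of Hoeffding: $\hvpi_1$ is a function of the \emph{same} batch data $(W_{1,i},S_{1,i})_{i=1}^n$ appearing in the empirical average, so the summands $\hvpi_1(W_{1,i},S_{1,i})$ are neither independent conditionally on $\hvpi_1$ (conditioning on $\hvpi_1$ essentially fixes the data) nor unconditionally i.i.d.\ with a fixed integrand. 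Concentration for a data-dependent function requires a \emph{uniform} bound over the candidate function class, not a pointwise Hoeffding bound. The paper avoids this trap by the three-term split $|\EE\vpi_1-\EE_n\hvpi_1|\le|\EE\vpi_1-\EE_n\vpi_1|+|\EE(\vpi_1-\hvpi_1)|+|\EE_n(\vpi_1-\hvpi_1)-\EE(\vpi_1-\hvpi_1)|$, applying Hoeffding only to the first term (where $\vpi_1$ is deterministic) and invoking a localization result (Theorem~14.20 of \citet{wainwright2019high}) for the third term, producing the extra contribution $\delta_n^{(0)}\bigl(\norm{\vpi_1-\hvpi_1}_2+T\delta_n^{(0)}\bigr)$ with $\delta_n^{(0)}$ the critical radius of the ball $\mG_{C_\mG(T+1)M_\mH}$. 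That extra term happens to be dominated by the $\mL^2$ term in the stated regime (so the final display you wrote is still correct), but your argument as given does not establish it. To fix the proposal, replace the Hoeffding step with the paper's three-term decomposition and the localized empirical-process bound, then check that the new term $\delta_n^{(0)}(\norm{\vpi_1-\hvpi_1}_2+T\delta_n^{(0)})$ is absorbed into the claimed bound.

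Minor notational remark: you correctly use $\texttt{ill}_{\max}=\bar\tau_1\max_t\tau_t\norm{\pi_t/\pi_t^b}_\infty$, which matches the body of the paper; the exponent $2$ and the symbol $\tau_{\pi_1}$ in the statement of Theorem \ref{thm:VC} appear to be typographical slips rather than the intended definition.
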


The proof of Theorem  \ref{thm:VC} is given in Appendix \ref{sec: proofs of main results}.

\subsubsection{RKHS with exponential eigen-decay}
\begin{theorem}
  \label{thm:RKHS exp}
  Under Assumptions \ref{ass:technical} and \ref{ass: one step transition
  ill-posedness}, and the assumptions in Theorem \ref{thm:one-step} and Corollary
  \ref{cor:RKHS radii} (2), with probability at least $1-\zeta$, we have
  \begin{align*}
    \norm{\vpi_1-\hvpi_1}_2  & \lesssim \texttt{ill}_{\max} \times \texttt{trans-ill}\times T^{7/2}\left\{\sqrt{\frac{(\log n)^{1/\min\{\beta_{\mH},\beta_{\mG},\beta_{\mF}\}}}{n}} +  \sqrt{\frac{\log(T/\zeta)}{n}}\right\}, \text{and}\\
    |\mV(\pi) - \hat\mV(\pi)| & \lesssim \texttt{ill}_{\max} \times \texttt{trans-ill}\times T^{7/2}\left\{\sqrt{\frac{(\log n)^{1/\min\{\beta_{\mH},\beta_{\mG},\beta_{\mF}\}}}{n}} +  \sqrt{\frac{\log(T/\zeta)}{n}}\right\}.
  \end{align*}
  where $\texttt{trans-ill}=\max_{1\leq t\leq T}\exp\{a_t\zeta(\alpha_t)\}$ with
  $\zeta(\alpha) = \sum_{t=1}^{\infty} t^{-\alpha}$, and
  $\texttt{ill}_{\max}=\tau_{\pi_1}\max_{1\leq t\leq T}\tau_t\norm{\pi_t/\pi_t^b}_{\infty}^2$.
\end{theorem}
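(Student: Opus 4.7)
The plan is to mirror the proof of Theorem \ref{thm:main}, substituting the polynomial eigen-decay bound from Corollary \ref{cor:RKHS radii}~(1) with the exponential eigen-decay bound from Corollary \ref{cor:RKHS radii}~(2). Since all the structural components (error decomposition, transition ill-posedness control, propagation through horizon) are already in place, the proof is essentially a bookkeeping argument that assembles existing pieces and verifies the exponent on $T$.

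First I would invoke Theorem \ref{thm:decomposition} to obtain
\begin{align*}
\norm{\vpi_1-\hvpi_1}_2 \leq \bar{\tau}_{1}\sum_{t=1}^T \Bigl\{\prod_{t'=1}^{t}C_{t',t'-1}^{(t)}\Bigr\}\tau_t\norm{\pi_t/\pi_t^b}_{\infty}\norm{\proj_t(\hmP_t-\mP_t)(\hvpi_{t+1}+R_t)}_2.
\end{align*}
Next, I would apply Corollary \ref{cor:RKHS radii}~(2) at each step $t$, with a union bound over $t=1,\dots,T$ (absorbed into the $\log(c_1T/\zeta)$ factor by inflating the constant $c_1$), to conclude that with probability at least $1-\zeta$, simultaneously for all $1 \leq t \leq T$,
\begin{align*}
\norm{\proj_t(\hmP_t-\mP_t)(\hvpi_{t+1}+R_t)}_2 \lesssim (T-t+1)^{2.5}\Bigl\{\sqrt{\tfrac{(\log n)^{1/\min\{\beta_{\mH},\beta_{\mG},\beta_{\mF}\}}}{n}} +  \sqrt{\tfrac{\log(c_1T/\zeta)}{n}}\Bigr\}.
\end{align*}
Then I would use Corollary \ref{cor: transition ill-posedness bound} under Assumption \ref{ass: one step transition ill-posedness} to control $\prod_{t'=1}^t C_{t',t'-1}^{(t)} \leq \exp\{a_t\zeta(\alpha_t)\} \leq \texttt{trans-ill}$ uniformly in $t$.

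Combining these three ingredients, each summand is bounded by $\bar{\tau}_1 \cdot \texttt{trans-ill} \cdot \tau_t \norm{\pi_t/\pi_t^b}_\infty \cdot (T-t+1)^{2.5}$ times the exponential eigen-decay rate. Pulling out $\texttt{ill}_{\max}$ and summing the polynomial factor $\sum_{t=1}^{T}(T-t+1)^{2.5} \asymp T^{3.5} = T^{7/2}$ yields the stated bound on $\norm{\vpi_1-\hvpi_1}_2$. For the policy-value bound, I would apply Proposition \ref{prop:identification} to write $|\mV(\pi)-\hat\mV(\pi)| = |\EE[(\vpi_1-\hvpi_1)(W_1,S_1)]| \leq \norm{\vpi_1-\hvpi_1}_2$ (using Jensen's inequality and the fact that the marginal distribution of $(W_1,S_1)$ is absolutely continuous with respect to the training measure), which delivers an identical bound.

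The main obstacle, as in Theorem \ref{thm:main}, is ensuring the $T^{7/2}$ dependence is correctly tracked: the $(T-t+1)^{2.5}$ factor arises from the scaling $(T-t+1)^{-1}$ inside the min-max objective \eqref{eqn: minmax estimation} together with the bound on $\norm{\qpi_t}_{\mH^{(t)}}$ propagating via Assumption \ref{ass:technical}~(2)--(4), and once combined with the outer summation over $t$ it grows as $T^{7/2}$ rather than any larger power. Apart from this accounting, no new technique is required beyond what was already used for the polynomial eigen-decay case; the only substantive substitution is the one-step rate from Corollary \ref{cor:RKHS radii}~(2).
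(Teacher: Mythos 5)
Your argument for the first inequality (the $\mL^2$ bound on $\|\vpi_1-\hvpi_1\|_2$) follows the paper's proof essentially verbatim: decompose via Theorem~\ref{thm:decomposition}, plug in the one-step bound from Corollary~\ref{cor:RKHS radii}~(2) together with a union bound over $t$, control the transition-ill-posedness products via Corollary~\ref{cor: transition ill-posedness bound}, and sum $\sum_{t=1}^T (T-t+1)^{5/2}\asymp T^{7/2}$. That part is correct.

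There is, however, a genuine gap in your argument for the second inequality. You write
\[
|\mV(\pi)-\hat\mV(\pi)| = |\EE[(\vpi_1-\hvpi_1)(W_1,S_1)]| \leq \norm{\vpi_1-\hvpi_1}_2,
\]
but $\hat\mV(\pi)$ is not $\EE[\hvpi_1(W_1,S_1)]$; it is the \emph{empirical} average $n^{-1}\sum_{i=1}^n\hvpi_1(W_{1,i},S_{1,i})$ (see Algorithm~\ref{alg:DetailedFQE}). The equality you assert simply does not hold, and replacing it requires controlling the sampling fluctuation of the empirical mean around its population counterpart \emph{for a data-dependent function} $\hvpi_1$. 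The paper handles this with a three-term decomposition (Appendix~\ref{sec: proofs of main results}):
\[
|\EE\vpi_1 - \EE_n\hvpi_1| \leq \underbrace{|\EE\vpi_1 - \EE_n\vpi_1|}_{(I)} + \underbrace{|\EE(\vpi_1-\hvpi_1)|}_{(II)} + \underbrace{|\EE_n(\vpi_1-\hvpi_1) - \EE(\vpi_1-\hvpi_1)|}_{(III)}.
\]
Term~(II) is the only one your argument covers (via Jensen, $\leq\|\vpi_1-\hvpi_1\|_2$). Term~(I) requires Hoeffding for the fixed function $\vpi_1$, contributing $\bigO(T\sqrt{\log(T/\zeta)/n})$. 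Term~(III) is an empirical-process term over the class $\mG_{C_\mG(T+1)M_\mH}$ (since $\hvpi_1$ is data-dependent), and the paper controls it via Theorem~14.20 of \cite{wainwright2019high}, giving $\lesssim \delta_n^{(0)}(\|\vpi_1-\hvpi_1\|_2 + T\delta_n^{(0)})$ where $\delta_n^{(0)}$ is the critical radius of that ball. Both extra terms are then shown to be dominated by the main rate, so the stated bound still holds---but you need to actually carry out this step; the identity you invoke is false, and the uniform-law argument for~(III) is a substantive ingredient that your proposal omits.
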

The proof of Theorem  \ref{thm:RKHS exp} is given in Appendix \ref{sec: proofs of main results}.

\subsection{Different choices of proxy variables}
Here we first provide several options on how to choose proxy variables $W_t$ and $Z_t$
satisfying basic assumptions \ref{ass: Markovian} --\ref{ass:ActionProxy}. %
Then we discuss their effect on the ill-posedness and one step estimation errors. Finally, we comment on some practical issues.

\paragraph{Choice of $W_{t}$.}
In our confounded POMDP setting, typically we need a reward-inducing
proxy $W_t$ to be separated from the current observations at time $t$ and satisfy
the basic assumptions listed in Appendix \ref{sec: basic assumptions}.
In practice,
$W_t$ can be some environmental variables that are correlated with the outcome $R_t$ but
$A_t$ cannot affect $W_t$ (see Figure \ref{fig:choice of W}). It is worth mentioning that \citet{bennett2021proximal} and
\citet{shi2021minimax} use (part of) the current observed state, i.e., $S_t$ in our
paper, as the reward-inducing
proxy. In their settings, given the current action $A_t$, only the hidden state $U_t$ can affect the next hidden state $U_{t+1}$ (Their $U_t$ is the full state variables in our setting). This requires that the proximal variables $Z_t$ and $W_t$ are able to capture the whole information of their hidden state $U_t$. In this case,  Assumption \ref{ass:completeness} becomes harder to hold.
In our setting, however, we allow part of their $U_t$ to be observable. We denote this part by $S_t$ in our paper. This can alleviate the burden on proximal variables $Z_t$ and $W_t$ to capture the whole information of their hidden $U_t$. Therefore, our completeness assumption \ref{ass:completeness} is relatively weaker.
Moreover, \citet{bennett2021proximal} only consider the evaluation for deterministic
target policies, while in our setting, a separate $W_t$ (other than $S_t$) allows us to
evaluate random target policies.

We list some possible causal relationship among $W_t$, $(U_t,S_t)$ and $R_t$ in Figure \ref{fig:choice of W}. We require the causal relationship between $U_t$ and $W_t$. But the effect of $W_t$ on $R_t$ is optional. In practice, one can use the observed variables that have no direct effect on the action, for example, measurement of action independent disturbance which may not may not affect the current reward.

\begin{figure}[H]
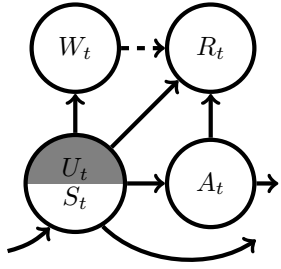

    \centering
    \tikzfig{fig/W1}\\
    \caption{Causal relationship about $W_t$. Dashed arrows: optional causal effect. $W_t$ may or may not affect $R_t$.}
    \label{fig:choice of W}
\end{figure}

\begin{figure}[H]
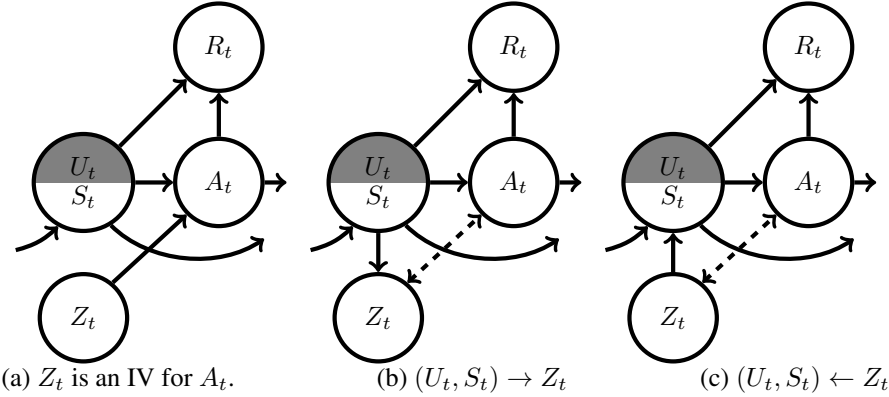

    \centering
    \tikzfig{fig/Z0}
    \tikzfig{fig/Z1}
    \tikzfig{fig/Z2}\\
    (a) $Z_t$ is an IV for $A_t$. ~~~~~~~~~~~~~~~~~~~ (b) $(U_t,S_t)\rightarrow Z_t$ ~~~~~~~~~~~~~~~~~~~(c) $ (U_t,S_t)\leftarrow Z_t$
    \caption{Causal relationship about $Z_t$. Dashed arrows: optional causal effect $Z_t\rightarrow A_t$ or $Z_t\leftarrow A_t$ or no causal effect. (c) is incompatible with Figure \ref{fig:choice of W} (b).}
    \label{fig:choice of Z}
\end{figure}

\begin{figure}[H]
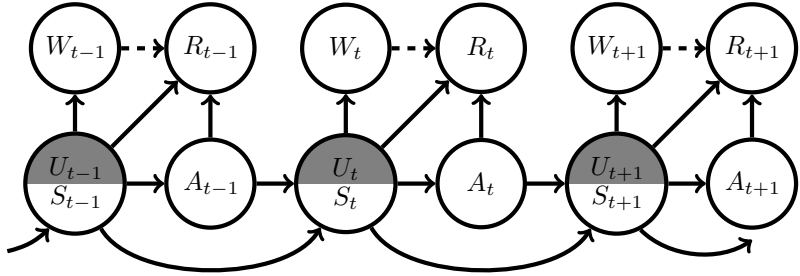

    \centering
    \tikzfig{fig/history}
    \caption{An example of $Z_t$ as the observed history.}
    \label{fig:Z history}
\end{figure}

\paragraph{Choice of $Z_{t}$.}
Once we determine $W_t$, there are several proper choices of
$Z_t$ that are compatible with $W_t$ (see Figure \ref{fig:choice of Z}). 
One choice of $Z_t$ is the observed history up to step $t-1$, e.g., $Z_t= (Z_{t-1};S_{t-1},W_{t-1},A_{t-1},R_{t-1})$ with some 
pre-observed history before $(U_1,S_1)$ as $Z_1$.
See Figure \ref{fig:Z history} for a valid example. In this case 
$Z_{t+1}$ contains information of $Z_t$ so that we expect that
$C_{t',t'+1}^{(t)}$ tends to be smaller. However, this can enlarge the one-step
errors $M_{\mH}(T-t+1)^2 \left( \bar\delta_n^{(t)}+c_0
  \sqrt{\frac{\log(c_1T/\zeta)}{n}} \right)$, where the upper bound of critical
radii $\bar\delta_n^{(t)}$ becomes larger because the dimension of testing space
$\mF^{(t)}(\mZ\times\mA\times\mS)$ is now $\bigO(t)$. Fortunately, these one-step errors only contribute to the final error bound for $\calV(\pi)$ linearly.

In practice, to reduce the dimension of $Z_t$, one may use the most recent $k$-step observed
history, or try to learn a low dimensional representation $\phi(Z_t)$ of
$Z_t$ and then replace $\mF^{(t)}(\mZ\times\mA\times\mS)$ by
$\tilde\mF^{(t)}(\phi(\mZ)\times\mA\times\mS)$ in \eqref{eqn: minmax estimation}. Similar ideas have been used in kernel IV regression \citep{singh2020kernel}.

\section{Technical Proofs}
\label{sec:proofs}

In this section, we provide the proofs of identification result in Section \ref{sec:prelim} and the finite sample bounds for $V$-bridges and OPE in Section \ref{sec:theory}.%

\subsection{Proof of Theorem \ref{thm:existence}}
\begin{proof}[\unskip\nopunct]
\textbf{Part I.}~~~  
  We suppose there exists $\qpi_t$ satisfying \eqref{eq:nonpara-identification}, $1\leq
  t\leq T$. Define $\vpi_{T+1}=0$. Then
\begin{align*}
  &\EE \left\{ R_t + \vpi_{t+1}(W_{t+1},S_{t+1}) \given Z_t,S_t,A_t \right\}\\
 = & \EE \left[ \EE \left\{ R_t + \vpi_{t+1}(W_{t+1},S_{t+1})\given U_t,Z_t,S_t,A_t \right\}\given Z_t,S_t,A_t \right]\\
 = & \EE \left[ \EE \left\{ R_t + \vpi_{t+1}(W_{t+1},S_{t+1})\given U_t,S_t,A_t \right\}\given Z_t,S_t,A_t \right] \text{ by Assumption \ref{ass:ActionProxy},}
\end{align*}
and
\begin{align*}
   &\EE \left\{ \qpi_t(W_t,S_t,A_t)\given Z_t,S_t,A_t \right\}\\
 = & \EE \left[ \EE \left\{ \qpi_t(W_t,S_t,A_t)\given U_t, Z_t,S_t,A_t \right\}\given Z_t,S_t,A_t \right]\\
 = & \EE \left[ \EE \left\{ \qpi_t(W_t,S_t,A_t)\given U_t, S_t,A_t \right\}\given Z_t,S_t,A_t \right] \text{ by Assumption \ref{ass:ActionProxy}.}
\end{align*}

Therefore, by  Assumption \ref{ass:completeness} (a), we have
\begin{equation} \label{eq:bellman-like}
\EE \left\{ R_t + \vpi_{t+1}(W_{t+1},S_{t+1})\given U_t,S_t,A_t \right\} = \EE \left\{ \qpi_t(W_t,S_t,A_t)\given U_t,S_t,A_t \right\} \quad \text{ a.s.} 
\end{equation}
We will use this Bellman-like equation \eqref{eq:bellman-like} to verify
\eqref{eq:V-bridge} and \eqref{eq:Q-bridge}.

Next, we prove that such these $\{\qpi_t,\vpi_t\}_{t=1}^T$ obtained by 
Algorithm \ref{alg:identification} can be used
as $Q$-bridges \eqref{eq:Q-bridge} and $V$-bridges \eqref{eq:V-bridge}. 

First, at time $T$,
\begin{align*}
  \EE^{\pi} \left( R_T\given U_T,S_T \right) &= \sum_{a_T\in\mA}\EE \left( R_T\given U_T,S_T,A_T=a_T \right)\pi_T(a_T\given S_T)\\
  =&\sum_{a_T\in\mA}\EE \left\{ \qpi_T(W_T,S_T,a_T) \given U_T,S_T,A_T=a_T\right\}\pi_T(a_T\given S_T) \text{ by \eqref{eq:bellman-like}}\\
  =&\sum_{a_T\in\mA}\EE \left\{ \qpi_T(W_T,S_T,a_T) \given U_T,S_T\right\}\pi_T(a_T\given S_T) \text{ by Assumption \ref{ass:RewardProxy}}\\
  =&\EE \left\{\sum_{a_T\in\mA} \pi(a_T\given S_T) \qpi_T(W_T,S_T,a_T) \Given U_T,S_T\right\}\\
  =&\EE \left\{ \vpi_T(W_T,S_T)\given U_T,S_T\right\} \text{ by definition of $\vpi_T$}.
\end{align*}
By induction, suppose that at time $t+1$, $\EE^{\pi}\left[\sum_{t'=t+1}^T R_{t'}\given
S_{t+1},U_{t+1}\right] = \EE \left\{ \vpi_{t+1}(W_{t+1},S_{t+1})\given S_{t+1},U_{t+1}
\right\}$. Then at time $t$,
\begin{align*}
  &\EE^{\pi} \left( \sum_{t'=t}^T R_{t'} \Given U_t,S_t \right)\\
  =&\EE^{\pi} \left\{ R_t + \EE^{\pi}\left( \sum_{t'=t+1}^T R_{t'}\Given U_{t+1},S_{t+1},U_t,S_t\right) \Given U_t,S_t \right\}\\
  =&\EE^{\pi} \left\{ R_t + \EE^{\pi}\left( \sum_{t'=t+1}^T R_{t'}\Given U_{t+1},S_{t+1}\right) \Given U_t,S_t \right\} \text{ by  Assumption \ref{ass: Markovian}}\\
  =&\EE^{\pi} \left\{ R_t + \EE\left(\vpi_{t+1}(W_{t+1},S_{t+1})\Given U_{t+1},S_{t+1}\right) \Given U_t,S_t \right\}\\
  =&\EE^{\pi} \left\{ R_t + \EE\left(\vpi_{t+1}(W_{t+1},S_{t+1})\Given U_{t+1},S_{t+1},U_t,S_t\right) \Given U_t,S_t \right\}\text{ by  Assumption \ref{ass:RewardProxy}}\\
  =&\EE^{\pi} \left\{ R_t+\vpi_{t+1}(W_{t+1},S_{t+1}) \given U_t,S_t\right\}\text{ by the law of total expectation and Assumption \ref{ass:RewardProxy}}\\
  =&\sum_{a_t\in\mA}\EE \left\{ R_t + \vpi_{t+1}(W_{t+1},S_{t+1}) \given U_t,S_t,A_t=a_t\right\}\pi_t(a_t\given S_t)\\
  =&\sum_{a_t\in\mA}\EE \left\{ \qpi_t(W_t,S_t,a_t) \given U_t,S_t,A_t=a_t\right\}\pi_t(a_t\given S_t) \text{ by \eqref{eq:bellman-like}}\\
  =&\sum_{a_t\in\mA}\EE \left\{ \qpi_t(W_t,S_t,a_t) \given U_t,S_t\right\}\pi_t(a_t\given S_t) \text{ by Assumption \ref{ass:RewardProxy}}\\
  =&\EE \left\{\sum_{a_t\in\mA} \pi(a_t\given S_t) \qpi_t(W_t,S_t,a_t) \Given U_t,S_t\right\}\\
  =&\EE \left\{ \vpi_t(W_t,S_t)\given U_t,S_t\right\} \text{ by definition of $\vpi_t$}.
\end{align*}
Therefore \eqref{eq:V-bridge} hold for all $1\leq
t\leq T$. The validity of $Q$-bridge \eqref{eq:Q-bridge} can be similarly verified by restricting on $A_t=a$, for each $a\in\mA$.

\textbf{Part II.}~~~ Now we prove the existence of the solution to  \eqref{eq:nonpara-identification}.

For $t=T,\dots,1$,
by Assumption \ref{ass:regularity} (a), $K_{s,a;t}$ is a compact operator for
each $(s,a)\in\mS\times\mA$
\citep[Example 2.3]{carrasco2007linear}, so there exists a singular value
system stated in Assumption \ref{ass:regularity} (c) by Lemma \ref{lem:Picard}.
Then by Assumption \ref{ass:completeness} (b), we have ${\rm
Ker}(K_{s,a;t}^{*})={0}$, since for any $g\in{\rm Ker}(K_{s,a;t}^{*})$, we have, 
by the definition of ${\rm Ker}$, $K_{s,a;t}^{*}g = \EE \left[ g(Z_t)\given
  W_t,S_t=s,A_t=a\right]=0$, which implies that $g=0$ a.s. Therefore ${\rm
Ker}(K_{s,a;t}^{*})={0}$ and ${\rm Ker}(K_{s,a;t}^{*})^{\perp} =
\mL^2(\mu_{Z_t\given S_t,A_t}(z\given s,a))$. By Assumption \ref{ass:regularity}
(b), $\EE \left\{ R_t+g(W_{t+1},S_{t+1})\given Z_t=\cdot,S_t=s,A_t=a \right\}\in
{\rm Ker}(K_{s,a,;t}^{*})$ for given $(s,a)\in\mS_t\times\mA$ and any
$g\in\mG^{(t+1)}$. Now we have verified the condition (a) in Lemma \ref{lem:Picard}.
The condition (b) is satisfied given Assumption \ref{ass:regularity} (c).
Recursively applying the above argument from $t=T$ to $t=1$ yields the existence
of the solution to \eqref{eq:nonpara-identification}.

\end{proof}

\subsection{Proof of Theorem \ref{thm:decomposition}}
By definition and Assumptions \ref{ass:completeness}--\ref{ass:completeness2}, $\Ppi_t$, $t=1,\dots,T$, are linear operators, i.e.,
$\Ppi_t(\alpha_1g_1+\alpha_2g_2)=\alpha_1\Ppi_tg_1 + \alpha_2\Ppi_tg_2$, for any $\alpha_1,\alpha_2\in\RR$ and $g_1, g_2 \in \mL^2(\mR\times\mW\times\mS)$.

We first decompose $\hvpi_t-\vpi_t$ into a summation of projections of one-step
error. Then we bound each one-step error by the projected errors times a
product of transition ill-posedness.

\subsubsection{Decomposition of $\mL^2$-error of $\vpi_1$}
Following the identification procedure in Algorithm \ref{alg:DetailedFQE}, we can decompose $\vpi_1$ by
\begin{equation*}
\vpi_1 = \Ppi_1(R_1+\vpi_2) = \Ppi_1(R_1+\Ppi_2(R_2+\vpi_3)) = \dots =\Ppi_1(R_1+\Ppi_2(R_2 +\Ppi_3(\dots+\Ppi_TR_T))).
\end{equation*}
Similarly, according to Section \ref{sec:estimation}, we have the empirical version
\begin{equation*}
  \hvpi_1 = \hPpi_1(R_1+\hvpi_2) = \hPpi_1(R_1+\hPpi_2(R_2+\hvpi_3)) = \dots =\hPpi_1(R_1+\hPpi_2(R_2 +\hPpi_3(\dots+\hPpi_TR_T))).
\end{equation*}
Then for each $t=1,\dots,T$, we can decompose $\hvpi_t-\vpi_t$ as
\begin{align*}
  \hvpi_t-\vpi_t &= \hPpi_t(R_t+\hvpi_{t+1}) - \Ppi_t(R_t+ \vpi_{t+1})\\
                 &= [\hPpi_t(R_t+\hvpi_{t+1}) - \Ppi_t(R_t+ \hvpi_{t+1})] + [\Ppi_t(R_t+\hvpi_{t+1})-\Ppi_t(R_t+ \vpi_{t+1})]\\
                 &\triangleq g_t + [\Ppi_t(R_t+\hvpi_{t+1})-\Ppi_t(R_t+ \vpi_{t+1})]\\
                 &= g_t + \Ppi_t[\hvpi_{t+1} -\vpi_{t+1}],
\numit\label{eq:decompose1}
\end{align*}
where the last equality is due the the linearity of $\Ppi_t$, and
$\vpi_{T+1}=\hvpi_{T+1}\triangleq 0$. Recursively we have
\begin{equation}
  \label{eq:decompose2}
  \hvpi_1 - \vpi_1 = g_1 + \Ppi_1g_2 + \Ppi_{1:2}g_3+\dots +\Ppi_{1:T-1}g_T,
\end{equation}
where $\Ppi_{t':t}\triangleq\Ppi_{t'}\dots\Ppi_t$. If $t<t'$, $\Ppi_{t':t}\triangleq\mI$, the identity operator.

By the definition of the ill-posedness and combining the above decomposition, we
can obtain the discrepancy between $\hvpi_1$ and $\vpi_1$: 
\begin{align*}
  \norm{\vpi_1 - \hvpi_1}_2 &\leq \bar{\tau}_{1} \norm{\EE(\vpi_1-\hvpi_1\given Z_1,S_1)}_2\\
&\leq \bar{\tau}_{1}\sum_{t=1}^T\norm{\EE(\Ppi_{1:t-1}g_t\given Z_1,S_1)}_2 \quad \text{by the triangular inequality},
\end{align*}
where $\bar{\tau}_{_1}=\sup_{g_1\in\mG^{(1)}}
\frac{\norm{g_1}_2}{\norm{\EE[g_1(W_1,S_1)\given Z_1,S_1]}_2}$.
This indicates that we only need to separately bound the $\mL^2$ norm of the projected one-step error defined as
\begin{equation}
  \label{eq: by-step error}
  \norm{\EE[\Ppi_{1:t-1}g_t\given Z_1,S_1]}_2=\norm{\EE[\Ppi_{1:t-1}(\hPpi_t-\Ppi_t)(R_t+\hvpi_{t+1})\given Z_1,S_1]}_2,
\end{equation}
for each $t=1,\dots,T$.

\subsubsection{Error bounds for projected one-step error}
To study the one-step $\mL^2$ projected error of \eqref{eq: by-step error}, for
each $t=1,\dots,T$,
motivated by \eqref{eq: by-step error}, we sequentially define the following functions:
\begin{align*}
  g_t &\triangleq (\hPpi_t-\Ppi_t)(\hvpi_{t+1}+R_t),\\
 g_{t,t-1} &\triangleq \Ppi_{t-1}g_t,\\
 g_{t,t-2} &\triangleq \Ppi_{t-2}g_{t,t-1}= \Ppi_{t-2:t-1}g_t,\\
  &\vdots\\
 g_{t,1} &\triangleq \Ppi_1g_{t,2}=\Ppi_{1:t-1}g_t.
\end{align*}
For each $1\leq t'<t$,
\begin{align*}
  & \quad \norm{\EE[g_{t,t'}(W_{t'},S_{t'})\given Z_{t'},S_{t'}]}_2  = \norm{\EE\{[\Ppi_{t'}g_{t,t'+1}](W_{t'},S_{t'})\given Z_{t'},S_{t'}\}}_2\\
  &= \norm{\EE^{\pi_{t'}}[g_{t,t'+1}(W_{t'+1},S_{t'+1})\given Z_{t'},S_{t'}]}_2\\
  &\leq C_{t'+1,t'}^{(t)}\norm{\EE[g_{t,t'+1}(W_{t'+1},S_{t'+1})\given Z_{t'+1},S_{t'+1}]}_2,
\end{align*}
where the local transition ill-posedness $C_{t'+1,t'}^{(t)}$ will be defined
later in \eqref{eq:local ill}, %
and the second equality is due to
\begin{align*}
    & \quad\EE\{[\Ppi_{t'}g_{t,t'+1}](W_{t'},S_{t'})\given Z_{t'},S_{t'}\}\\
    &= \EE\left\{\sum_{a\in\mA}\pi_{t'}(a\given S_{t'})[\mP_{t'}g_{t,t'+1}](W_{t'},S_{t'},A_{t'}=a)\given Z_{t'},S_{t'}\right\}\\
    &= \sum_{a\in\mA}\pi_{t'}(a\given S_{t'})\EE\left\{[\mP_{t'}g_{t,t'+1}](W_{t'},S_{t'},A_{t'}=a)\given Z_{t'},S_{t'}\right\}\\
    &= \sum_{a\in\mA}\pi_{t'}(a\given S_{t'})\EE\left\{[\mP_{t'}g_{t,t'+1}](W_{t'},S_{t'},A_{t'})\given Z_{t'},S_{t'},A_{t'}=a\right\}\\
    &= \sum_{a\in\mA}\pi_{t'}(a\given S_{t'})\EE\left\{g_{t,t'+1}(W_{t'+1},S_{t'+1})\given Z_{t'},S_{t'},A_{t'}=a\right\} \text{  by Q-bridge}\\
    &= \EE^{\pi_{t'}}\left\{g_{t,t'+1}(W_{t'+1},S_{t'+1})\given Z_{t'},S_{t'}\right\}. 
\end{align*}

Then by induction, we can show that
\begin{align*}
\norm{\EE[\Ppi_{1:t-1}g_t\given Z_1,S_1]}_2 \leq C_{2,1}^{(t)}\dots C_{t,t-1}^{(t)} \norm{\EE[g_t\given Z_t,S_t]}_2
\end{align*}
Therefore,
\begin{align*}
  \norm{\vpi_1 - \hvpi_1}_2 &\leq \bar{\tau}_{1}\sum_{t=1}^T\norm{\EE[\Ppi_{1:t-1}g_t\given Z_1,S_1]}_2\\
                            &\leq \bar{\tau}_{1}\sum_{t=1}^T C_{2,1}^{(t)}\dots C_{t,t-1}^{(t)}\norm{\EE[g_t\given Z_t,S_t]}_2
\end{align*}
Then for each $t=1,\dots,T$, we need to bound 
\begin{align*}
\norm{\EE[(\hPpi_t-\Ppi_t)(R_t+\hvpi_{t+1})\given Z_t,S_t]}_{2}  & \leq \norm{(\hPpi_t - \Ppi_t)(R_t+\hvpi_{t+1})}_{2}\\
& \leq \norm{(\hmP_t - \mP_t)(R_t+\hvpi_{t+1})}_{2}\norm{\pi_t/\pi_t^b}_\infty\\
& \leq \tau_t\norm{\proj_t(\hmP_t - \mP_t)(R_t+\hvpi_{t+1})}_{2}\norm{\pi_t/\pi_t^b}_\infty,
\numit\label{eq:transition bound}
\end{align*}
where $\tau_t$ is the local ill-posedness constant at step $t$, defined in \eqref{eq:taut}.

Finally, we have
\begin{equation*}
  \norm{\vpi_1-\hvpi_1}_2 \leq \bar{\tau}_{1}\sum_{t=1}^T \left\{ \prod_{t'=1}^t C_{t',t'-1}^{(t)} \right\}  \tau_t\norm{\pi_t/\pi_t^b}_{\infty}\norm{\proj_t(\hmP_t-\mP_t)(R_t+\hvpi_{t+1})}_2.
\end{equation*}

\subsection{Proof of Theorem \ref{thm:one-step}}
For $t=T,\dots,1$, we iteratively bound $\norm{\proj_t(\hmP_t - \mP_t)(R_{t}+\hvpi_{t+1})}_2$ by
applying Lemma \ref{lem:CMM}, which depends on the critical radius of the space
that contains $\hvpi_{t+1}$ from the last step $t+1$. Then we give the bound of
$\norm{\hvpi_t}_{\mG^{(t)}}^2$, which will be used to calculate critical radii in next step.

\subsubsection{One-step error bound}
\label{sec: one-step error bound}
\textbf{Start from $t=T$,} $\hvpi_{T+1}=\vpi_{T+1}\triangleq 0$.
By Lemma \ref{lem:dikkala}, we have with probability at least $1-3\zeta$,

\begin{align*}
  \norm{\proj_T(\hmP_T-\mP_T)R_T}_2 & \lesssim\delta_n^{(T)}[1+\norm{\qpi_T}_{\mH^{(T)}}^2]\\
                                    &\leq \delta_n^{(T)}[1+M_{\mH}],
\end{align*}
and
\begin{align*}
\norm{\hqpi_T}_{\mH^{(T)}}^2 = \norm{\hmP_TR_T}_{\mH^{(T)}}^2 \leq \norm{\mP_TR_T}_{\mH^{(T)}}^2 + C = \norm{\qpi_T}_{\mH^{(T)}}^2 + C \leq 2M_{\mH},
\end{align*}
by Assumption \ref{ass:technical} (4) and we let $M_{\mH}\geq C$.

\textbf{Iteratively, at time $1\leq t < T$,} by Lemma \ref{lem:CMM},
we have with probability at least $1-4\zeta$,
\begin{align*}
  \norm{\proj_t(\hmP_t-\mP_t)[R_t+\hvpi_{t+1}]}_2 &\lesssim (T-t+1)\delta_n^{(t)}[1+\norm{\mP_t \left(  \frac{R_t+\hvpi_{t+1}}{T-t+1}\right)}_{\mH^{(t)}}^2]\\
&\leq (T-t+1)\delta_n^{(t)}[1+(T-t+1)M_{\mH}],\\
  &\lesssim M_{\mH}(T-t+1)^2\delta_n^{(t)},
\end{align*}
where the second inequality is due to Assumption \ref{ass:technical} (2), $\norm{\mP_t \left(  \frac{R_t+\hvpi_{t+1}}{T-t+1}\right)}_{\mH^{(t)}}^2 \leq \norm{\frac{\hqpi_{t+1}}{T-t}}_{\mH^{(t+1)}}^2\leq (T-t+1)M_{\mH}$.

Also, 
\begin{align*}
  \norm{\frac{\hqpi_t}{T-t+1}}_{\mH^{(t)}}^2 = \norm{\hmP_t \left( \frac{R_t+\hvpi_{t+1}}{T-t+1} \right)}_{\mH^{(t)}}^2&\leq \norm{\mP_t \left( \frac{R_t+\hvpi_{t+1}}{T-t+1} \right)}_{\mH^{(t)}}^2 + M_{\mH}\\
&\leq (T-t+2)M_{\mH},
\end{align*}
where $\delta_n^{(t)}=\bar\delta_n^{(t)} + c_0\sqrt{\log(c_1/\zeta)/n}$, $c_0, c_1>0$, $\delta_n^{(t)}$ upper bounds the critical radii of
$\mF_{3M}^{(t)}(\mZ_t\times\mS_t\times\mA_t)$, $\bOmega^{(t)}$ and $\bXi^{(t)}$. %

Since $\norm{\frac{\hvpi_{t}}{T-t+1}}_{\mG^{(t)}}^2\leq C_{\mG} \norm{\frac{\hqpi_{t}}{T-t+1}}_{\mH^{(t)}}^2$ by Assumption \ref{ass:technical} (3), we have that $\norm{\frac{\hvpi_{t}}{T-t+1}}_{\mG^{(t)}}^2\leq C_{\mG}\norm{\frac{\hqpi_{t}}{T-t+1}}_{\mH^{(t)}}^2\leq C_{\mG}\norm{\frac{\hqpi_{t+1}}{T-t}}_{\mH^{(t+1)}}^2\leq C_{\mG}(T-t+2)M_{\mH}$. Therefore $\frac{\hvpi_{t}}{T-t+1}\in\mG_{C_{\mG}(T-t+2)M_{\mH}}^{(t)}$.

\subsubsection{Combined Result}
Finally, we replace $\zeta$ by $\zeta/(4T)$ and redefine $\delta_n^{(t)}=\bar\delta_n^{(t)} + c_0\sqrt{\log(c_1 T/\zeta)/n}$  for $t=1,\dots,T$, and consider
the intersection of above events, we have with
probability at least $1-\zeta$,
\begin{equation*}
\norm{\proj_t(\hmP_t-\mP_t)(\hvpi_{t+1}+R_t)}_2 \lesssim M_{\mH}(T-t+1)^2\delta_n^{(t)},
\end{equation*}
uniformly for all $1\leq t\leq T$.

\subsection{Localized ill-posedness $\tau_t$ and one-step transition ill-posedness $C_{t',t'-1}^{(t)}$}
\paragraph{Localized ill-posedness.}
By Theorem \ref{thm:one-step} and \eqref{eq:transition bound}, we have that with
probability at least $1-\zeta$,
\begin{equation*}
\norm{\EE [(\hPpi_t-\Ppi_t)(R_t+\hvpi_{t+1})\given Z_t,S_t]}_2 \lesssim \tau_t(T-t+1)^2 M_{\mH}\delta_n^{(t)}\norm{\pi_t/\pi_t^b}_{\infty},
\end{equation*}
uniformly for all $1\leq t\leq T$, where we define the local ill-posedness \citep{chen2011rate}
\begin{align*}
\tau_t  \triangleq \sup_{h\in\mH^{(t)}}\frac{\norm{h}_2}{\norm{\proj_th}_2}\ 
  \text{subject to }&\  \norm{\proj_th}_2 \lesssim (T-t+1)^2 M_{\mH}\delta_n^{(t)},\\
  & \norm{h}_{\mH^{(t)}}^2 \lesssim (T-t+1)^3 M_{\mH},
\numit\label{eq:taut}
\end{align*}
where the bounds for $\norm{\proj_th}_2$ and $\norm{h}_{\mH^{(t)}}^2$ are adapted from above results in Appendix \ref{sec: one-step error bound}.

We show that under further assumption on the joint distribution of
$(S_t,A_t,W_t,Z_t)$, for RKHS $\mH^{(t)}$ with kernel $K_{\mH^{(t)}}$, the local ill-posedness can be properly controlled.
By Mercer's theorem with some regularity conditions, for any $h\in\mH^{(t)}$, we have
\begin{equation*}
h = \sum_{j=1}^{\infty} a_j e_j,
\end{equation*}
where \{$e_j: \mW\times\mS\times\mA\rightarrow \RR\}$ are the eigenfunctions of kernel $K_{\mH^{(t)}}$
corresponding to nonincreasing eigenvalues $\{\lambda_j\triangleq\lambda_j^{\downarrow}(K_{\mH^{(t)}})\}$. Then we have
$\norm{h}_2^2 = \sum_ja_j^2$ and $\norm{h}_{\mH}^2 = \sum_j a_j^2/\lambda_j$.
\begin{equation*}
\norm{\proj_t h}_2^2 = \sum_{i,j} a_ia_j \EE \left\{ \EE[e_i(W_t,S_t,A_t)\given Z_t,S_t,A_t] \EE[e_j(W_t,S_t,A_t)\given Z_t,S_t,A_t] \right\}.
\end{equation*}
For $m\in\NN_+$, let $I = \left\{ 1,\dots,m \right\}$, $e_I = (e_1,\dots,e_m)$ and
$a_I = (a_1,\dots,a_m)$ and define
\begin{equation*}
\Gamma_m \triangleq \EE \left\{ \EE[e_I(W_t,S_t,A_t)\given Z_t,S_t,A_t] \EE[e_I(W_t,S_t,A_t)\given Z_t,S_t,A_t]^{\top} \right\}.
\end{equation*}
With same argument as \citet{dikkala2020minimax}, we impose the assumption that $\lambda_{\min}(\Gamma_m)\geq \nu_m$ for all $m$ almost surely, which means that the
projected eigenfunctions are not strongly dependent. And we further
assume that for all $i\leq m <j$,
\begin{equation}\label{eq: projection condition}
\left| \EE \left\{ \EE[e_i(W_t,S_t,A_t)\given Z_t,S_t,A_t] \EE[e_j(W_t,S_t,A_t)\given Z_t,S_t,A_t] \right\} \right| \leq c \nu_m,
\end{equation}
for some constant $c>0$. This implies that the projection does not destroy the
orthogonality for the first $m$ eigenfunctions and eigenfunctions with indices
larger than $m$ too much. Then we can bound the local measure of ill-posedness as follow.

\begin{lemma} [\citet{dikkala2020minimax}, Lemma 11]
  \label{lem: ill-posedness bound}
  Suppose that $\lambda_{\min}(\Gamma_m)\geq \nu_m$ and \eqref{eq: projection condition} holds for
  all $i\leq m <j$ and some constant $c>0$. Then
\begin{equation*}
[\tau_t^{*}(\delta,B)]^2 \triangleq \max_{h\in\mH_B^{(t)}:\norm{\proj_t h}_2\leq \delta}\norm{h}_2^2\leq  \min_{m\in\NN_+}\left\{ \delta^2/\nu_m + B\left(2c\sqrt{\sum_{i=1}^{\infty}\lambda_i} \times \sqrt{\sum^{\infty}_{j=m+1}\lambda_j} + \lambda_{m+1}\right) \right\}.
\end{equation*}
The optimal $m_{*}$ is such that $\delta^2/\nu_m \asymp B\left(2c\sqrt{\sum_{i=1}^{\infty}\lambda_i} \sqrt{ \sum_{j=m+1}^{\infty} \lambda_j} + \lambda_{m+1}\right)$.
\end{lemma}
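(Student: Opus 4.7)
The plan is to use the Mercer-type eigenexpansion $h=\sum_{j\ge 1}a_j e_j$ already set up just before the lemma, split $h=h_{\le m}+h_{>m}$ into low- and high-frequency parts at the cutoff $m$, bound the two pieces separately using the two constraints $\norm{h}_{\mH^{(t)}}^2\le B$ and $\norm{\proj_t h}_2\le \delta$, and finally take the infimum over $m\in\NN_+$. Because the estimate we need is uniform over the constraint set, everything is done pointwise in $h$.

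First, the high-frequency tail is controlled purely by the RKHS constraint. Since the eigenvalues $\{\lambda_j\}$ are nonincreasing and $\sum_j a_j^2/\lambda_j=\norm{h}_{\mH^{(t)}}^2\le B$, one has $\sum_{j>m}a_j^2=\sum_{j>m}\lambda_j\cdot(a_j^2/\lambda_j)\le\lambda_{m+1}\sum_{j>m}a_j^2/\lambda_j\le B\lambda_{m+1}$, which provides the $B\lambda_{m+1}$ term in the stated bound without invoking the projection hypothesis.

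For the low-frequency part, write $T$ for the conditional-expectation operator so that $\norm{\proj_t h}_2=\norm{Th}_2\le\delta$. The hypothesis $\lambda_{\min}(\Gamma_m)\ge\nu_m$ yields $\nu_m\sum_{j\le m}a_j^2\le a_I^{\top}\Gamma_m a_I=\norm{Th_{\le m}}_2^2$. Expanding $\norm{Th}_2^2=\norm{Th_{\le m}}_2^2+2\langle Th_{\le m},Th_{>m}\rangle+\norm{Th_{>m}}_2^2$ and discarding the nonnegative $\norm{Th_{>m}}_2^2$ give $\norm{Th_{\le m}}_2^2\le\delta^2+2|\langle Th_{\le m},Th_{>m}\rangle|$. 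The cross term equals $\sum_{i\le m<j}a_i a_j\Gamma_{ij}$; invoking \eqref{eq: projection condition} and then Cauchy--Schwarz in sequence space gives
\begin{align*}
|\langle Th_{\le m},Th_{>m}\rangle|
&\le c\nu_m\bigl(\textstyle\sum_{i\le m}|a_i|\bigr)\bigl(\textstyle\sum_{j>m}|a_j|\bigr)\\
&\le c\nu_m\sqrt{\textstyle\sum_{i\le m}a_i^2/\lambda_i}\sqrt{\textstyle\sum_{i\le m}\lambda_i}\,\sqrt{\textstyle\sum_{j>m}a_j^2/\lambda_j}\sqrt{\textstyle\sum_{j>m}\lambda_j}\\
&\le c\nu_m B\sqrt{\textstyle\sum_{i\ge 1}\lambda_i}\sqrt{\textstyle\sum_{j>m}\lambda_j},
\end{align*}
using $\sum_j a_j^2/\lambda_j\le B$ twice. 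Dividing by $\nu_m$ gives $\sum_{j\le m}a_j^2\le\delta^2/\nu_m+2cB\sqrt{\sum_i\lambda_i}\sqrt{\sum_{j>m}\lambda_j}$. Adding the tail bound yields the stated per-$m$ inequality, and taking the infimum over $m\in\NN_+$ completes the proof; the balancing rule for $m_*$ then follows by equating $\delta^2/\nu_m$ with $B(2c\sqrt{\sum_i\lambda_i}\sqrt{\sum_{j>m}\lambda_j}+\lambda_{m+1})$.

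The main obstacle is the cross term $\langle Th_{\le m},Th_{>m}\rangle$: a naive elementwise application of the Cauchy--Schwarz inequality on $\Gamma$ either loses the RKHS smoothing or fails to generate the tail factor $\sqrt{\sum_{j>m}\lambda_j}$ that captures the finite-$m$ approximation error. The trick is to combine the hypothesized off-diagonal bound $|\Gamma_{ij}|\le c\nu_m$ with the factorization $|a_j|=(|a_j|/\sqrt{\lambda_j})\sqrt{\lambda_j}$, so that Cauchy--Schwarz in $\ell^2$ converts $\sum|a_j|$ into $\sqrt{B}\sqrt{\sum\lambda_j}$ and the RKHS norm absorbs all remaining coefficient dependence. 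Once this is in place, the remaining algebra---dropping a nonnegative term, dividing by $\nu_m$, and summing the two frequency bounds---is elementary.
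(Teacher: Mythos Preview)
Your proposal is correct and follows essentially the same argument as the paper's own proof: both split $h$ at the cutoff $m$, bound the tail $\sum_{j>m}a_j^2\le B\lambda_{m+1}$ via the RKHS constraint, expand $\norm{\proj_t h}_2^2$ into diagonal plus cross terms, drop the nonnegative high-frequency diagonal piece, control the cross term using \eqref{eq: projection condition} together with the Cauchy--Schwarz factorization $|a_j|=(|a_j|/\sqrt{\lambda_j})\sqrt{\lambda_j}$ and $\sum_j a_j^2/\lambda_j\le B$, and then divide by $\nu_m$. The only cosmetic difference is that the paper lower-bounds $\norm{\proj_t h}_2^2$ directly and rearranges, whereas you upper-bound $\norm{Th_{\le m}}_2^2$ and then divide; the two are algebraically identical.
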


\textbullet{} For a mild ill-posed case, if $\lambda_m \leq m^{-2\alpha_{\mH}}$ for $\alpha_{\mH}>1/2$ and $\nu_m>m^{-2b}$ for
$b>0$, then
$m_{*} \sim  \left[ \delta^2/B \right]^{-\frac{1}{2(\alpha_{\mH}-1/2+b)}}$
and thus
\begin{align*}
  & \quad\norm{(\hmP_t-\mP_t)(\hvpi_{t+1}+R_t)}_2\\
  & \lesssim \tau_t^{*}\left[ (T-t+1)^2 M_{\mH}\delta_n^{(t)}, (T-t+1)^3M_{\mH} \right]\\
  & \lesssim (T-t+1)^{\frac{2(\alpha_{\mH}-1/2)+3b}{(\alpha_{\mH}-1/2)+2b}}[\delta_n^{(t)}]^{\frac{\alpha_{\mH}-1/2}{\alpha_{\mH}-1/2+2b}}.
\end{align*}

\textbullet{} For a severe ill-posed case, if $\lambda_m\leq m^{-2\alpha_{\mH}}$ for $\alpha_{\mH}>1/2$ and
$\nu_m \sim e^{-m^{b}}$ for $b>0$, then
$m_{*}\sim \left[ \log \left( \frac{B}{\delta^2} \right) \right]^{\frac{1}{b}}$, by
the same argument above,
\begin{align*}
   \quad\norm{(\hmP_t-\mP_t)(\hvpi_{t+1}+R_t)}_2
 \lesssim \left[ \log \left( \frac{1}{(T-t+1)[\delta_n^{(t)}]^2} \right) \right]^{\frac{\alpha_{\mH}-1/2}{2b}} (T-t+1)^{3/2}.
\end{align*}

\paragraph{One-step transition ill-posedness.} For each $t$, from $t'=t-1$ to $t'=1$, we can recursively define a sequence of local transition
ill-posedness as the following:
\begin{align*}
  C_{t'+1,t'}^{(t)}&\triangleq\sup_{g\in\mG(W_{t'+1}\times S_{t'+1})} \frac{\norm{\EE^{\pi_{t'}}[g(W_{t'+1},S_{t'+1})\given Z_{t'},S_{t'}]}_2}{\norm{\EE[g(W_{t'+1},S_{t'+1})\given Z_{t'+1},S_{t'+1}]}_2} \\
  \text{subject to } &\norm{\EE[g(W_{t'+1},S_{t'+1})\given Z_{t'+1},S_{t'+1}]}_2\\
                   &\lesssim  \tau_t (T-t+1)^2M_{\mH}\delta_n^{(t)} \norm{\pi_{t'}/\pi_{t'}^b}_{\infty}\prod_{s=t'+1}^{t-1}C_{s+1,s}^{(t)}.
                       \numit\label{eq:local ill}
\end{align*}
Then we have with probability at least $1-\zeta$,
\begin{align*}
  &\norm{\EE [\Ppi_{1:t-1}(\hPpi_t-\Ppi_t)(R_t+\hvpi_{t+1})\given Z_t,S_t]}_2\\ 
  &\leq \left\{ \prod_{t'=1}^tC_{t',t'-1}^{(t)}\right\}\tau_t(T-t+1)^2M_{\mH} \delta_n^{(t)}\tau_t\norm{\pi_t/\pi_t^b}_{\infty},
\end{align*}
uniformly for all $1\leq t \leq T$.

\subsection{Proofs of Theorems \ref{thm:main},
\ref{thm:VC} and \ref{thm:RKHS exp}}
\label{sec: proofs of main results}
\subsubsection{Decomposition of Off-Policy Value Estimation Error}
Our objective is to give an upper bound of
\begin{align*}
  \left| \EE\vpi_1(W_1,S_1) - \EE_n\hvpi_1(W_1,S_1) \right| & \leq \left| \EE\vpi_1 - \EE_n\vpi_1 \right|+ \left| \EE (\vpi_1-\hvpi_1)\right|\\
                                                            &\quad  + \left| \EE_n(\vpi_1-\hvpi_1) - \EE(\vpi_1-\hvpi_1) \right|\\
                                                            &\quad = (I) + (II) + (III),
\end{align*}
For (I), by applying Hoeffding's inequality, we have with probability at least $1-\zeta/T$,
\begin{align*}
 (I) = |\EE \vpi_1 - \EE_n\vpi_1| \lesssim \norm{\vpi_1}_{\infty}\sqrt{\frac{\log(c_1T/\zeta)}{n}} \lesssim T\sqrt{\frac{\log(c_1T/\zeta)}{n}}.
\end{align*}

For (II), obviously  $(II)=|\EE(\vpi_1-\hvpi_1)|\leq\EE|\vpi_1-\hvpi_1|\leq \norm{\vpi_1-\hvpi_1}_2$.

For (III), by applying Theorem 14.20 of \citet{wainwright2019high}, we have with probability at least $1-\zeta$,
\begin{align*} (III) =\left| \EE_n(\vpi_1-\hvpi_1) - \EE(\vpi_1-\hvpi_1) \right|\lesssim \delta_n^{(0)}(\norm{\vpi_1-\hvpi_1}_2+T\delta_n^{(0)}), \end{align*}
where $\delta_n^{(0)} = \bar\delta_n^{(0)} + c_0\sqrt{\frac{\log(c_1T/\zeta)}{n}}$, and $\bar\delta_n^{(0)}$ is the critical radius of $\mG_{C_{\mG}(T+1)M_{\mH}}$.

The $\mL^2$-error $\norm{\vpi_1-\hvpi}_2$ in the upper bounds of (II) and (III)
can be bound by combining Theorems \ref{thm:decomposition} and \ref{thm:one-step}.

\subsubsection{Applying decomposition of OPE error}
\label{sec: applying decompisition of OPE error}
By Assumption \ref{ass: one step transition ill-posedness}, we can define
$\texttt{trans-ill}=\max_{1\leq t\leq T}\exp \left\{ a_t\zeta(\alpha_t)
\right\}$ since $\prod_{t'=1}^tC_{t',t'-1}^{(t)}\leq \exp \left\{
  a_t\zeta(\alpha_t) \right\}$, $1\leq t\leq T$ are bounded by Corollary
\ref{cor: transition ill-posedness bound}. Define
$\texttt{ill}_{\max}=\bar{\tau}_{1}\max_{1\leq t\leq
T}\tau_t\norm{\pi_t/\pi_t^b}_{\infty}$.

By applying Theorems \ref{thm:decomposition} and \ref{thm:one-step}, and crtical radii results in Example \ref{ex:VC} -- \ref{ex:RKHSexponential} in Appendix \ref{sec:critical radii and local Rademacher complexity}, we have the following results:

\paragraph{For Theorem \ref{thm:main}.} With probability at least $1-\zeta$,
\begin{align*}
  \norm{\vpi_1-\hvpi_1}_2  & \lesssim \texttt{ill}_{\max} \times \texttt{trans-ill}\times T^{7/2}\sqrt{\log(c_1T/\zeta)} n^{-\frac{1}{2+\max\{1/\alpha_{\mH},1/\alpha_{\mG},1/\alpha_{\mF}\}}}\log(n),
\end{align*}
by Corollary \ref{cor:RKHS radii} (1). Then by above decomposition, with probability at least $1-\zeta$,
\begin{align*}
  |\mV(\pi) - \hat\mV(\pi)| & \lesssim \texttt{ill}_{\max} \times \texttt{trans-ill}\times T^{7/2}\sqrt{\log(c_1T/\zeta)}n^{-\frac{1}{2+\max\{1/\alpha_{\mH},1/\alpha_{\mG},1/\alpha_{\mF}\}}}\log(n).
\end{align*}

\paragraph{For Theorem \ref{thm:VC}.} With probability at least $1-\zeta$, with probability at least $1-\zeta$,
  \begin{align*}
    \norm{\vpi_1-\hvpi_1}_2  & \lesssim \texttt{ill}_{\max} \times \texttt{trans-ill}\\
    & \qquad\times T^{7/2}\left\{\sqrt{\frac{\max_{1\leq t\leq T}\left\{ \VV(\mF^{(t)}),\VV(\mH^{(t)}),\VV(\mG^{(t+1)}) \right\}}{n}}+\sqrt{\frac{\log(T/\zeta)}{n}}\right\},
  \end{align*}
by Corollary \ref{cor:VC radii}. Then by above decomposition, with probability at least $1-\zeta$,
  \begin{align*}
    |\mV(\pi) - \hat\mV(\pi)| & \lesssim \texttt{ill}_{\max} \times \texttt{trans-ill}\\
    & \qquad\times T^{7/2}\left\{\sqrt{\frac{\max_{1\leq t\leq T}\left\{ \VV(\mF^{(t)}),\VV(\mH^{(t)}),\VV(\mG^{(t+1)}) \right\}}{n}}+\sqrt{\frac{\log(T/\zeta)}{n}}\right\}.
  \end{align*}

\paragraph{For Theorem \ref{thm:RKHS exp}.} With probability at least $1-\zeta$,
\begin{align*}
  \norm{\vpi_1-\hvpi_1}_2  & \lesssim \texttt{ill}_{\max} \times \texttt{trans-ill}\times T^{7/2}\left\{\sqrt{\frac{(\log n)^{1/\min\{\beta_{\mH},\beta_{\mG},\beta_{\mF}\}}}{n}} +  \sqrt{\frac{\log(T/\zeta)}{n}}\right\},
\end{align*}
by Corollary \ref{cor:RKHS radii} (1). Then by above decomposition,
\begin{align*}
  |\mV(\pi) - \hat\mV(\pi)| & \lesssim \texttt{ill}_{\max} \times \texttt{trans-ill}\times T^{7/2}\left\{\sqrt{\frac{(\log n)^{1/\min\{\beta_{\mH},\beta_{\mG},\beta_{\mF}\}}}{n}} +  \sqrt{\frac{\log(T/\zeta)}{n}}\right\}.
\end{align*}

\paragraph{For Corollary \label{cor: ill-posedness main} under mild and severe
  ill-posed cases.}
Under assumptions in main Theorem \ref{thm:main}, by directly applying Lemma
\ref{lem: ill-posedness bound}, we have that
\begin{align*}
  \norm{\vpi_1-\hvpi_1}_2  & \lesssim \bar{\tau}_{1} \max_{1\leq t\leq T}\norm{\pi_t/\pi_t^b}_{\infty}\times \texttt{trans-ill}\times \eta(n,T,\zeta,\alpha_{\mH},\alpha_{\mF},\alpha_{\mG},b),\\
  |\mV(\pi) - \hat\mV(\pi)| & \lesssim \bar{\tau}_{1} \max_{1\leq t\leq T}\norm{\pi_t/\pi_t^b}_{\infty}\times \texttt{trans-ill}\times \eta(n,T,\zeta,\alpha_{\mH},\alpha_{\mF},\alpha_{\mG},b),
\end{align*}
where, for mild ill-posed case that $\nu_m \sim m^{-2b}$ for $b>0$:
\begin{align*}
\eta(n, T, \zeta, \alpha_\mathcal{H}, \alpha_\mathcal{F}, \alpha_\mathcal{G}, b) =  T^{\frac{7(\alpha_{\mH}-1/2)+10b}{2(\alpha_{\mH}-1/2)+4b}} \left(\sqrt{\log(c_1T/\zeta)}  n^{-\frac{1}{2+\max\{1/\alpha_{\mH},1/\alpha_{\mG},1/\alpha_{\mF}\}}}\log(n)\right)^{\frac{(\alpha_{\mH}-1/2)}{(\alpha_{\mH}-1/2)+2b}}.
\end{align*}
for severe ill-posed case  that $\nu_m \sim e^{-m^b}$ for $b>0$:
\begin{align*}
  \eta(n,T,\zeta,\alpha_{\mH},\alpha_{\mF},\alpha_{\mG},b) &= \sum_{t=1}^T(T-t+1)^{3/2}\left\{ \log \frac{n^{\frac{2}{2+\max\{1/\alpha_{\mH},1/\alpha_{\mG},1/\alpha_{\mF}\}}}}{(\log n)^2(T-t+1)^2\log(T/\zeta)^2} \right\}^{-\frac{\alpha_{\mH}-1/2}{2b}}.\\
\end{align*}

\section{Auxiliary Lemmas}
\label{sec:lemmas}

In this section, we provide some auxiliary lemmas which are needed to prove Theorem \ref{thm:existence} -- \ref{thm:main} and their proofs. %

\subsection{Lemmas For Identification}
\begin{lemma}[Picard's Theorem, Theorem 15.16 of \cite{kress1989linear}]
  \label{lem:Picard}
  Given Hilbert spaces $\mH_1$ and $\mH_2$, a compact operator
  $K:\mH_1\rightarrow\mH_2$ and its adjoint operator
  $K^{*}:\mH_2\rightarrow\mH_1$, there exists a singular system $(\lambda_{\nu},
  \phi_{\nu},\psi_{\nu})_{\nu=1}^{\infty}$ of $K$, with singular values
  $\{\lambda_{\nu}\}$ and orthogonal sequences $\{\phi_{\nu}\}\subset\mH_1$ and
  $\{\psi_{\nu}\}\subset\mH_2$ such that $K\phi_{\nu}=\lambda_{\nu}\psi_{\nu}$ and
  $K^{*}\psi_{\nu}=\lambda_{\nu}\phi_{\nu}$.

  Given $g\in\mH_2$, the Fredholm integral equation of the first kind $Kh=g$ is
  solvable if and only if
\begin{enumerate}[]
\item [(a)] $g\in {\rm Ker}(K^{*})^{\perp}$ and
\item [(b)] $\sum_{\nu=1}^{\infty} \lambda_{\nu}^{-2}|\left\langle g,\psi_{\nu}
  \right\rangle|^2 <\infty$,
\end{enumerate}
where ${\rm Ker}(K^{*}) = \left\{ h:K^{*}h=0 \right\}$ is the null space of
$K^{*}$, and $^{\perp}$ denotes the orthogonal complement to a set.
\end{lemma}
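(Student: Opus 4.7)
The plan is to derive Picard's theorem from the spectral theorem for compact self-adjoint operators applied to $K^{*}K$, and then to translate the resulting eigenexpansion into a criterion for solvability of $Kh=g$. I will suppress the distinction between separable and non-separable Hilbert spaces since the statement only involves at most countable sequences; in the non-separable case everything takes place on the orthogonal complement of $\Ker(K)$, which is separable because $K$ is compact.

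First I would establish the singular system. Note that $K^{*}K:\mH_1\to\mH_1$ is compact, self-adjoint and positive semidefinite. By the Hilbert–Schmidt spectral theorem, there exist nonnegative eigenvalues $\mu_\nu\downarrow 0$ with associated orthonormal eigenvectors $\{\phi_\nu\}$ forming a basis of $\overline{\Range(K^{*}K)}=\Ker(K^{*}K)^{\perp}=\Ker(K)^{\perp}$. Setting $\lambda_\nu=\sqrt{\mu_\nu}>0$ and $\psi_\nu = \lambda_\nu^{-1}K\phi_\nu$, an easy computation shows $\langle \psi_\nu,\psi_\mu\rangle = \lambda_\nu^{-1}\lambda_\mu^{-1}\langle K^{*}K\phi_\nu,\phi_\mu\rangle = \delta_{\nu\mu}$, so $\{\psi_\nu\}$ is orthonormal in $\mH_2$, and applying $K^{*}$ gives $K^{*}\psi_\nu = \lambda_\nu^{-1}K^{*}K\phi_\nu = \lambda_\nu\phi_\nu$. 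A routine density argument shows that $\{\psi_\nu\}$ is in fact a complete orthonormal basis of $\overline{\Range(K)}=\Ker(K^{*})^{\perp}$, since any $\psi\in\Ker(K^{*})^{\perp}$ orthogonal to all $\psi_\nu$ forces $K^{*}\psi\perp\phi_\nu$ for every $\nu$ and hence $K^{*}\psi=0$, contradicting $\psi\in\Ker(K^{*})^{\perp}$.

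Next I would prove the necessity of (a) and (b). If $Kh=g$, then $g\in\Range(K)\subset\overline{\Range(K)}=\Ker(K^{*})^{\perp}$, giving (a). For (b), write $h=h_0+h_1$ with $h_0\in\Ker(K)$ and $h_1=\sum_{\nu}\langle h,\phi_\nu\rangle\phi_\nu$; then $\langle g,\psi_\nu\rangle = \langle Kh,\psi_\nu\rangle=\langle h,K^{*}\psi_\nu\rangle=\lambda_\nu\langle h,\phi_\nu\rangle$, whence
\begin{equation*}
\sum_{\nu} \lambda_\nu^{-2}|\langle g,\psi_\nu\rangle|^2 = \sum_\nu |\langle h,\phi_\nu\rangle|^2 \le \|h\|^2<\infty.
\end{equation*}

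For sufficiency, given (a) and (b) I would define the candidate solution
\begin{equation*}
h^\star = \sum_{\nu}\lambda_\nu^{-1}\langle g,\psi_\nu\rangle\,\phi_\nu,
\end{equation*}
which converges in $\mH_1$ by (b) and Parseval. Applying $K$ term-by-term (using continuity of $K$) yields $Kh^\star = \sum_\nu \langle g,\psi_\nu\rangle\psi_\nu$; by (a), $g$ lies in $\Ker(K^{*})^{\perp}$ which is exactly the closed span of $\{\psi_\nu\}$ established above, so this series equals $g$. Thus $Kh^\star=g$.

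The main obstacle in this plan is the completeness claim for $\{\psi_\nu\}$ as an orthonormal basis of $\Ker(K^{*})^{\perp}$, since the spectral theorem only immediately delivers an ONB for $\mH_1$ via $\{\phi_\nu\}$; one has to carefully identify $\overline{\Range(K)}$ with $\Ker(K^{*})^{\perp}$ (standard, from $\Range(K)^{\perp}=\Ker(K^{*})$) and then verify that the images $\psi_\nu$ exhaust this space. Everything else is a mechanical consequence of Parseval's identity and continuity of $K$.
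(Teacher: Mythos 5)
Your proof is correct and follows the canonical derivation of Picard's theorem: diagonalize the compact, self-adjoint, positive operator $K^{*}K$ to manufacture the singular system, identify $\overline{\Range(K)}=\Ker(K^{*})^{\perp}$ as the closed span of $\{\psi_\nu\}$, and then read off necessity and sufficiency of (a), (b) from Parseval and continuity of $K$. Note that the paper does not supply its own proof of this lemma; it is stated as a citation to Theorem 15.16 of Kress (1989), and your argument is essentially the standard textbook proof one would find there, so there is no divergence to report. One small cosmetic point: when you invoke the spectral theorem you should make explicit that $\{\phi_\nu\}$ is taken to be the eigenvectors of $K^{*}K$ with \emph{strictly positive} eigenvalues (discarding the kernel modes), so that the normalization $\psi_\nu=\lambda_\nu^{-1}K\phi_\nu$ is well-defined; you clearly intend this, but the phrasing ``nonnegative eigenvalues $\mu_\nu\downarrow 0$'' could be read as including zeros.
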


\subsection{One-step estimation error}
\label{sec: one-step estimation error}

Consider the problem of estimating a function $h$ that satisfying the
conditional moment restriction
\begin{equation} \label{eq:CMM}
\EE \left\{ g(W) - h(X) \given Z \right\} = 0,
\end{equation}
where $Z\in \mZ$, $X \in \mX$, $W\in\mW$, $h\in \mH \subset \{h\in\RR^{\mX}:\norm{h}_{\infty}\leq 1\}$, $g\in\mG\subset\{g\in\RR^{\mW}:\norm{g}_{\infty}\leq 1\}$.
Suppose that $h_g^{*}\in\mH$ is the true $h$ that satisfies the conditional moment restriction \eqref{eq:CMM}. 

Suppose that we observe an i.i.d. sample $\{(W_i,X_i,Z_i)\}_{i=1}^n$ of sample size $n$ drawn
from an unknown distribution.
Consider the minimax estimator
\begin{equation}
\label{eq:minimax}
\hhat_g = \argmin_{h\in\mH} \sup_{f\in\mF} \Psi_n (h,f,g) - \lambda \left( \nmF{f}^2 + \frac{U}{\delta^2}\nmEmp{f}^2 \right) + \lambda\mu \nmH{h}^2,
\end{equation}
where $\Psi_n (h,f,g) = n^{-1}\sum_{i=1}^n \{g(W_i)-h(X_i)\} f(Z_i)$ with
the population version $\Psi (h,f,g) = \EE \{g(W)-h(X)\} f(Z)$ and $\lambda,
\delta, \mu, U>0$ are tuning parameters. 

\begin{lemma}[$\mL^2$-error rate for minimax estimator]
  \label{lem:CMM}
  Let $\mF\subset \{f\in\RR^{\mZ}:\norm{f}_{\infty}\leq 1\}$ be a symmetric and star-convex set of test functions.
  Define $\delta=\delta_n + c_0\sqrt{\frac{\log (c_1/\zeta)}{n}}$ for some
  univeral constants $c_0,c_1>0$ and $\delta_n$ the upper bound of critical radii of $\mF_{3U}$,
\begin{equation*}
\bOmega = \left\{ (x,w,z) \mapsto r(h_g^{*}(x)-g(w))f(z): g\in\mG, f\in\mF_{3U}, r\in[0,1] \right\},
\end{equation*}
  and
  \begin{equation*}
    \bXi = \left\{ (x,z)\mapsto r[h-h_g^{*}](x)f_{\Delta}^{L^2 B}(z); h\in\mH, (h-h_g^{*})\in\mH_B, g\in\mG, r\in[0,1]\right\},
  \end{equation*}
  where $f_{\Delta}^{L^2 B} = \argmin_{f\in\mF_{L^2 B}}\norm{f-\proj_Z(h-h_g^{*})}_2$.
  Moreover, suppose that $\forall h\in \mH$, $g\in\mG$, $\norm{f_{\Delta} - \proj_Z(h-h_g^{*})}_2\leq \eta_n\lesssim\delta_n$,
  where $f_{\Delta}\in \arginf_{f\in
  \mF_{L^2\nmH{h-h_g^{*}}^2}} \norm{f-\proj_Z(h-h_g^{*})}_2$.
  If the tuning parameters satisfy $324 C_{\lambda}\delta^2/U \leq \lambda \leq 324
  C_{\lambda}^{\prime}\delta^2/U$ and $\mu\geq \frac{4}{3}L^2 + \frac{18(C_f+1)}{B}\frac{\delta^2}{\lambda}$, then 
  with probability $1-4\zeta$,
\begin{equation*}
\sup_{g\in\mG}\norm{\proj_Z(\hhat_g-h_g^{*})}_2 \lesssim (1+\sup_{g\in\mG}\nmH{h_g^{*}}^2)\delta,
\end{equation*}
and for all $g\in\mG$ uniformly,
\begin{equation*}
  \nmH{\hhat_g}^2 \leq C + \nmH{h_g^{*}}^2.
\end{equation*}
\end{lemma}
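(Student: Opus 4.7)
The plan is to adapt the minimax-estimator analysis of \cite{dikkala2020minimax}, but critically to strengthen it so that the bounds hold \emph{uniformly} over $g \in \mG$. This uniform control is essential because in the sequential NPIV structure of Section \ref{sec:estimation}, the ``outcome'' $g$ at step $t$ depends on the random V-bridge estimate from step $t+1$, so conditioning-on-$g$ arguments are not available. The uniformity is precisely what forces the critical-radius classes $\boldsymbol{\Omega}$ and $\boldsymbol{\Xi}$ in the statement to be triple products involving $\mG$, $\mH$, and $\mF$ simultaneously.

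The first step is to invoke the optimality of $\hhat_g$ in \eqref{eq:minimax} against the feasible point $h_g^{*}$, giving, for each $g \in \mG$,
\begin{align*}
&\sup_{f \in \mF} \Psi_n(\hhat_g, f, g) - \lambda\bigl(\nmF{f}^2 + (U/\delta^2)\nmEmp{f}^2\bigr) + \lambda\mu\nmH{\hhat_g}^2 \\
&\qquad\leq \sup_{f \in \mF} \Psi_n(h_g^{*}, f, g) - \lambda\bigl(\nmF{f}^2 + (U/\delta^2)\nmEmp{f}^2\bigr) + \lambda\mu\nmH{h_g^{*}}^2.
\end{align*}
Since $h_g^{*}$ satisfies the conditional moment restriction \eqref{eq:CMM}, the population-level quantity $\Psi(h_g^{*}, f, g) = 0$ for every $f$, so the right-hand side reduces to a centered empirical-process deviation plus the penalty on $h_g^{*}$. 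I would bound this deviation uniformly in $(f, g)$ by Talagrand-type concentration applied to the star-shaped, uniformly bounded class $\boldsymbol{\Omega}$, obtaining an $O(\delta^2)$-order remainder via its critical radius.

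The second step is to lower-bound the left-hand side by a quantity involving $\|\proj_Z(\hhat_g - h_g^{*})\|_2$. I plug in the near-best test function $f^{L^2 B} \in \mF_{L^2 B}$ that approximates $\proj_Z(\hhat_g - h_g^{*})$ to within $\eta_n \lesssim \delta_n$ (guaranteed by Assumption (5) once $B$ is chosen so that $\hhat_g - h_g^{*} \in \mH_B$). Evaluating the supremum at this $f^{L^2 B}$ gives, after using the critical radius of $\boldsymbol{\Xi}$ to pass from $\Psi_n$ to $\Psi$ uniformly in $(h, g)$ and then invoking $\EE[(\hhat_g - h_g^{*})(X) f^{L^2 B}(Z)] = \langle \proj_Z(\hhat_g - h_g^{*}), f^{L^2 B}\rangle$, a lower bound of the form
\begin{align*}
\nmEmp{\proj_Z(\hhat_g - h_g^{*})}^2 - O(\delta^2) - \lambda\bigl(L^2\nmH{\hhat_g - h_g^{*}}^2 + (U/\delta^2)\nmEmp{f^{L^2 B}}^2\bigr).
\end{align*}
Choosing $\lambda \asymp \delta^2/U$ and $\mu \geq \tfrac{4}{3}L^2 + 18(C_f+1)\delta^2/(\lambda B)$ as in the hypothesis lets the penalty terms attached to $f^{L^2 B}$ be absorbed into the regularizer $\lambda\mu\nmH{\hhat_g}^2$ on the left-hand side (via the triangle inequality $\nmH{\hhat_g - h_g^{*}}^2 \leq 2\nmH{\hhat_g}^2 + 2\nmH{h_g^{*}}^2$). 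Rearranging then yields the two claimed bounds: $\norm{\proj_Z(\hhat_g - h_g^{*})}_2^2 \lesssim \delta^2(1 + \nmH{h_g^{*}}^2)$ and $\nmH{\hhat_g}^2 \leq C + \nmH{h_g^{*}}^2$.

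The hardest part is the localization-bootstrapping. The class $\boldsymbol{\Xi}$ is defined only over $h - h_g^{*} \in \mH_B$, yet the very conclusion I want to prove is what certifies that the estimator lies in this ball. I would resolve this via the standard peeling argument: suppose for contradiction $\nmH{\hhat_g - h_g^{*}}^2 > B$ on some event, then apply the same chain of inequalities on the ball $\mH_{2^k B}$ for successive $k$, using that the critical radius grows sublinearly in the localization radius to derive a contradiction once $B$ exceeds a constant multiple of $1 + \nmH{h_g^{*}}^2$. A second subtle point is converting between $\norm{\cdot}_n$ and $\norm{\cdot}_2$ for $f^{L^2 B}$; this is exactly why the $(U/\delta^2)\nmEmp{f}^2$ penalty appears in the objective, and why $\mF_{3U}$ (not just $\mF_{L^2 B}$) must have critical radius $\lesssim \delta_n$ in the hypothesis. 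Assembling these pieces under the union of four high-probability events (three for the critical radii and one for the approximation) gives the $1 - 4\zeta$ probability bound.
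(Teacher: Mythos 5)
Your proposal captures the overall skeleton correctly — basic inequality from optimality at $h_g^*$, upper bound of the centered empirical process via the critical radius of $\bOmega$ (using that $\Psi(h_g^*, f, g) = 0$), lower bound via a near-optimal test function and the critical radius of $\bXi$, and final rearrangement under the stated $\lambda$, $\mu$ conditions. You also correctly identify why $\mF_{3U}$ needs a small critical radius (to pass between $\|\cdot\|_n$ and $\|\cdot\|_2$ for test functions) and why all three function spaces $\mH$, $\mG$, $\mF$ enter the complexity classes to get uniformity over the random $g$.

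However, your resolution of the localization problem has a genuine gap. You propose a peeling-by-contradiction argument on nested balls $\mH_{2^k B}$, appealing to the critical radius ``growing sublinearly in the localization radius.'' This would require a critical-radius bound for $\bXi$-type classes over balls of \emph{every} radius $2^k B$, but the hypothesis of the lemma only certifies $\delta_n$ as an upper bound for the critical radius of $\bXi$ defined with the \emph{fixed} radius $B$. No sublinear-growth condition is assumed, and without one the successive peels are not controlled. The paper's proof avoids this entirely by exploiting star-shapedness directly: when $\nmH{h - h_g^*}^2 > B$, rescale to $(h-h_g^*)\sqrt{B}/\nmH{h-h_g^*} \in \mH_B$, apply the concentration bound for the fixed ball, then multiply back, which introduces a multiplicative factor $\max\{1, \nmH{h-h_g^*}^2/B\}$ on the error term. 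That factor, together with $\lambda \frac{2}{3}L^2\nmH{\hhat_g - h_g^*}^2$, is then absorbed into the regularizer $\lambda\mu\nmH{\hhat_g}^2$ via the stated lower bound on $\mu$; no bootstrapping or contradiction is needed. Relatedly, you conflate $f^{L^2 B} \in \mF_{L^2 B}$ with the approximator $f_\Delta \in \mF_{L^2\nmH{h - h_g^*}^2}$ guaranteed by the hypothesis, and you omit the case split on whether $\|f_{\hDelta_g}\|_2 < C_f\delta$ (in which the conclusion is immediate) versus $\geq C_f\delta$ (where the rescaling $r = C_f\delta/(2\|f_{\hDelta_g}\|_2)$ and star-convexity of $\mF$ are used). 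These are secondary to the peeling issue but worth tightening.
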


The proof of Lemma  \ref{lem:CMM} is given in Appendix \ref{proof: lemma CMM}.

\begin{lemma}[\citet{dikkala2020minimax}, Theorem 1]
  \label{lem:dikkala}
  Consider the problem of estimating a function $h$ that satisfies
  \begin{equation*}
    \EE \left\{ Y - h(X) \given Z \right\} = 0,
  \end{equation*}
  where $Z\in \mZ$, $X \in \mX$, $W\in\mW$, $h\in \mH \subset \{h\in\RR^{\mX}:\norm{h}_{\infty}\leq 1\}$, $|Y|\leq 1$.
  Suppose that there exists $h^{*}\in\mH$ that satisfies the conditional moment equation.
  Suppose that we observed an i.i.d. sample $\{(Y_i,X_i,Z_i)\}_{i=1}^n$ of sample size $n$ drawn
  from an unknown distribution.
  Consider the minimax estimator
  \begin{equation}
    \label{eq:dikkala}
    \hhat = \argmin_{h\in\mH} \sup_{f\in\mF} \Phi_n (h,f) - \lambda \left( \nmF{f}^2 + \frac{U}{\delta^2}\nmEmp{f}^2 \right) + \lambda\mu \nmH{h}^2,
  \end{equation}
  where $\Phi_n (h,f) = n^{-1}\sum_{i=1}^n \{Y_i-h(X_i)\} f(Z_i)$ with
  the population version $\Phi (h,f) = \EE \{Y-h(X)\} f(Z)$ and $\lambda,
  \delta, \mu, U>0$ are tuning parameters. 
  
  Let $\mF\subset \{f\in\RR^{\mZ}:\norm{f}_{\infty}\leq 1\}$ be a symmetric and star-convex set of test functions.
  Define $\delta=\delta_n + c_0\sqrt{\frac{\log (c_1/\zeta)}{n}}$ for some
  univeral constants $c_0,c_1>0$ and $\delta_n$ the upper bound of critical radii of $\mF_{3U}$ and
  \begin{equation*}
    \bar\bXi = \left\{ (x,z)\mapsto r[h-h^{*}](x)f_{\Delta}^{L^2 B}(z); h\in\mH, (h-h^{*})\in\mH_B, r\in[0,1]\right\},
  \end{equation*}
  where $f_{\Delta}^{L^2 B} = \argmin_{f\in\mF_{L^2 B}}\norm{f-\proj_Z(h-h^{*})}_2$.
  Moreover, suppose that $\forall h\in \mH$, $\norm{f_{\Delta} - \proj_Z(h-h^{*})}_2\leq \eta_n\lesssim\delta_n$,
  where $f_{\Delta}\in \arginf_{f\in
  \mF_{L^2\nmH{h-h^{*}}^2}} \norm{f-\proj_Z(h-h^{*})}_2$.
  Suppose tuning parameters satistying $324 C_{\lambda}\delta^2/U \leq \lambda \leq 324
  C_{\lambda}^{\prime}\delta^2/U$ and $\mu\geq \frac{4}{3}L^2 + \frac{18(C_f+1)}{B}\frac{\delta^2}{\lambda}$.
  Then with probability $1-3\zeta$,
\begin{equation*}
\norm{\proj_Z(\hhat-h^{*})}_2 \lesssim (1+\nmH{h^{*}}^2)\delta,
\end{equation*}
and 
\begin{equation*}
  \nmH{\hhat}^2 \leq C + \nmH{h^{*}}^2.
\end{equation*}
\end{lemma}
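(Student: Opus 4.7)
The plan is to follow the standard min-max analysis for conditional moment restrictions, as in the original Dikkala et al.\ argument. The starting point is the first-order (saddle-point) inequality coming from the optimality of $\hhat$: for any comparator $h \in \mH$ and any $f \in \mF$, the empirical Lagrangian at $(\hhat, \hat f)$ is no larger than that at $(h, f)$. I would set $h = h^{*}$. Since $h^{*}$ satisfies the population conditional moment restriction, $\Phi(h^{*}, f) = 0$ for every $f$, so the whole argument reduces to controlling empirical deviations $\Phi_n(h^{*}, f) - \Phi(h^{*}, f)$ uniformly over a localized ball in $\mF$.

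The next step is to bound $\nmH{\hhat}^2$. Plugging $h = h^{*}$, $f = 0$ into the objective, together with concentration on $\Phi_n(h^{*}, \cdot)$, yields a quadratic inequality in $\nmH{\hhat}$ of the form $\lambda \mu \nmH{\hhat}^2 \leq \lambda \mu \nmH{h^{*}}^2 + O(\delta^2/\mu)$. Choosing $\mu$ as in the hypothesis gives the stated bound $\nmH{\hhat}^2 \leq C + \nmH{h^{*}}^2$, which in turn localizes the difference $\hhat - h^{*}$ to the ball $\mH_B$ for $B$ a fixed multiple of $\nmH{h^{*}}^2 + 1$.

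To convert the empirical min-max bound into a bound on $\norm{\proj_Z(\hhat - h^{*})}_2$, I would use that the population optimal test direction for measuring the moment violation at $h$ is essentially $\proj_Z(h - h^{*})$: indeed, $\sup_{f} \{\Phi(h, f) - \lambda\nmF{f}^2\} \asymp \norm{\proj_Z(h - h^{*})}_2^2/\lambda$ when the ridge is right. The hypothesis on approximation by $f^{L^2 B}_\Delta \in \mF_{L^2 B}$ with error at most $\eta_n$ plays exactly the role of guaranteeing that the supremum over the constrained test class $\mF$ captures this population optimum up to $\eta_n$. Combining the three ingredients — the saddle-point inequality, the RKHS norm bound, and the approximation step — gives $\norm{\proj_Z(\hhat-h^{*})}_2^2 \lesssim (1 + \nmH{h^{*}}^2)\,\delta^2$.

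The main obstacle, and the reason for the delicate technical setup, lies in the localization and in the extra empirical penalty $\frac{U}{\delta^2}\nmEmp{f}^2$. This penalty is there precisely to convert population norms of test functions to empirical norms on the event of interest, and vice versa; carrying this through requires a careful Talagrand-type uniform deviation argument for the class $\bar\bXi$ (which involves products $[h - h^{*}](x)\,f_\Delta^{L^2 B}(z)$) together with the critical-radius bound for $\mF_{3U}$. Chaining these together using the critical radii and the assumed inequalities on $\lambda, \mu, U$ (which balance the bias, variance, and regularization terms) is the step where all constants in the hypothesis are used, and is the main technical obstacle. Once all uniform deviations are controlled on a single event of probability at least $1 - 3\zeta$, the conclusion follows by algebraic rearrangement.
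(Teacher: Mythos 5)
The paper does not actually prove this lemma — it is imported verbatim from \citet{dikkala2020minimax}, Theorem~1 — but the paper's proof of the more general Lemma~\ref{lem:CMM} (a uniform-over-$g$ version of the same statement) follows exactly the Dikkala-type argument, so the fair comparison is with that. Your sketch captures the correct skeleton: the basic saddle-point inequality comparing $\hhat$ with $h^{*}$ and an arbitrary test $f$, the $\nmH{\hhat}^2$ bound obtained by taking $f=0$ (which both yields the second conclusion and localizes $\hhat - h^{*}$ to $\mH_B$), the use of the approximating test function $f_{\Delta}^{L^2 B}$ so that the sup over $\mF$ witnesses the projected RMSE, and the role of the empirical ridge $\frac{U}{\delta^2}\nmEmp{f}^2$ together with critical-radius / local-Rademacher concentration (Foster--Syrgkanis style) on $\mF_{3U}$ and $\bar\bXi$. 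Structurally you are on the same route as the paper.

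One place where the sketch, as written, would lead a reader astray is the heuristic $\sup_{f}\{\Phi(h,f) - \lambda\nmF{f}^2\} \asymp \norm{\proj_Z(h-h^{*})}_2^2/\lambda$. That quadratic-in-the-projected-norm picture is not the mechanism the proof uses, and if one follows it literally the rates do not close. In the actual lower-bound step one plugs in a \emph{scaled} test function $r f_{\hDelta}$ with $r \asymp \delta/\norm{f_{\hDelta}}_2$; this produces a lower bound that is \emph{linear} in the projected norm, roughly $\frac{C_f\delta}{2}\bigl(\norm{\proj_Z(\hhat - h^{*})}_2 - 2\eta_n\bigr) - O\bigl(\delta^2(1+\nmH{h^{*}}^2)\bigr)$. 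Matching this against the $O\bigl(\delta^2(1+\nmH{h^{*}}^2)\bigr)$ upper bound on the sup-loss and dividing by the $\delta$ prefactor is what produces $\norm{\proj_Z(\hhat - h^{*})}_2 \lesssim (1+\nmH{h^{*}}^2)\delta$. By contrast, matching $\norm{\proj_Z}^2/\lambda$ against $\lambda\mu\nmH{h^{*}}^2 + \lambda U + \delta^2$ with $\lambda \asymp \delta^2/U$ and $\mu$ as in the hypothesis gives a $\norm{\proj_Z(\hhat - h^{*})}_2 \lesssim \delta^2(\cdots)$ scaling, which is too fast and inconsistent with the statement. Relatedly, your final display $\norm{\proj_Z(\hhat - h^{*})}_2^2 \lesssim (1+\nmH{h^{*}}^2)\,\delta^2$ reads as $\norm{\proj_Z(\hhat - h^{*})}_2 \lesssim \sqrt{1+\nmH{h^{*}}^2}\,\delta$, which is stronger than the claim and not what the argument yields; the correct dependence on $\nmH{h^{*}}^2$ is linear, not square-root.
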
 

\subsection{Critical radii and local Rademacher complexity}
\label{sec:critical radii and local Rademacher complexity}
In this section we list several ways to bound the critical radii of $\mF_{3U}$, $\bOmega$ and $\bXi$ for Lemmas in Appendix \ref{sec: one-step estimation error}. We restrict $\mG = \mG_D = \{g\in\mG: \nmG{g}^2\leq D\}$ for some $D>0$ in this section.

\subsubsection{Local Rademacher complexity bound by entropy integral}
In this subsection, we introduce an entropy integral based approach to bound the local Rademacher complexity and critical radii. Similar to local Rademacher complexity, for a star-shaped and $b$-uniformly bounded function class $\mF$, the \emph{local empirical Rademacher complexity}, a data-dependent quantity, is defined by
\begin{equation*}
   \widehat{R}_n(\delta;\mF) \triangleq \EE \left[\sup_{f\in\mF, \norm{f}_n\leq \delta} |\frac{1}{n}\epsilon_i f(X_i)|\Given \{X_i\}_{i=1}^n\right]
\end{equation*}
where $\{\epsilon_i\}_{i=1}^n$ are i.i.d. Rademacher variables. The \emph{empirical critical radius} $\hat\delta_n$ is the smallest positive solution to 
\begin{equation}
\label{eq:empirical critical inequality}
    \widehat{R}_n(\delta) \leq \frac{\delta^2}{b}.
\end{equation}

\citet[][Proposition 14.25]{wainwright2019high} gives the relationship that with probability at least $1-\zeta$,
\begin{equation*}
    \delta_n \leq \bigO(\hat\delta_n + \sqrt{\frac{\log(1/\zeta)}{n}}).
\end{equation*}
Therefore, we can study the critical radius $\delta_n$ by empirical critical radius $\hat\delta_n$.

Given a space $\mG$, an \emph{empirical $\epsilon$-covering} of $\mG$ is defined as any
function class $\mG^{\epsilon}$  such that for all $g\in\mG$,
$\inf_{g_{\epsilon}\in\mG^{\epsilon}} \norm{g_{\epsilon}-g}_n\leq \epsilon$.
Denote the smallest empirical $\epsilon$-covering of
$\mG$ by $N_n(\epsilon,\mG)$. Let $\BB_n(\delta;\mG)\triangleq \left\{ g\in\mG: \norm{g}_n\leq \delta
\right\}$. Then we have the following Lemma to bound the empirical critical radius by Dudley's entropy integral.

\begin{lemma}\citep[Corollary 14.3]{wainwright2019high}
  \label{lem:Dudley integral}
  The empirical critical inequality \eqref{eq:empirical critical inequality} is satisfied for any $\delta>0$ such that
  
\begin{equation*}
  \frac{64}{\sqrt{n}}\int_{\frac{\delta^2}{2b}}^{\delta} \sqrt{\log N_n(t, \BB_n(\delta,\mG))} {\mathrm d}t \leq \frac{\delta^2}{b}.
\end{equation*}

\end{lemma}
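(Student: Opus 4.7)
The plan is to establish this lemma via a standard Dudley chaining argument applied to the local empirical Rademacher process, and then verify that satisfying the displayed integral inequality suffices for the critical inequality $\widehat R_n(\delta;\mG) \le \delta^2/b$.

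First, I would fix the sample $\{X_i\}_{i=1}^n$ and work with the conditional Rademacher process $g \mapsto \frac{1}{n}\sum_i \epsilon_i g(X_i)$ indexed by $g \in \BB_n(\delta;\mG)$. Because the $\epsilon_i$ are Rademacher, this process has sub-Gaussian increments with respect to the empirical $L^2(P_n)$ metric, by Hoeffding's inequality. This is the ingredient that drives chaining.

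Next, I would run a generic chaining argument with geometric scales. Set $\delta_k = 2^{-k}\delta$ for $k = 0,1,2,\ldots$, and for each $k$ choose a minimal empirical $\delta_k$-net $\mG^{(k)} \subset \BB_n(\delta;\mG)$ of cardinality $N_n(\delta_k, \BB_n(\delta;\mG))$. For any $g \in \BB_n(\delta;\mG)$, pick $g_k \in \mG^{(k)}$ with $\|g-g_k\|_n \le \delta_k$, so that $\|g_k - g_{k-1}\|_n \le \delta_k + \delta_{k-1} \le 3\delta_k$. Telescoping $g = g_0 + \sum_{k\ge 1}(g_k - g_{k-1})$ and taking the supremum over the $\epsilon$-process, a maximal inequality for $|\mG^{(k)}|\cdot|\mG^{(k-1)}| \le N_n(\delta_k,\cdot)^2$ many sub-Gaussian variables of parameter $3\delta_k/\sqrt n$ produces, after truncating the chain at the first level $K$ with $\delta_K \le \delta^2/(2b)$, the bound
\begin{equation*}
\widehat R_n(\delta;\mG) \;\le\; \frac{C}{\sqrt n}\sum_{k=1}^{K} \delta_k \sqrt{\log N_n(\delta_k,\BB_n(\delta;\mG))} \;+\; \text{(residual)},
\end{equation*}
where the residual bound from the tail of the chain is at most $\delta^2/(2b)$ because past scale $K$ the approximation error in $\|\cdot\|_n$ is already below the target precision. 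A standard comparison of the Riemann sum with the corresponding integral (each interval has length $\delta_k$ and $\sqrt{\log N_n(\cdot)}$ is nonincreasing in its first argument) upgrades the sum to
\begin{equation*}
\frac{C}{\sqrt n}\sum_{k=1}^{K} \delta_k \sqrt{\log N_n(\delta_k,\cdot)} \;\le\; \frac{32}{\sqrt n}\int_{\delta^2/(2b)}^{\delta}\sqrt{\log N_n(t,\BB_n(\delta;\mG))}\,dt,
\end{equation*}
where the constant $32$ (doubled to $64$ after accounting for the symmetrization/absolute-value wrapper in the definition of $\widehat R_n$) is the one appearing in the statement.

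Finally, combining the two pieces gives $\widehat R_n(\delta;\mG) \le \frac{64}{\sqrt n}\int_{\delta^2/(2b)}^{\delta}\sqrt{\log N_n(t,\BB_n(\delta;\mG))}\,dt + \delta^2/(2b)$, so any $\delta > 0$ for which the entropy integral is at most $\delta^2/(2b)$ automatically satisfies $\widehat R_n(\delta;\mG) \le \delta^2/b$, which is exactly the empirical critical inequality. The main technical obstacle is bookkeeping the constants and the truncation level $K$: one must verify that cutting the chain precisely at scale $\delta^2/(2b)$ leaves a residual no worse than $\delta^2/(2b)$, which relies on the sub-Gaussian parameter at scale $\delta_k$ being exactly of order $\delta_k/\sqrt n$ and on a careful union bound across the layers $k$. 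Once that accounting is done, the passage from the discrete chaining sum to the continuous entropy integral is routine, so the substantive novelty is just in tracking constants — everything else is the standard Dudley–Pisier estimate specialized to the ball $\BB_n(\delta;\mG)$.
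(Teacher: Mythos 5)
The paper does not prove this lemma; it is cited directly as Corollary~14.3 of \citet{wainwright2019high}, so there is no in-paper proof to compare against. Your sketch correctly reconstructs the standard Dudley chaining argument that underlies the textbook result: conditional on the data, the Rademacher process is sub-Gaussian in the $\norm{\cdot}_n$ metric, one chains across geometric scales $\delta_k = 2^{-k}\delta$, truncates at the first scale below $\delta^2/(2b)$ where the Cauchy--Schwarz residual $\norm{g-g_K}_n\leq\delta_K$ is automatically below target, and converts the remaining discrete sum to the entropy integral. The one place where your bookkeeping does not quite close is the final combination: your derived bound is $\widehat{R}_n(\delta;\mG)\leq \frac{64}{\sqrt n}\int_{\delta^2/(2b)}^{\delta}\sqrt{\log N_n(t,\BB_n(\delta;\mG))}\,\mathrm{d}t + \delta^2/(2b)$, and you then invoke the hypothesis ``integral $\leq\delta^2/(2b)$'' to conclude $\widehat{R}_n\leq\delta^2/b$, but the lemma's hypothesis is the weaker ``integral $\leq\delta^2/b$.'' As stated, your chain of inequalities gives $\widehat{R}_n\leq \delta^2/b+\delta^2/(2b) = 3\delta^2/(2b)$, which is not the critical inequality. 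To match the lemma exactly, the chaining sum must contribute at most $\frac{32}{\sqrt n}\int(\cdots)$ (i.e., without the extra doubling you attribute to the absolute value), so that hypothesis $\frac{64}{\sqrt n}\int\leq\delta^2/b$ gives $\frac{32}{\sqrt n}\int\leq\delta^2/(2b)$, and adding the residual lands at $\delta^2/b$. You flagged the constant-tracking as the delicate point, and it is---but the architecture of your argument is the right one and is precisely the proof strategy of the cited result.
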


\begin{lemma} 
  \label{lem:radius by entropy}
  Suppose that $\nmH{h_g^{*}}^2 \leq A \nmG{g}^2$ for all $g\in\mG$, so that $\nmH{h_g^*}^2\leq AD$. Let
  $\hdelta_n>0$ satisfy the inequality
\begin{align*}
  \frac{64}{\sqrt{n}} \int_{\frac{\delta^2}{2}}^{4\delta} \sqrt{\log N_n(t,\star{\mF_{3U\vee L^2 B}}) + \log N_n(t,\star{\mH_{AD \vee B}})+ \log N_n(t,\star{\mG_D})}{\rm d}t \leq \delta^2.
\end{align*}
Then with probability $1-\zeta$, we have $\delta_n\leq \bigO(\hdelta_n + \sqrt{\frac{\log
(1/\zeta)}{n}})$, where $\delta_n$ is the maximum critical radii of $\mF_{3U}$,
$\bOmega$ and $\bXi$, with
\begin{equation*}
  \bOmega = \left\{ (x,w,z) \mapsto r(h_g^{*}(x)-g(w))f(z): g\in\mG_D, f\in\mF_{3U}, r\in[0,1] \right\}.
\end{equation*}
\end{lemma}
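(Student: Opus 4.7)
The plan is to reduce the claimed bound on the population critical radii to an entropy integral computation on each of the three function classes and then combine these via the product structure of $\bOmega$ and $\bXi$. First, observe that $\delta_n$ is the maximum of the critical radii of $\mF_{3U}$, $\bOmega$, and $\bXi$; by Proposition 14.25 of \citet{wainwright2019high}, each population critical radius is controlled by its empirical counterpart up to an additive $\sqrt{\log(1/\zeta)/n}$ term, so a union bound over the three classes (absorbing the factor $3$ into the constant in front of $\log(1/\zeta)$) reduces the task to producing a common upper bound $\hat\delta_n$ for the three empirical critical radii.

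Next, I apply Dudley's entropy integral (Lemma \ref{lem:Dudley integral}) to each class. For $\mF_{3U}$ this is direct, and since $\mF_{3U} \subset \mF_{3U \vee L^2 B}$, its log-covering number is dominated by $\log N_n(t, \star{\mF_{3U \vee L^2 B}})$. The nontrivial work is to bound the entropies of the product-type classes $\bOmega$ and $\bXi$. A generic element of $\bOmega$ has the form $r\bigl(h_g^*(x) - g(w)\bigr)f(z)$ with $r\in[0,1]$, $g\in\mG_D$, and $f\in\mF_{3U}$. The hypothesis $\norm{h_g^*}_\mH^2 \leq A\norm{g}_\mG^2$ places every such $h_g^*$ in $\mH_{AD}\subset \mH_{AD\vee B}$. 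Given $ct$-covers of $\mH_{AD\vee B}$, $\mG_D$, and $\mF_{3U\vee L^2 B}$, a triangle inequality argument that swaps one factor at a time --- using the uniform bounds $\norm{h}_\infty,\norm{g}_\infty,\norm{f}_\infty \leq 1$ assumed in the setup of Lemma \ref{lem:CMM} --- produces a $t$-cover of $\bOmega$. The additional $\log(1/t)$ cost of a $1$-dimensional cover of $r\in[0,1]$ is easily absorbed into the other summands, giving
\begin{equation*}
  \log N_n(t,\star{\bOmega}) \;\lesssim\; \log N_n(c t,\star{\mH_{AD\vee B}}) + \log N_n(c t,\star{\mG_D}) + \log N_n(c t,\star{\mF_{3U\vee L^2 B}}).
\end{equation*}

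The same argument applied to $\bXi$, whose elements depend on $(h-h_g^*)\in\mH_B \subset \mH_{AD\vee B}$ and $f_\Delta^{L^2 B}\in\mF_{L^2 B} \subset \mF_{3U\vee L^2 B}$, yields an analogous bound. Summing and taking square roots, both Dudley integrands for $\bOmega$ and $\bXi$ are dominated by the integrand appearing in the statement. Substituting into Lemma \ref{lem:Dudley integral} and solving for $\delta$ then shows that any $\hat\delta_n$ satisfying the stated integral inequality simultaneously upper bounds the three empirical critical radii, after which the concentration step in the first paragraph yields $\delta_n\lesssim \hat\delta_n + \sqrt{\log(1/\zeta)/n}$.

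The main obstacle is executing the product-covering step cleanly: one must track boundedness constants (so the triangle inequality swaps do not inflate the $t$-radius by an uncontrolled factor), ensure the star-hull construction keeps the covering numbers within multiplicative constants of those of the original balls, and absorb the $[0,1]$-cover cost. Getting the integration limits to match the stated interval $[\delta^2/2,4\delta]$ requires expanding Dudley's local ball $\BB_n(\delta,\cdot)$ into the global star-shaped class via $\BB_n(\delta,\star{\mathcal{G}})\subset\star{\mathcal{G}}$, with the factor $4$ reflecting the $L^\infty$ norm bounds of $\bOmega$ and $\bXi$ relative to those of their component classes.
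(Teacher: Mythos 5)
Your proposal is correct and follows essentially the same route as the paper's proof: reduce to empirical critical radii via Proposition 14.25 of \citet{wainwright2019high}, then bound the empirical entropies of $\mF_{3U}$, $\bOmega$, and $\bXi$ using Dudley's integral (Lemma \ref{lem:Dudley integral}) together with a product-covering triangle-inequality argument, placing $h_g^*$ in $\mH_{AD}$ via $\nmH{h_g^*}^2\leq A\nmG{g}^2$ and tracking the $L^\infty$ bounds to get the factor $4$ in the integration limit. One small difference: the paper does not construct a separate cover of $r\in[0,1]$; since the covering is taken over the star-hulls $\star{\mF_{3U}}$ etc., the scalings $rf$, $rh$, $rg$ are already absorbed automatically, which is slightly cleaner than your explicit one-dimensional cover of $r$ but leads to the same bound.
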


The proof of Lemma \ref{lem:radius by entropy} is given in Appendix \ref{proof: lemma radius by entropy}.

\begin{ex}[Critical radii for VC subspaces]
\label{ex:VC}
  If star shaped $\mF,\mH$ and $\mG$ are VC subspaces with VC dimensions $\VV(\mF)$, $\VV(\mH)$ and $\VV(\mG)$, respectively, 
  then $\log N_n(t,\mF) + \log N_n(t,\mH) + \log N_n(t,\mG)\lesssim [\VV(\mF) + \VV(\mH) + \VV(\mG)] \log(1/t) \lesssim \max\{\VV(\mF),\VV(\mH),\VV(\mG)\}\log(1/t)$. 
  By Lemma \ref{lem:Dudley integral} and Lemma \ref{lem:radius by entropy}, 
  we have with probability at least $1-\zeta$, $\delta_n \lesssim \sqrt{\frac{\max\{\VV(\mF),\VV(\mH),\VV(\mG)\}}{n}} + \sqrt{\frac{\log(1/\zeta)}{n}}$, where the $\delta_n$ is defined in Lemma \ref{lem:radius by entropy}.
\end{ex}

\subsubsection{Local Rademacher complexity bound for RKHSs}
\begin{lemma}[Critical radii for RKHSs, Corollary 14.5 of  \citet{wainwright2019high}]
  \label{lem:criticalradRKHS}
  Let $\mF_B = \left\{ f\in\mF \given \norm{f}_{\mF}^2\leq B \right\}$ be the
  $B$-ball of a RKHS $\mF$. Suppose that $K_{\mF}$ is the reproducing kernel of $\mF$ with eigenvalues $\{\lambda_j^{\downarrow}(K_{\mF})\}_{j=1}^{\infty}$
  sorted in a  decreasing order. Then the localized population Rademacher complexity is
  upper bounded by
\begin{equation*}
\mR_n(\mF_B,\delta) \leq \sqrt{\frac{2B}{n}}\sqrt{\sum_{j=1}^{\infty}\min \left\{ \lambda_j^{\downarrow}(K_{\mF}),\delta^2 \right\}}.
\end{equation*}
\end{lemma}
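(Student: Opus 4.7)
The plan is to combine the spectral structure of the RKHS kernel $K_{\mF}$ with a weighted Cauchy--Schwarz inequality that simultaneously exploits the RKHS-norm constraint $\|f\|_{\mF}^2 \le B$ and the localization constraint $\|f\|_n \le \delta$. The term $\min\{\lambda_j^{\downarrow}(K_{\mF}),\delta^2\}$ in the bound will emerge naturally from choosing weights that adapt to whichever of the two constraints is binding on a given eigen-coordinate.

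First, by conditioning on $X_1,\dots,X_n$ and applying the representer theorem, it suffices to control the empirical Rademacher complexity with $f$ restricted to the span of $\{K_{\mF}(X_i,\cdot)\}_{i=1}^n$. Diagonalize the normalized Gram matrix $\bK_n \triangleq (n^{-1}K_{\mF}(X_i,X_j))_{i,j}$ with eigenvalues $\{\hat\lambda_j\}_{j=1}^n$ and orthonormal eigenvectors $\{v_j\}_{j=1}^n$. Any admissible $f$ can then be encoded by coefficients $(c_j)_{j=1}^n$ such that the vector $(f(X_i))_{i=1}^n/\sqrt n$ equals $\sum_j c_j v_j$; a standard computation yields $\|f\|_n^2 = \sum_j c_j^2$ and $\|f\|_{\mF}^2 \ge \sum_j c_j^2/\hat\lambda_j$, so that the feasible set is contained in $\{c: \sum_j c_j^2/\hat\lambda_j \le B,\ \sum_j c_j^2 \le \delta^2\}$.

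Next, writing $n^{-1}\sum_i \epsilon_i f(X_i) = \sum_j c_j Y_j$ with $Y_j = n^{-1/2}\sum_i \epsilon_i (v_j)_i$, I would apply weighted Cauchy--Schwarz with weights $w_j = \min\{\hat\lambda_j,\delta^2\}$:
\[
\Bigl|\textstyle\sum_j c_j Y_j\Bigr| \le \sqrt{\textstyle\sum_j c_j^2/w_j}\cdot\sqrt{\textstyle\sum_j w_j Y_j^2}.
\]
Split the first factor via the index set $J = \{j:\hat\lambda_j > \delta^2\}$. On $J$, $c_j^2/w_j = c_j^2/\delta^2$ and $\sum_{j\in J} c_j^2/w_j \le \|f\|_n^2/\delta^2 \le 1$; on $J^c$, $c_j^2/w_j = c_j^2/\hat\lambda_j \le \|f\|_{\mF}^2 \le B$. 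Hence $\sum_j c_j^2/w_j \le 1+B \le 2B$ (rescaling if $B<1$). For the second factor, orthonormality of $\{v_j\}$ and $\EE[\epsilon_i\epsilon_k]=\mathds{1}_{i=k}$ give $\EE_\epsilon Y_j^2 = 1/n$, and Jensen's inequality yields $\EE_\epsilon\sqrt{\sum_j w_j Y_j^2}\le \sqrt{n^{-1}\sum_j \min\{\hat\lambda_j,\delta^2\}}$.

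Finally, to pass from empirical to population eigenvalues I would take expectation over $X$ and invoke the elementary bound $\EE\bigl[\sum_j \min\{\hat\lambda_j,\delta^2\}\bigr] \le \sum_j \min\{\lambda_j^{\downarrow}(K_{\mF}),\delta^2\}$, which follows from Jensen applied to the concave map $t\mapsto \min\{t,\delta^2\}$ together with the Courant--Fischer identity $\sum_{j\le k}\hat\lambda_j = \max_{V:\dim V=k}\mathrm{tr}(P_V\bK_n P_V)$ and the fact that $\EE[\bK_n]$ has the same eigenvalues as the population kernel integral operator. The main obstacle is this last step: the clean control of empirical versus population truncated eigenvalue sums is the nontrivial ingredient (routinely handled in kernel PCA theory, e.g.\ via the Koltchinskii--Gin\'e inequality), whereas the weighted Cauchy--Schwarz step is essentially algebraic.
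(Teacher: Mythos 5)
The paper offers no proof of this lemma; it is cited directly from \citet{wainwright2019high}. Your weighted Cauchy--Schwarz step, carried out conditionally on $\{X_i\}$ in the eigenbasis of the normalized Gram matrix $\bK_n$, is algebraically sound and yields the conditional bound $\sqrt{2B/n}\sqrt{\sum_j\min\{\hat\lambda_j,\delta^2\}}$ in terms of the \emph{empirical} eigenvalues $\hat\lambda_j$. The gap is in the transfer to population eigenvalues: your assertion that ``$\EE[\bK_n]$ has the same eigenvalues as the population kernel integral operator'' is false. The matrix $\EE[\bK_n]$ has entries $n^{-1}\EE K(X,X)$ on the diagonal and $n^{-1}\EE K(X,X')$ off-diagonal (with $X,X'$ independent), so it is a rank-one perturbation of a multiple of $\bI_n$; when the Mercer eigenfunctions are centered it is exactly $(\EE K(X,X)/n)\,\bI_n$, whose $n$ equal eigenvalues bear no relation to $\{\lambda_j^{\downarrow}(K_{\mF})\}$, and indeed $\sum_j\min\{\lambda_j(\EE\bK_n),\delta^2\}$ can \emph{exceed} $\sum_j\min\{\lambda_j^{\downarrow}(K_{\mF}),\delta^2\}$ for small $\delta$. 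So Jensen routed through $\EE[\bK_n]$ does not close the argument.

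The inequality $\EE_X\bigl[\sum_j\min\{\hat\lambda_j,\delta^2\}\bigr]\le\sum_j\min\{\lambda_j^{\downarrow}(K_{\mF}),\delta^2\}$ is true, but the correct derivation uses the identity $\sum_j\min\{\hat\lambda_j,\delta^2\}=\min_{k\ge 0}\bigl\{k\delta^2+\sum_{j>k}\hat\lambda_j\bigr\}$, moves $\EE_X$ inside the $\min_k$ for an upper bound, and then establishes $\EE_X\bigl[\sum_{j>k}\hat\lambda_j\bigr]\le\sum_{j>k}\lambda_j^{\downarrow}(K_{\mF})$ for every $k$. That last step does not go through $\EE[\bK_n]$ at all; rather one works with the empirical covariance operator $T_n=n^{-1}\sum_i\langle\cdot,K(X_i,\cdot)\rangle_{\mF}\,K(X_i,\cdot)$ on the RKHS, which shares the nonzero spectrum of $\bK_n$ and satisfies $\EE[T_n]=T$, the population covariance operator whose eigenvalues are exactly the Mercer eigenvalues. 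Testing the Courant--Fischer variational formula for $\sum_{j\le k}\hat\lambda_j$ with the fixed top-$k$ eigenspace of $T$ gives $\EE\bigl[\sum_{j\le k}\hat\lambda_j\bigr]\ge\sum_{j\le k}\lambda_j^{\downarrow}(K_{\mF})$, and combining with $\EE[\tr\bK_n]=\sum_j\lambda_j^{\downarrow}(K_{\mF})$ finishes. As written, your justification is a placeholder rather than a proof.

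It is also worth contrasting with the proof you are implicitly reproducing: Wainwright's argument bypasses the Gram matrix entirely. It expands $f=\sum_j a_j\phi_j$ in the Mercer basis of $K_{\mF}$ (orthonormal in $L^2(\PP_X)$), so that $\norm{f}_{\mF}^2=\sum_j a_j^2/\lambda_j$ and $\norm{f}_2^2=\sum_j a_j^2$, applies the same weighted Cauchy--Schwarz with weights $\min\{\lambda_j^{\downarrow}(K_{\mF}),\delta^2\}$, and uses $\EE_{X,\epsilon}\bigl[(n^{-1}\sum_i\epsilon_i\phi_j(X_i))^2\bigr]=n^{-1}$ directly. This produces the stated bound with population eigenvalues in one step, with no empirical-to-population transfer. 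The trade-off is that it localizes in the population $L^2$ norm rather than $\norm{\cdot}_n$; your Gram-matrix route, once the transfer step is repaired as above, is the version that matches the empirical localization in the paper's definition of $\mR_n$.
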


\begin{lemma}[Critical radii for $\bOmega$ and $\bXi$ when $\mH$,
  $\mF$, $\mG$ are RKHSs]
  \label{lem:radius by spectra}
  Suppose that $\mF$,$\mH$, and $\mG$ are RKHSs endowed with reproducing kernels
  $K_{\mF}$, $K_{\mH}$, and $K_{\mG}$ with decreasingly sorted eigenvalues
  $\left\{\lambda_j^{\downarrow}(K_{\mF}) \right\}_{j=1}^{\infty}$, 
  $\left\{\lambda_j^{\downarrow}(K_{\mH}) \right\}_{j=1}^{\infty}$, and 
  $\left\{\lambda_j^{\downarrow}(K_{\mG}) \right\}_{j=1}^{\infty}$,
  respectively. Then 
  \begin{align*}
    \mR_n(\bXi,\delta) \leq LB\sqrt{\frac{2}{n}}\sqrt{\sum_{i,j=1}^{\infty}\min\left\{\lambda_i^{\downarrow}(K_{\mH})\lambda_j^{\downarrow}(K_{\mF}),\delta^2\right\}},\quad \text{and}
  \end{align*}
  \begin{equation*}
    \mR_n(\bOmega,\delta)\leq \sqrt{D}(1+\sqrt{A})\sqrt{\frac{12U}{n}}\sqrt{\sum_{i,j=1}^{\infty}\min \left\{ [(\lambda_{i}^{\downarrow}(K_{\mH})+\lambda_{i}^{\downarrow}(K_{\mG})]\lambda_j^{\downarrow}(K_{\mF}),\delta^2 \right\}}.
  \end{equation*}
\end{lemma}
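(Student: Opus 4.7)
\textbf{Plan of proof for Lemma \ref{lem:radius by spectra}.}

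The key tool is that products and sums of RKHS functions again live in explicit RKHSs whose eigen-structures are determined by those of the component kernels, which lets us invoke Lemma \ref{lem:criticalradRKHS} on each piece. Specifically, for RKHSs $\mH$ and $\mF$ on domains $\mX$ and $\mZ$ with kernels $K_\mH,K_\mF$, the product $(h \otimes f)(x,z) = h(x)f(z)$ lies in the tensor-product RKHS on $\mX \times \mZ$ with kernel $K_\mH \otimes K_\mF$, eigenvalues $\{\lambda_i^\downarrow(K_\mH)\lambda_j^\downarrow(K_\mF)\}_{i,j}$, and satisfies $\|h\otimes f\|_{\mH\otimes\mF}=\|h\|_\mH\|f\|_\mF$. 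Likewise, for $\mH$ on $\mX$ and $\mG$ on $\mW$, the function $(x,w)\mapsto h(x)-g(w)$ lies in the external-sum RKHS on $\mX\times\mW$ with kernel $K_\mH\oplus K_\mG$ (eigenvalues being the sorted union of the two sequences) and squared norm $\|h\|_\mH^2+\|g\|_\mG^2$.

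\emph{Bound for} $\bXi$. Any element of $\bXi$ has the form $r(h-h_g^*)(x)\,f_\Delta^{L^2B}(z)$ with $h-h_g^*\in\mH_B$ (so $\|h-h_g^*\|_\mH\leq\sqrt{B}$) and $f_\Delta^{L^2B}\in\mF_{L^2B}$ (so $\|f_\Delta^{L^2B}\|_\mF\leq L\sqrt{B}$), hence the product has tensor-RKHS norm at most $LB$. Since $r\in[0,1]$ only shrinks the ball, $\bXi$ is contained in the $LB$-ball (in squared norm $L^2B^2$) of $\mH\otimes\mF$, and Lemma \ref{lem:criticalradRKHS} applied with the eigenvalues $\{\lambda_i^\downarrow(K_\mH)\lambda_j^\downarrow(K_\mF)\}$ yields the claimed bound $LB\sqrt{2/n}\sqrt{\sum_{i,j}\min\{\lambda_i^\downarrow(K_\mH)\lambda_j^\downarrow(K_\mF),\delta^2\}}$.

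\emph{Bound for} $\bOmega$. Split every element as $r h_g^*(x)f(z)-r g(w)f(z)$, which gives a decomposition $\bOmega\subseteq\bOmega_1+\bOmega_2$. Since Rademacher complexity is sub-additive across sums of function classes, it suffices to bound $\mR_n(\bOmega_i,\delta)$ separately. Using the hypothesis $\|h_g^*\|_\mH^2\leq A\|g\|_\mG^2\leq AD$ and $\|f\|_\mF^2\leq 3U$, the class $\bOmega_1$ sits inside the $\sqrt{3UAD}$-radius ball of $\mH\otimes\mF$, and Lemma \ref{lem:criticalradRKHS} gives $\mR_n(\bOmega_1,\delta)\leq \sqrt{6UAD/n}\sqrt{\sum_{i,j}\min\{\lambda_i^\downarrow(K_\mH)\lambda_j^\downarrow(K_\mF),\delta^2\}}$. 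Similarly, $\bOmega_2$ lies in the $\sqrt{3UD}$-ball of $\mG\otimes\mF$, giving the analogous bound with $K_\mH$ replaced by $K_\mG$ and without the factor $\sqrt{A}$.

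\emph{Combining.} Let $\alpha$ and $\beta$ denote the two truncated-eigenvalue sums above. Cauchy--Schwarz gives $\sqrt{A}\,\alpha+\beta\leq\sqrt{A+1}\sqrt{\alpha^2+\beta^2}\leq(1+\sqrt{A})\sqrt{\alpha^2+\beta^2}$, while the elementary inequality $\min\{a,\delta^2\}+\min\{b,\delta^2\}\leq 2\min\{a+b,\delta^2\}$ applied termwise yields $\alpha^2+\beta^2\leq 2\sum_{i,j}\min\{(\lambda_i^\downarrow(K_\mH)+\lambda_i^\downarrow(K_\mG))\lambda_j^\downarrow(K_\mF),\delta^2\}$. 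Assembling these inequalities produces the stated bound with the constant $\sqrt{12U/n}$. The main subtlety I expect is making the tensor/sum-RKHS identifications rigorous (in particular, justifying that $(x,w)\mapsto h(x)-g(w)$ has squared norm exactly $\|h\|_\mH^2+\|g\|_\mG^2$ in the sum RKHS even though $\mX$ and $\mW$ play asymmetric roles in $\bOmega$) and carefully tracking constants so that the final combination matches the $(1+\sqrt{A})$ factor rather than the slightly tighter $\sqrt{A+1}$.
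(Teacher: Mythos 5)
Your bound for $\bXi$ follows the same route as the paper (both regard elements of $\bXi$ as lying in a ball of the tensor-product RKHS $\mH\otimes\mF$ with kernel $K_\mH\otimes K_\mF$ and invoke Lemma~\ref{lem:criticalradRKHS}), and that part is fine.

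For $\bOmega$, however, there is a genuine gap. Your plan is to split each element as $r\,h_g^*(x)f(z) - r\,g(w)f(z)$, obtain $\bOmega\subseteq\bOmega_1+\bOmega_2$, and then invoke ``sub-additivity of Rademacher complexity across sums of function classes.'' That sub-additivity holds for the unconstrained (global) Rademacher complexity, but it fails for the \emph{local} complexity $\mR_n(\cdot,\delta)$ as defined in \eqref{eq:Local Rademacher Complexity}. The defining supremum runs over $\{\omega\in\bOmega:\norm{\omega}_n\leq\delta\}$; decomposing $\omega=\omega_1+\omega_2$ with $\norm{\omega_1+\omega_2}_n\leq\delta$ gives no control on $\norm{\omega_1}_n$ or $\norm{\omega_2}_n$ individually, since the two pieces can partially cancel. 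Consequently one cannot conclude $\mR_n(\bOmega,\delta)\leq\mR_n(\bOmega_1,\delta)+\mR_n(\bOmega_2,\delta)$. Replacing the local radius by the trivial sup-norm bound would rescue the inequality but destroys the $\delta$-dependence of the bound, yielding a suboptimal critical radius.

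The paper sidesteps this entirely by keeping everything inside a \emph{single} RKHS. It lifts $h$ and $g$ to functions $\tilde h(x,w)=h(x)$, $\tilde g(x,w)=g(w)$ on the joint domain, observes that $h-g=\tilde h-\tilde g$ lies in the sum RKHS $\tilde\mH+\tilde\mG$ with reproducing kernel $K_{\tilde\mH}+K_{\tilde\mG}$ (Aronszajn's sum-kernel construction, with norm at most $\sqrt{AD}+\sqrt{D}$), tensors with $\mF$, and bounds the eigenvalues of $K_{\tilde\mH}+K_{\tilde\mG}$ via Weyl's inequality for $s$-numbers: $\lambda_{i+j-1}^\downarrow(K_{\tilde\mH}+K_{\tilde\mG})\leq\lambda_i^\downarrow(K_\mH)+\lambda_j^\downarrow(K_\mG)$, which after reindexing (paying a factor $\sqrt{2}$) gives the displayed $[\lambda_i^\downarrow(K_\mH)+\lambda_i^\downarrow(K_\mG)]\lambda_j^\downarrow(K_\mF)$. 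This keeps the argument a single application of Lemma~\ref{lem:criticalradRKHS} and so never needs any sub-additivity. Your combining arithmetic (Cauchy--Schwarz and $\min\{a,\delta^2\}+\min\{b,\delta^2\}\leq 2\min\{a+b,\delta^2\}$) is correct, but it is downstream of an inequality that is not justified; you should replace the two-class split by the sum-RKHS $+$ Weyl argument.
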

The proof of Lemma \ref{lem:radius by spectra} is given in Appendix \ref{proof: lemma radius by spectra}.

We give the following two examples as directly applications of Lemma \ref{lem:criticalradRKHS} and \ref{lem:radius by spectra}.
\begin{ex}[Critical radii for RKHSs endowed with kernels with polynomial decay]
\label{ex:RKHSpolynomial}
  With the same conditions in Lemma \ref{lem:radius by spectra}, when
  $\lambda_j^{\downarrow}(K_{\mF})\leq cj^{-2\alpha_{\mF}}$,
  $\lambda_j^{\downarrow}(K_{\mG})\leq cj^{-2\alpha_{\mG}}$,
  $\lambda_j^{\downarrow}(K_{\mH})\leq cj^{-2\alpha_{\mH}}$, where constant
  $\alpha_{\mH},\alpha_{\mG},\alpha_{\mF}>1/2, c>0$, then by \citet{krieg2018tensor} we have the upper bound
  of critical radii of $\mF_{3U}$, $\bOmega$ and $\bXi$ satisfies
  \begin{equation*}
      \delta_n \lesssim \max\{\sqrt{B},LB,\sqrt{6DU}(1+\sqrt{A})\} n^{-\frac{1}{2+\max\{1/\alpha_{\mF},1/\alpha_{\mG},1/\alpha_{\mH}\}}}\log(n).
  \end{equation*}

\end{ex}

\begin{ex}[Critical radii for RKHSs endowed with kernels with exponential decay]
\label{ex:RKHSexponential}
  With the same conditions in Lemma \ref{lem:radius by spectra}, when
    $\lambda_j^{\downarrow}(K_{\mH})\leq a_1 e^{-a_2 j^{\beta_{\mH}}}$,
    $\lambda_j^{\downarrow}(K_{\mG})\leq a_1 e^{-a_2 j^{\beta_{\mG}}}$ and
    $\lambda_j^{\downarrow}(K_{\mF})\leq a_1 e^{-a_2 j^{\beta_{\mF}}}$, for
    constants $a_1,a_2,\beta_{\mH},\beta_{\mG},\beta_{\mF}>0$, then we have the upper bound 
  of critical radii of $\mF_{3U}$, $\bOmega$ and $\bXi$ satisfies
  \begin{equation*}
      \delta_n \lesssim \max\{\sqrt{B},LB,\sqrt{6DU}(1+\sqrt{A})\} 
      \sqrt{\frac{(\log n)^{1/\min\{\beta_{\mF},\beta_{\mG},\beta_{\mH}\}}}{n}}.
  \end{equation*}
\end{ex}

\subsection{Proof of Lemmas}
\subsubsection{Proof of Lemma \ref{lem: ill-posedness bound}}
\label{proof: lem: ill-posedness bound}
\begin{proof}
For any $m\in\NN_+$,
\begin{align*}
  \norm{\proj_t h}_2^2 & = a_I^{\top}\Gamma_m a_I + 2\sum_{i\leq m < j} a_ia_j\EE \left\{ \EE[e_i(W_t,S_t,A_t)\given Z_t,S_t,A_t] \EE[e_j(W_t,S_t,A_t)\given Z_t,S_t,A_t] \right\}\\
                       &\quad + \EE \left( \sum_{j>m}a_j\EE[e_j(W_t,S_t,A_t)\given Z_t,S_t,A_t] \right)\\
                       & \geq a_I^{\top}\Gamma_m a_I - 2\sum_{i\leq m < j} |a_ia_j|\EE \left\{ \EE[e_i(W_t,S_t,A_t)\given Z_t,S_t,A_t] \EE[e_j(W_t,S_t,A_t)\given Z_t,S_t,A_t] \right\}\\
                       & \geq a_I^{\top}\Gamma_m a_I - 2\sum_{i\leq m < j} |a_ia_j|c\nu_m\\
                       & \geq \nu_m \norm{a_I}_2^2 - 2c\nu_m\sum_{i\leq m}|a_i|\sum_{j>m} |a_j|\\
                       & \geq \nu_m \norm{a_I}_2^2 - 2c\nu_m \sqrt{\sum_{i\leq m}\lambda_i} \sqrt{\sum_{i\leq m}\frac{|a_i|^2}{\lambda_i}} \sqrt{ \sum_{j>m} \lambda_j}\sqrt{\sum_{j>m}\frac{|a_j|^2}{\lambda_j}} \\
                       & \geq \nu_m \norm{a_I}_2^2 - 2c\nu_mB \sqrt{\sum_{i=1}^{\infty}\lambda_i} \sqrt{ \sum_{j>m} \lambda_j}, \text{ ~~~
                         since }\sum_{j=1}^{\infty} \frac{|a_j|^2}{\lambda_j} \leq B.
\end{align*}
Therefore,
$\norm{h}_2^2 \leq \norm{a_I}^2 + B \lambda_{m+1} \leq \norm{\proj_t h}_2^2/\nu_m + 2cB\sqrt{\sum_{i=1}^{\infty}\lambda_i} \sqrt{ \sum_{j>m} \lambda_j} + B \lambda_{m+1}$.
Because $\norm{\proj_t h}_2\leq \delta$, by taking minimum over $m\in\NN_+$, we have that
\begin{equation*}
[\tau^{*}(\delta,B)]^2 \leq \min_{m\in\NN_+}\left\{ \delta^2/\nu_m + B\left(2c\sqrt{\sum_{i=1}^{\infty}\lambda_i} \sqrt{ \sum_{j>m} \lambda_j} + \lambda_{m+1}\right) \right\}.
\end{equation*}
\end{proof}

\subsubsection{Proof of Lemma \ref{lem:CMM}}
\label{proof: lemma CMM}
\begin{proof}
  Let
  $\mH_B = \left\{ h\in\mH: \nmH{h}^2 \leq B \right\}$ and $\mF_U = \left\{ f\in\mF: \nmF{f}^2 \leq U \right\}$.
  Moreover, let
\begin{align*}
  \Psi^{\lambda} (f,g,h) &= \Psi(h,f,g) - \lambda \left( \frac{2}{3} \nmF{f}^2 + \frac{U}{2\delta^2} \norm{f}_2^2\right), \quad \text{and}\\
\Psi_n^{\lambda} (f,g,h) &= \Psi_n(h,f,g) - \lambda \left( \nmF{f}^2 + \frac{U}{\delta^2} \nmEmp{f}^2\right).
\end{align*}
We first study the relationship between the empirical penalty $\lambda \left( \nmF{f}^2 +
  \frac{U}{\delta^2} \nmEmp{f}^2\right)$ and population penalty $\lambda \left(
  \frac{2}{3} \nmF{f}^2 + \frac{U}{2\delta^2} \norm{f}_2^2\right)$. 
  Let $\delta = \delta_n + c_0\sqrt{\frac{\log (c_1/\zeta)}{n}}$, where $\delta_n$ upper bounds the critical radius of
$\mF_{3U}$ and $c_0,c_1$ are universal constants, by Theorem 14.1 of \citet{wainwright2019high}, with probablity $1-\zeta$,
uniformly for any $f\in\mF$, we have 
\begin{equation}
\label{eq:Thm14.1}
\left| \nmEmp{f}^2 - \norm{f}_2^2 \right| \leq \frac{1}{2} \norm{f}_2^2 + \delta^2\max\left\{1,\frac{\nmF{f}^2}{3U}\right\},\text{and  thus}
\end{equation}
\begin{align*}
  \nmF{f}^2 + \frac{U}{\delta^2}\nmEmp{f}^2 & \geq \nmF{f}^2 + \frac{U}{\delta^2}\left[ \frac{1}{2}\norm{f}_2^2 -\delta^2 \max \left\{ 1, \frac{\nmF{f}^2}{3U} \right\} \right]\\
  &\geq \nmF{f}^2 + \frac{U}{2\delta^2}\norm{f}_2^2 -\max \left\{ U,\frac{1}{3}\nmF{f}^2 \right\}\\
  &\geq \frac{2}{3}\nmF{f}^2 + \frac{U}{2\delta^2}\norm{f}_2^2 - U.
\numit \label{eq:conineq1}
\end{align*}
In the following proof, we obtain the error rate of the uniform projected RMSE
$\sup_{g\in\mG}\norm{\proj_Z(\hhat_g-h_g^{*})}_2$ by combinding upper and lower bounds of the sup-loss
\begin{equation}
  \label{eq:BoundObj}
  \sup_{f\in\mF} \Psi_n(\hhat_g,f,g) - \Psi_n(h_g^{*},f,g) - 2\lambda \left( \nmF{f}^2 + \frac{U}{\delta^2}\nmEmp{f}^2 \right).
\end{equation}

\paragraph{Upper bound of sup-loss \eqref{eq:BoundObj}.} 
By a simple decomposition of $\Psi_n^{\lambda}(h,f,g)$, we have
\begin{align*}
  \Psi_n^{\lambda}(h,f,g) &= \Psi_n(h,f,g) - \Psi_n(h_g^{*},f,g) + \Psi_n(h_g^{*},f,g) - \lambda \left( \nmF{f}^2 + \frac{U}{\delta^2}\nmEmp{f}^2 \right)\\
  &\geq \Psi_n (h,f,g) - \Psi_n(h_g^{*},f,g) - 2\lambda \left( \nmF{f}^2 + \frac{U}{\delta^2}\nmEmp{f}^2 \right)\\
  &\quad + \inf_{f\in\mF} \left\{ \Psi_n(h_g^{*},f,g) + \lambda \left( \nmF{f}^2 + \frac{U}{\delta^2}\nmEmp{f}^2 \right) \right\}\\
  &= \Psi_n (h,f,g) - \Psi_n(h_g^{*},f,g) - 2\lambda \left( \nmF{f}^2 + \frac{U}{\delta^2}\nmEmp{f}^2 \right)\\
  &\quad -\sup_{f\in\mF} \Psi_n^{\lambda}(h_g^{*},f,g), \quad\text{since $\mF$ is symmetric about $0$.}
\end{align*}
Taking $\sup_{f\in\mF}$ on both sides and picking $h\leftarrow \hhat_g$ yields
the basic inequality: 
\begin{align*}
  & \sup_{f\in\mF} \Psi_n(\hhat_g,f,g) - \Psi_n(h_g^{*},f,g) - 2\lambda \left( \nmF{f}^2 + \frac{U}{\delta^2}\nmEmp{f}^2 \right)\\
  \leq & \sup_{f\in\mF} \Psi_n^{\lambda}(h_g^{*},f,g) + \sup_{f\in\mF} \Psi_n^{\lambda}(\hhat_g,f,g)\\
  \leq & 2\sup_{f\in\mF} \Psi_n^{\lambda}(h_g^{*},f,g) + \lambda\mu (\nmH{h_g^{*}}^2 - \nmH{\hhat_g}^2),
\numit\label{eq:UpBound0}
\end{align*}
where the last inequality is given by the definition of $\hhat_g$ in \eqref{eq:minimax}. Now it
suffices to obtain the upper bound of $\sup_{f\in\mF}
\Psi_n^{\lambda}(h_g^{*},f,g)$ uniformly over $g\in\mG$.

\subparagraph{For upper bound of $\sup_{f\in\mF}\Psi_n^{\lambda}(h_g^{*},f,g)$.}
By the assumption that $\norm{g}_{\infty}\leq 1$, $\norm{h}_{\infty} \leq 1$ and $\norm{f}_{\infty}\leq 1$, 
we have $\norm{\frac{1}{2}\left\{g(W) - h(X)\right\}f(Z)}_{\infty}\leq 1$.
Then we apply Lemma 11 of \citet{foster2019orthogonal}, with $\mL_{\frac{1}{2}(g-h_g^{*})f}=\frac{1}{2}(g-h_g^{*})f$. 
Let $\delta_n$ be the upper
bound of critical radii of $\bOmega$. %
By choosing $\delta = \delta_n + c_0 \sqrt{\frac{\log
(c_1/\zeta)}{n}}$, we have with probability $1-\zeta$, %
uniformly for any $f\in \mF_{3U}$ and $g\in\mG$: 
\begin{align*}
     & \frac{1}{2}\left| \left\{ \Psi_n(h_g^{*},f,g) - \Psi_n(h_g^{*},0,g) \right\} - \left\{ \Psi(h_g^{*},f,g) - \Psi(h_g^{*},0,g) \right\}\right|\\ 
  &\leq 18\delta \left( \norm{\frac{1}{2}(g-h_g^{*})f}_2 + \delta \right)\\
  &\leq 18\delta \left( \norm{f}_2 + \delta \right),
\end{align*}
where, by definition, $\Psi_n(h_g^{*},0,g) = \Psi(h_g^{*},0,g)=0$. If $\nmF{f}^2 \geq
3U$, applying the above inequality with $f\leftarrow f\sqrt{3U}/\nmF{f}$, we have with probability
$1-\zeta$, for all $f\in\mF$ and $g\in\mG$:
\begin{align*}
  \left| \Psi_n(h_g^{*},f,g) - \Psi(h_g^{*},f,g) \right| &\leq 36\delta \left\{ \norm{f}_2 + \max\left\{1,\frac{\nmF{f}}{\sqrt{3U}}\right\} \delta  \right\}\\
                                                     &\leq 36\delta \left\{ \norm{f}_2 + \left( 1+\frac{\nmF{f}}{\sqrt{3U}} \right) \delta \right\}. \numit\label{eq:conineq2}
\end{align*}
By using \eqref{eq:conineq2} and \eqref{eq:conineq1} sequentially, we
have with probability $1-2\zeta$, for all $f\in\mF$ and $g\in\mG$:
\begin{align*}
 & \Psi_n^{\lambda}(h_g^{*},f,g) = \Psi_n(h_g^{*},f,g) - \lambda \left( \nmF{f}^2 + \frac{U}{\delta^2}\nmEmp{f}^2 \right)\\
  \leq &\Psi (h_g^{*},f,g) + 36\delta \left\{ \norm{f}_2 + \left( 1+ \frac{\nmF{f}}{\sqrt{3U}} \right) \delta \right\} - \lambda \left( \nmF{f}^2 + \frac{U}{\delta^2}\nmEmp{f}^2 \right)\\
  \leq &\Psi (h_g^{*},f,g) + 36\delta \left\{ \norm{f}_2 + \left( 1+ \frac{\nmF{f}}{\sqrt{3U}} \right) \delta \right\} - \lambda \left( \frac{2}{3}\nmF{f}^2 + \frac{U}{2\delta^2}\norm{f}_2^2 \right) + \lambda U\\
  =& \Psi^{\lambda/2}(h_g^{*},f,g)+36\delta^2 + \lambda U + \left( 36\delta \norm{f}_2 - \frac{\lambda U}{4\delta^2}\norm{f}_2^2 \right) +\left( \frac{36\delta}{\sqrt{3U}}\delta\nmF{f} - \frac{\lambda}{3}\nmF{f}^2 \right).
\end{align*}
With the assumption that $\lambda\geq 324C_{\lambda}\delta^2/U$, by completing squares, we have
\begin{equation*}
  36\delta\norm{f}_2 - \frac{\lambda U}{4\delta^2}\norm{f}_2^2 \leq \frac{(36\delta)^2}{4 \frac{\lambda U}{4\delta^2}}\leq \frac{4\delta^2}{C_{\lambda}},\quad \text{ and}
\end{equation*}
\begin{equation*}
  \frac{36\delta^2}{\sqrt{3U}}\nmF{f} - \frac{\lambda}{3}\nmF{f}^2 \leq \frac{324\delta^4}{\lambda U} \leq \frac{\delta^2}{C_{\lambda}}.
\end{equation*}
Therefore, with probability $1-2\zeta$, for all $f\in\mF$ and $g\in\mG$:
\begin{equation}
  \label{eq:UpperBound1}
  \Psi_n^{\lambda} (h_g^{*},f,g) \leq \Psi^{\lambda/2}(h_g^{*},f,g) + \lambda U + \left( 36+\frac{5}{C_{\lambda}} \right)\delta^2.
\end{equation}

Now we go back to \eqref{eq:UpBound0}. By applying two upper bounds 
above, we have with probability $1-2\zeta$, uniformly for all $g\in\mG$:
\begin{align*}
  & \sup_{f\in\mF} \Psi_n(\hhat_g,f,g) - \Psi_n(h_g^{*},f,g) - 2\lambda \left( \nmF{f}^2 + \frac{U}{\delta^2}\nmEmp{f}^2 \right)\\
  \leq & 2\sup_{f\in\mF} \Psi_n^{\lambda}(h_g^{*},f,g)  + \lambda\mu (\nmH{h_g^{*}}^2 - \nmH{\hhat_g}^2)\\
  \leq & 2\sup_{f\in\mF} \Psi^{\lambda/2}(h_g^{*},f,g) 
         + 2\lambda U +(72+10/C_{\lambda})\delta^2 + \lambda\mu (\nmH{h_g^{*}}^2 - \nmH{\hhat_g}^2)\\
  = & 2\lambda U +(72+10/C_{\lambda})\delta^2 + \lambda\mu (\nmH{h_g^{*}}^2 - \nmH{\hhat_g}^2),
    \numit\label{eq:UpBound}
\end{align*}
where $\sup_{f\in\mF}\Psi^{\lambda/2}(h_g^{*},f,g^{*})=0$ since $\EE \left\{
  g(W) - h_g^{*}(X) \right\} f(Z)=0$.

We can also obtain the upper bound of $\nmH{\hhat_g}$ by \eqref{eq:UpBound}. By
choosing $f=0$, the LHS of \eqref{eq:UpBound} is 0, so the supremum of LHS is nonnegative. Then with probability $1-2\zeta$,
\begin{align*}
     \nmH{\hhat_g}^2 &\leq \frac{1}{\lambda\mu} \Big\{2\lambda U +(72+10/C_{\lambda})\delta^2\Big\} + \nmH{h_g^{*}}^2\\
                     &\leq \frac{36 C_{\lambda} + 3 + \frac{5}{9C_{\lambda}}}{\frac{24C_{\lambda}L^2}{U}+ \frac{C_f+1}{B}} + \nmH{h_g^{*}}^2.
     \numit\label{eq:hhatNorm}
\end{align*}

\paragraph{Lower bound of sup-loss \eqref{eq:BoundObj}.}
For any $h$ and $g$, by our assumption that $\norm{f_{\Delta} - \proj_Z(h-h_g^{*})}_2\leq \eta_n$, where  $f_{\Delta}= \argmin_{f\in \mF_{L^2\nmH{h-h_g^{*}}^2}}
\norm{f-\proj_Z(h-h_g^{*})}_2$.
Let $\hDelta_g = \hhat_g - h_g^{*}$, and $f_{\hDelta_g}=\argmin_{f\in\mF_{L^2\nmH{\hhat_g-h^{*}_g}}}\norm{f-\proj_Z(h-h_g^{*})}_2$.

If $\norm{f_{\hDelta_g}}_2<C_f\delta$, then by the triangle inequality, we have
\begin{equation*}
\norm{\proj_Z(\hhat_g - h_g^{*})}_2 \leq \norm{f_{\hDelta_g}}_2 + \norm{f_{\hDelta_g}-\proj_Z(\hhat_g-h_g^{*})}_2 \leq C_f\delta + \eta_n.
\end{equation*}
If $\norm{f_{\hDelta_g}}_2\geq C_f\delta$, let $r=\frac{C_f\delta}{2\norm{f_{\hDelta_g}}_2}\in[0,1/2]$. By star-convexity,
$rf_{\hDelta_g}\in\mF_{L^2\nmH{\hhat_g-h_g^{*}}^2}$. Therefore, for any $g\in\mG$,
\begin{align*}
  &\sup_{f\in\mF}\Psi_n(\hhat_g,f,g) - \Psi_n(h_g^{*},f,g) - 2\lambda \left( \nmF{f}^2 + \frac{U}{\delta^2}\nmEmp{f}^2 \right) \\
  &\qquad\qquad\geq \underbrace{r\left\{ \Psi_n(\hhat_g,f_{\hDelta_g},g) - \Psi_n(h_g^{*},f_{\hDelta_g},g) \right\}}_{(I)} - 2\lambda \underbrace{r^2 \left( \nmF{f_{\hDelta_g}}^2 + \frac{U}{\delta^2}\nmEmp{f_{\hDelta_g}}^2 \right)}_{(II)}.
\end{align*}
\subparagraph{For (II):} We have
\begin{align*}
  (II)&=r^2 \left( \nmF{f_{\hDelta_g}}^2 + \frac{U}{\delta^2} \nmEmp{f_{\hDelta_g}}^2 \right)  \leq \frac{1}{4}\nmF{f_{\hDelta_g}}^2 + \frac{U}{\delta^2}r^2\nmEmp{f_{\hDelta_g}}^2\\
&\leq \frac{1}{4} \nmF{f_{\hDelta_g}}^2 + \frac{U}{\delta^2}r^2 \left( \frac{3}{2}\norm{f_{\hDelta_g}}_2^2 + \delta^2+\delta^2 \frac{\nmF{f_{\hDelta_g}}^2}{3U} \right) \text{ with probability $1-\zeta$ by \eqref{eq:Thm14.1}}\\
  &\leq \frac{1}{3}\nmF{f_{\hDelta_g}}^2 + \frac{1}{4}U + \frac{3}{8}C_f^2U \text{ by definition of $r$}\\
&\leq \frac{1}{3} L^2 \nmH{\hhat_g-h_g^{*}}^2 + (\frac{1}{4} + \frac{3}{8}C_f^2)U \text{ since } f_{\hDelta_g}\in \mF_{L^2\nmH{\hhat_g-h_g^{*}}^2}.
\end{align*}
\subparagraph{For (I):}

Note that $\Psi_n(h,f,g) -\Psi_n(h_g^{*},f,g)=
\frac{1}{n}\sum_{i=1}^n[h-h_g^{*}](X_i)f(Z_i)$. We apply Lemma 11 of  \citet{foster2019orthogonal}, with $\mL_{(h-h_g^{*})f}=(h-h_g^{*})f$. 
Recall that 
\begin{equation*}
\bXi = \left\{ (x,z)\mapsto r[h-h_g^{*}](x)f_{\Delta}^{L^2 B}(z): h\in\mH, (h-h_g^{*})\in\mH_B, g\in\mG, r\in[0,1]\right\},
\end{equation*}
where $f_{\Delta}^{L^2 B} = \argmin_{f\in\mF_{L^2 B}}\norm{f-\proj_Z(h-h_g^{*})}_2$.
Since $\delta_n$ upper bounds critical radius of
$\bXi$, we have with probability $1-\zeta$, uniformly for all $g\in\mG$, and
$h\in\mH$ such that $\Delta = h-h_g^{*}\in\mH_B$,
\begin{align*}
  &\left| \left\{ \Psi_n(h,f_{\Delta},g) - \Psi_n(h_g^{*},f_{\Delta},g) \right\} - \left\{ \Psi(h,f_{\Delta},g) - \Psi(h_g^{*},f_{\Delta},g) \right\} \right| \\
  &\leq 18\delta \left(\norm{(h-h_g^{*})f_{\Delta}}_2 + \delta \right)\\
  &\leq 18\delta(\norm{f_{\Delta}}_2 + \delta),
\end{align*}
where in the second inequality, we use the fact that $h-h_g^{*}\in\mH_B$, so
that $\norm{h-h_g^{*}}_{\infty}\leq 1$.
When $\nmH{\Delta}^2=\nmH{h-h_g^{*}}^2 > B$, by  replacing $h-h_g^{*}$ by
$(h-h_g^{*})\sqrt{B}/\nmH{h-h_g^{*}}$ and multiplying both sides by
$\nmH{h-h_g^{*}}^2/B$, we have with probability $1-\zeta$, uniformly for all $h\in\mH$, $g\in\mG$,
\begin{align*}
  &\left| \left\{ \Psi_n(h,f_{\Delta},g) - \Psi_n(h_g^{*},f_{\Delta},g) \right\} - \left\{ \Psi(h,f_{\Delta},g) - \Psi(h_g^{*},f_{\Delta},g) \right\} \right| \\
  &\leq 18\delta(\norm{f_{\Delta}}_2 + \delta) \max \left\{ 1,\frac{\nmH{h-h_g^{*}}^2}{B} \right\}.
\end{align*}

When $\norm{f_{\hDelta_g}}_2\geq C_f\delta$, with probability $1-\zeta$, uniformly for all $g\in\mG$,
\begin{align*}
  (I) \geq & r \left\{ \Psi(\hhat_g,f_{\hDelta_g},g) - \Psi(h_g^{*},f_{\hDelta_g},g) \right\} - 18\delta r \left[ \norm{f_{\hDelta_g}}_2 + \delta \right]\max\left\{ 1,\frac{\nmH{\hhat_g-h_g^{*}}^2}{B} \right\}\\
  \geq & \underbrace{r \left\{ \Psi(\hhat_g,f_{\hDelta_g},g) - \Psi(h_g^{*},f_{\hDelta_g},g) \right\}}_{(I.1)} - 9\delta \left[ C_f\delta + \delta\right]\max\left\{ 1,\frac{\nmH{\hhat_g-h_g^{*}}^2}{B} \right\},
\end{align*}
where the second inequality is due to the definition of $r=\frac{C_f\delta}{2\norm{f_{\hDelta_g}}_2}\leq \frac{1}{2}$, and 
\begin{align*}
  (I.1) &= \frac{C_f\delta}{2\norm{f_{\hDelta_g}}_2} \left\{ \Psi(\hhat_g,f_{\hDelta_g},g) - \Psi(h_g^{*},f_{\hDelta_g},g) \right\}\\
        &= \frac{C_f\delta}{2\norm{f_{\hDelta_g}}_2} \EE \left\{ \hhat_g(X) - h_g^{*}(X)\right\}f_{\hDelta_g}(Z)\\
        &= \frac{C_f\delta}{2\norm{f_{\hDelta_g}}_2} \EE \left( f_{\hDelta_g}(Z) \EE \left[\hhat_g(X) - h_g^{*}(X) \mid Z \right] \right)\\
        &= \frac{C_f\delta}{2\norm{f_{\hDelta_g}}_2} \EE \left( f_{\hDelta_g}(Z) \left\{ \proj_Z(\hhat_g-h_g^{*})(Z)\right\} \right)\\
          &=\frac{C_f\delta}{2\norm{f_{\hDelta_g}}_2} \EE \left[f_{\hDelta_g}(Z)^2 - \left\{ f_{\hDelta_g}(Z)-\proj_Z(\hhat_g-h_g^{*})(Z) \right\}f_{\hDelta_g}(Z) \right]\\
  &\geq \frac{C_f\delta}{2} \left( \norm{f_{\hDelta_g}}_2 - \norm{f_{\hDelta_g}-\proj_Z(\hhat_g-h_g^{*})}_2 \right)\text{ by Cauchy-Schwartz inequality}\\
  &\geq \frac{C_f\delta}{2} \left( \norm{f_{\hDelta_g}}_2 - \eta_n \right) \text{ since }\norm{f_{\hDelta_g} - \proj_Z(h_g^{*}-\hhat_g)}_2\leq \eta_n\\
  &\geq \frac{C_f\delta}{2} \left( \norm{\proj_Z(\hhat_g-h_g^{*})}_2 - 2\eta_n \right) \text{ by triangle inequality.}
\end{align*}
Finally, we have either $\norm{f_{\hDelta_g}}_2 < C_f\delta$ or with probability $1-2\zeta$,
uniformly for all $g\in\mG$:
\begin{align*}
  &\sup_{f\in\mF} \Psi_n(\hhat_g,f,g) - \Psi_n(h_g^{*},f,g) - 2\lambda \left( \nmF{f}^2 + \frac{U}{\delta^2}\nmEmp{f}^2 \right) \geq (I) - 2\lambda(II) \\
  \geq &\frac{C_f\delta}{2} \left( \norm{\proj_Z(\hhat_g-h_g^{*})}_2 -2\eta_n \right) - 9(C_f+1)\delta^2\max\left\{ 1,\frac{\nmH{\hhat_g-h_g^{*}}^2}{B} \right\}\\
  &\qquad - \frac{2\lambda}{3}L^2 \nmH{\hhat_g-h_g^{*}}^2 - 2\lambda (\frac{1}{4} + \frac{3}{8}C_f^2)U.
\numit\label{eq:LoBound}
\end{align*}

\paragraph{Combine upper and lower bounds of \eqref{eq:BoundObj}.}
Combining the upper bound \eqref{eq:UpBound} and lower bound \eqref{eq:LoBound}, we
have either $\norm{f_{\hhat_g}}_2 < C_f\delta$ or with probability $1-4\zeta$, uniformly for
all $g\in\mG$:
\begin{align*}
  \frac{C_f\delta}{2} \norm{\proj_Z(\hhat_g-h_g^{*})}_2 \leq & 2\lambda U +\left(72+\frac{10}{C_{\lambda}}\right)\delta^2 + \lambda\mu(\nmH{h_g^{*}}^2-\nmH{\hhat_g}^2)\\
&\quad+C_f\delta\eta_n +9(C_f+1)\delta^2\max\left\{ 1,\frac{\nmH{\hhat_g-h_g^{*}}^2}{B} \right\}\\
&\quad+\frac{2\lambda}{3}L^2 \nmH{\hhat_g - h_g^{*}}^2 + \left(\frac{1}{2}+\frac{3}{4}C_f^2\right)\lambda U\\
& =  \lambda\mu(\nmH{h_g^{*}}^2-\nmH{\hhat_g}^2)+ \left( \frac{2\lambda}{3} L^2 +\frac{9(C_f+1)\delta^2}{B} \right)\nmH{\hhat_g - h_g^{*}}^2\\
&\quad+\left(\frac{5}{2}+\frac{3}{4}C_f^2\right)\lambda U +C_f\delta\eta_n + \left( 72+\frac{10}{C_{\lambda}}+9(C_f+1) \right)\delta^2.
\end{align*}

Then, with the assumption that $\mu\geq \frac{4}{3}L^2 + \frac{18(C_f+1)}{B}\frac{\delta^2}{\lambda}$, we have
\begin{align*}
  &\lambda\mu(\nmH{h_g^{*}}^2-\nmH{\hhat_g}^2)+ \left( \frac{2\lambda}{3} L^2 +\frac{9(C_f+1)\delta^2}{B} \right)\nmH{\hhat_g - h_g^{*}}^2\\ 
  \leq & \lambda\mu(\nmH{h_g^{*}}^2-\nmH{\hhat_g}^2)+2\left( \frac{2\lambda}{3} L^2 +\frac{9(C_f+1)\delta^2}{B} \right)\left(  \nmH{\hhat_g}^2 + \nmH{h_g^{*}}^2\right)\\
  \leq & 2\lambda\mu\nmH{h_g^{*}}^2\leq 2\lambda\mu\sup_{g\in\mG}\nmH{h_g^{*}}^2.
\end{align*}

Finally, with probability $1-4\zeta$, uniformly for all $g\in\mG$:
\begin{align*}
&\quad\sup_{g\in\mG}\norm{\proj_Z(\hhat_g - h_g^{*})}_2\\ 
&\leq \left( \frac{4\mu\sup_{g\in\mG}\nmH{h_g^{*}}^2+5U}{C_f}+ \frac{3U}{2}C_f \right) \frac{\lambda}{\delta} +2\eta_n + \left( \frac{162+20/C_{\lambda}}{C_f}+18 \right)\delta\\
&\lesssim \left[ 324 C_{\lambda}^{\prime} \left( \frac{4\mu\sup_{g\in\mG}\nmH{h_g^{*}}^2/U+5}{C_f}+ \frac{3}{2}C_f \right) + \frac{162+20/C_{\lambda}}{C_f}+18\right]\delta + 2\eta_n\\
&\lesssim (1+\sup_{g\in\mG}\nmH{h_g^{*}}^2)\delta,
\end{align*}
where the second inequality is due to the assumption that
$324C_{\lambda}\delta^2/U\leq \lambda \leq 324C_{\lambda}^{\prime}\delta^2/U$, and
the last inequality is due to the assumption that $\eta_n\lesssim\delta_n$.
\end{proof}

\subsubsection{Proof of Lemma \ref{lem:radius by entropy}}
\label{proof: lemma radius by entropy}
\begin{proof}
\textbf{Step 1. Critical radius of $\mF_{3U}$.}~~~
Directly applying Lemma \ref{lem:Dudley integral}, we only require that $\hdelta_n$
satisfies the inequality
\begin{equation*}
\frac{64}{\sqrt{n}} \int_{\frac{\delta^2}{2}}^{\delta} \sqrt{\log N_n(t,\star{\mF_{3U}})}{\rm d}t \leq \delta^2.
\end{equation*}
Then with probability $1-\zeta$, we have $\delta_n\leq \bigO(\hdelta_n + \sqrt{\frac{\log (1/\zeta)}{n}})$, where $\delta_n$ is the maximum critical radii of $\bOmega$.

\textbf{Step 2. Critical radius of $\bXi$.}~~~
  
 Since $\bXi\subset \left\{ (x,z)\mapsto rh(x)f(z): h\in\mH_B, f\in\mF_{L^2
    B}, r\in [0,1] \right\} \triangleq\tilde\bXi$, we only need to
  consider a conservative critical radius for $\tilde\bXi$.

  Suppose that $\mH_B^{\epsilon}$ is an empirical $\epsilon$-covering of $\star{\mH_B}$ and 
  $\mF_{L^2B}^{\epsilon}$ is an empirical $\epsilon$-covering of $\star{\mF_{L^2 B}}$.
  Then for any $rhf\in\tilde\bXi$, $r\in[0,1]$,
  
\begin{align*}
  \inf_{h_{\epsilon}\in\mH_B^{\epsilon},f_{\epsilon}\in\mF_{L^2B}^{\epsilon}} \norm{h_{\epsilon}f_{\epsilon}-rhf}_n &\leq \inf_{h_{\epsilon}\in\mH_B^{\epsilon}}\norm{(h_{\epsilon}-h)f_{\epsilon}}_n + \inf_{f_{\epsilon}\in\mF_{L^2B}^{\epsilon}}\norm{h(rf-f_{\epsilon})}_n\\
&\leq \inf_{h_{\epsilon}\in\mH_B^{\epsilon}}\norm{h_{\epsilon}-h}_n + \inf_{f_{\epsilon}\in\mF_{L^2B}^{\epsilon}}\norm{rf-f_{\epsilon}}_n\\
  &\leq 2\epsilon.
\end{align*}
Therefore, $\mH_B^{\epsilon/2}\times\mF_{L^2B}^{\epsilon/2}$ is an empirical
$\epsilon$-covering of $\bXi$. Since
\begin{align*}
  \log N_n(t,\BB_n(\delta, G_{\Delta})) & \leq \log N_n(t,\BB_n(\delta,\tmG_{\Psi})) \leq \log N_n(t,\tmG_{\Psi})\\
  &\leq \log N_n(t/2, \star{\mH_B}) + \log N_n(t/2,\star{\mF_{L^2 B}}),
\end{align*}
by Lemma \ref{lem:Dudley integral}, we only require that $\hat\delta_n$ satisfies
the inequality
\begin{equation*}
  \frac{64}{\sqrt{n}}\int_{\frac{\delta^2}{2}}^{\delta} \sqrt{\log N_n(t/2,\star{\mH_B}) + \log N_n(t/2,\star{\mF_{L^2 B}})} {\mathrm d}t \leq \delta^2.
\end{equation*}
Then with probability $1-\zeta$, we have $\delta_n\leq \bigO(\hdelta_n + \sqrt{\frac{\log (1/\zeta)}{n}})$, where $\delta_n$ is the maximum critical radii of $\bOmega$.

\textbf{Step 3. Critical radius of $\bOmega$.}~~~

  \begin{align*}
    \bOmega &\triangleq \left\{ (x,w,z) \mapsto r(h_g^{*}(x)-g(w))f(z): g\in\mG_D, f\in\mF_{3U}, r\in[0,1] \right\}\\
  &\subset \left\{ (x,w,z)\mapsto r(h(x)-g(w))f(z): g\in\mG_D,h\in\mH_{AD}, f\in\mF_{3U},r\in[0,1] \right\}\\
  &\triangleq \tmG_{\Psi},
  \end{align*}
 where the second line is due to $\nmH{h_g^{*}}^2 \leq A \nmG{g}^2$ for all $g\in\mG$.
 Suppose that  $\mH_{AD}^{\epsilon}$ is an empirical $\epsilon$-covering of
 $\star{\mH_{AD}}$ and $\mG_D^{\epsilon}$ is that of $\star{\mG_D}$, $\mF_{3U}^{\epsilon}$ is
 that of $\star{\mF_{3U}}$. Then for any $r(h-g)f\in\tmG_{\Psi}$, $r\in [0,1]$,
\begin{align*}
  &\inf_{h_{\epsilon}\in\mH_{AD}^{\epsilon}, f_{\epsilon}\in\mF_{3U}^{\epsilon},g_{\epsilon}\in\mG_D^{\epsilon}} \norm{r(h-g)f - (h_{\epsilon}-g_{\epsilon})f_{\epsilon}} \\ 
  &\leq \inf_{f_{\epsilon}\in\mF_{3U}^{\epsilon}}\norm{(h-g)(f_{\epsilon}-rf)}_n + \inf_{h_{\epsilon}\in\mH_{AD}^{\epsilon}}\norm{(h_{\epsilon}-h)f_{\epsilon}}_n + \inf_{g_{\epsilon}\in\mG_D^{\epsilon}}\norm{(g_{\epsilon}-g)f_{\epsilon}}_n\\
  &\leq \inf_{f_{\epsilon}\in\mF_{3U}^{\epsilon}}2\norm{f_{\epsilon}-rf}_n + \inf_{h_{\epsilon}\in\mH_{AD}^{\epsilon}}\norm{h_{\epsilon}-h}_n + \inf_{g_{\epsilon}\in\mG_D^{\epsilon}}\norm{g_{\epsilon}-g}_n\\
  & \leq 4\epsilon,
\end{align*}
where the second inequality is from triangular inequality and the thrid
inequality is due to the fact that $\norm{h-g}_{\infty}\leq 2$ and
$\norm{f_{\epsilon}}_{\infty}\leq 1$.

Therefore, $\mH_{AD}^{\epsilon/4}\times\mG_D^{\epsilon/4}\times\mF_{3U}^{\epsilon/4}$ is an
empirical $\epsilon$-covering of $\bOmega$.

By Lemma \ref{lem:Dudley integral}, we only require that $\hdelta_n$ satisfies
the Dudley's integral inequality. Actually, since
\begin{align*}
  \log N_n(t, \BB_n(\delta,\bOmega)) & \leq \log N_n(t,\BB_n(\delta,\tmG_{\Psi}))\leq \log N_n(t,\tmG_{\Psi})\\ 
  &\leq \log N_n(t/4,\star{\mH_{AD}}) + \log N_n (t/4,\star{\mG_D})\\ 
  &\qquad\qquad + \log N_n(t/4,\star{\mF_{3U}}),
\end{align*}
when $\hdelta_n$ satisfies the inequality
\begin{equation*}
  \frac{64}{\sqrt{n}}\int_{\frac{\delta^2}{2}}^{\delta} \sqrt{\log N_n(t/4,\star{\mH_{AD}}) + \log N_n (t/4,\star{\mG_D}) + \log N_n(t/4,\star{\mF_{3U}})} {\rm d}t \leq \delta^2,
\end{equation*}
then with probability $1-\zeta$, we have $\delta_n\leq \bigO(\hdelta_n + \sqrt{\frac{\log (1/\zeta)}{n}})$, where $\delta_n$ is the maximum critical radii of $\bOmega$.
Finally, after combining Steps 1-3, we have that if $\hdelta_n$ satisfies the inequality
\begin{align*}
  \frac{64}{\sqrt{n}} \int_{\frac{\delta^2}{2}}^{4\delta} \sqrt{\log N_n(t,\star{\mF_{3U\vee L^2 B}}) + \log N_n(t,\star{\mH_{AD \vee B}})+ \log N_n(t,\star{\mG_D})}{\rm d}t \leq \delta^2,
\end{align*}

then with probability $1-\zeta$, we have $\delta_n\leq \bigO(\hdelta_n + \sqrt{\frac{\log
(1/\zeta)}{n}})$, where $\delta_n$ is the maximum critical radii of $\mF_{3U}$,
$\bXi$ and $\bOmega$.
\end{proof}

\subsubsection{Proof of Lemma \ref{lem:radius by spectra}}
\label{proof: lemma radius by spectra}
\begin{proof}
\textbf{Critical radius of $\bXi$.}~~~
We consider a conservative critical radius for $\tmG_{\Delta}$, which is a
tensor product of two RKHSs $\mH_B$ and $\mF_{L^2 B}$. Suppose that $\mH$ and $\mF$
are endowed with reproducing kernels $K_{\mH}$ and $K_{\mF}$, with ordered eigenvalues
$\left\{ \lambda_j^{\downarrow}(K_{\mH}) \right\}_{j=1}^{\infty}$ and $\left\{ \lambda_j^{\downarrow}(K_{\mF}) \right\}_{j=1}^{\infty}$, respectively.
Then the RKHS $\tmG_{\Delta}$ has reproducing kernel $K_{\bXi} = K_{\mH}\otimes K_{\mF}$, with eigenvalues
 $\left\{ \lambda_j^{\downarrow}(K_{\mH}) \right\}_{j=1}^{\infty}\times\left\{ \lambda_j^{\downarrow}(K_{\mF}) \right\}_{j=1}^{\infty}$.
Therefore, by Lemma \ref{lem:criticalradRKHS},
\begin{align*}
  \mR_n(\tmG_{\Delta},\delta) \leq \sqrt{\frac{2L^2B^2}{n}}\sqrt{\sum_{i,j=1}^{\infty}\min\left\{\lambda_i^{\downarrow}(K_{\mH})\lambda_j^{\downarrow}(K_{\mF}),\delta^2\right\}}.
\end{align*}

\textbf{Critical radius of $\bOmega$.}~~~
  We consider a conservative critical radius for
\begin{equation*}
  \tmG_{\Psi}=\left\{ (x,w,z)\mapsto r(h(x)-g(w))f(z): g\in\mG_D,h\in\mH_{AD}, f\in\mF_{3U},r\in[0,1] \right\}.
\end{equation*}
Let $\tilde{h}(x,w) = h(x)$ and $\tilde{g}(x,w) = g(w)$, $x\in\mX,w\in\mW$.
In addition, $\tilde{h}\in\tmH_{AD}$ on $\mX\times\mW$ with kernel $K_{\tmH} =
K_{\mH}\otimes 1$ and $\tilde{g}\in\tmG_D$ on $\mX\times\mW$ with kernel
$K_{\tmG}=1\otimes K_{\mG}$. Notice that $ h-g \in \tmH_{AD} + \tmG_D$, which is
a RKHS endowed with RKHS norm $\norm{f}_{\tmH+\tmG} =
\min_{f=\tilde{h}+\tilde{g},\tilde{h}\in\tmH,\tilde{g}\in\tmG}
\norm{\tilde{h}}_{\tmH}+\norm{\tilde{g}}_{\tmG}$, and reproducing kernel $K_{\tmH}+K_{\tmG}$.
As a result, $\norm{h-g}_{\tmH+\tmG}\leq \sqrt{AD}+\sqrt{D}$ for all $h-g\in\tmH_{AD} + \tmG_D$.

According to Weyl's inequality for compact self-adjoint operators in Hilbert spaces (see the $s$-number sequence theory in \citet{hinrichs2006optimal} and \citet[][2.11.9]{pietsch1987eigenvalues}),
$\lambda_{i+j-1}^{\downarrow}(K_{\tmH}+K_{\tmG})\leq \lambda_i^{\downarrow}(K_{\tmH}) +
\lambda_j^{\downarrow}(K_{\tmG}) = \lambda_i^{\downarrow}(K_{\mH}) +
\lambda_j^{\downarrow}(K_{\mG})$ whenever $i,j\geq 1$, so we have 
$\lambda_{j}^{\downarrow}(K_{\tmH}+K_{\tmG})\leq \lambda_{[(j+1)/2]}^{\downarrow}(K_{\mH}) +
\lambda_{[(j+1)/2]}^{\downarrow}(K_{\mG})$ whenever $j\geq 1$.

Since $(\tmH+\tmG)\otimes\mF$ is a RKHS with reproducing kernel
$(K_{\tmH}+K_{\tmG})\otimes K_{\mF}$, by the same argument for $\bXi$, we
have 
\begin{align*}
  \mR_n(\tmG_{\Psi},\delta) &\leq \sqrt{D}(1+\sqrt{A})\sqrt{\frac{6U}{n}}\sqrt{\sum_{i,j=1}^{\infty}\min \left\{ [\lambda_{[(i+1)/2]}^{\downarrow}(K_{\mH})+\lambda_{[(i+1)/2]}^{\downarrow}(K_{\mG})]\lambda_j^{\downarrow}(K_{\mF}),\delta^2 \right\}}\\
                            &\leq \sqrt{D}(1+\sqrt{A})\sqrt{\frac{12U}{n}}\sqrt{\sum_{i,j=1}^{\infty}\min \left\{ [\lambda_{i}^{\downarrow}(K_{\mH})+\lambda_{i}^{\downarrow}(K_{\mG})]\lambda_j^{\downarrow}(K_{\mF}),\delta^2 \right\}}.
\end{align*}
\end{proof}

\section{Additional estimation details} \label{sec: simulation appendix}

In this section we demonstrate the performance of the proposed FQE-type algorithm introduced in Section \ref{sec:estimation} for the case where $\mH^{(t)}$ and $\mF^{(t)}$ are Reproducing kernel Hilbert spaces (RKHSs) endowed with reproducing kernels $K_{\mH^{(t)}}$ and $K_{\mF^{(t)}}$ respectively and canonical RKHS norms $\norm{\bullet}_{\mH^{(t)}} = \norm{\bullet}_{K_{\mH^{(t)}}}$, $\norm{\bullet}_{\mF^{(t)}} = \norm{\bullet}_{K_{\mF^{(t)}}}$ respectively, for $1\leq t\leq T$.

For each $1\leq t\leq T$, based on observed batch data $ \{S_{t,i},W_{t,i},Z_{t,i},A_{t,i},R_{t,i}\}_{i=1}^n$, we can obtain the Gram matrices $\bK_{\mH^{(t)}} = \left[K_{\mH^{(t)}}([W_{t,i},S_{t,i},A_{t,i}], [W_{t,j},S_{t,j},A_{t,j}])\right]_{i,j=1}^n$ and $\bK_{\mF^{(t)}} = \left[K_{\mF^{(t)}}([Z_{t,i},S_{t,i},A_{t,i}], [Z_{t,j},S_{t,j},A_{t,j}])\right]_{i,j=1}^n$. Then we compute $\hqpi_t = \widehat{\mP}_t (\hvpi_{t+1} + R_t)$ via \eqref{eqn: minmax estimation} with $g=\hvpi_{t+1} + R_t$. Specifically,  $\hqpi_t$ has the following form:
\begin{equation}
    \label{eq: RKHS solution}
    \hqpi_t(w,s,a)=[\widehat{\mP}_t (\hvpi_{t+1} + R_t)](w,s,a) = \sum_{i=1}^n \alpha_i K_{\mH^{(t)}}([W_{t,i},S_{t,i},A_{t,i}],[w,s,a]),
\end{equation}
where $\boldsymbol{\alpha} =[\alpha_1,\dots,\alpha_n]^{\top} = \left(\bK_{\mH^{(t)}}\bM^{(t)}\bK_{\mH^{(t)}} + 4\lambda^2\mu\bK_{\mH^{(t)}}\right)^{\dagger}\bK_{\mH^{(t)}}\bM^{(t)}\bY_t$ with $\bM^{(t)} = \bK_{\mF^{(t)}}^{1/2} (\frac{M}{n\delta^2}\bK_{\mF^{(t)}}+\bI_n)^{-1}\bK_{\mF^{(t)}}^{1/2}$, and $\bY_t=\bR_t + \hat{\mathbf{v}}_{t+1}^{\pi}$ with $\bR_t = [R_{t,1},\dots,R_{t,k}]^\top$ and $\hat{\mathbf{v}}_{t+1}^{\pi} = [\hvpi_{t+1}(W_{t+1,1},S_{t+1,1}),\dots, \hvpi_{t+1}(W_{t+1,n},S_{t+1,n})]^{\top}$. Here $\bA^{\dagger}$ denotes the Moore-Penrose pseudo-inverse of $\bA$.

\textbf{Selection of hyper-parameters.} There are several hyper-parameters in \eqref{eq: RKHS solution} for each $1\leq t\leq T$. In each step, we treat $\bY_t=R_t+\hat{\mathbf{v}}_{t+1}^{\pi}$ as the response vector and use cross-validation to tune $M/\delta^2$ and $\lambda^2\mu$ in \eqref{eq: RKHS solution}. We adopt the tricks of \cite{dikkala2020minimax} and use the recommended defaults in their Python package \texttt{mliv}, where two scaling functions are defined by $\varsigma(n) = 5/n^{0.4}$ and $\zeta(\texttt{scale},n) = \texttt{scale}\times\varsigma^4(n)/2$.

For cross-validation, let $I^{(1)},\dots,I^{(K)}$ denote the index sets of the randomly partitioned $K$ folds of the indices $\{1,\dots,n\}$ and $I^{(-k)} = \{1,\dots,n\}\backslash I^{(k)}$, $k=1,\dots,K$. We summarize the one-step NPIV estimation with cross-validation in Algorithm \ref{alg: one-step NPIV}.

\begin{minipage}{\linewidth}
\begin{algorithm}[H] \label{alg: one-step NPIV}
  \SetAlgoLined
  \textbf{Input:} $\{S_{t,i}, W_{t,i}, Z_{t,i}, A_{t,i}, Y_{t,i}=R_{t,i}+\hvpi_{t+1}(W_{t+1,i},S_{t+1,i})\}_{i=1}^n$, target policy $\pi_t$, kernels $K_{\mH^{(t)}}$, $K_{\mF^{(t)}}$, $\texttt{SCALE}$ as some positive scaling factors, the number of cross-validation partition $K$.\\
  Repeat for $\texttt{scale}\in\texttt{SCALE}$:\\
  \Indp
  Repeat for $k=1,\dots,K$:\\
  \Indp
  $[M/\delta^2]^{(-k)} = 1/\varsigma^2(|I^{(-k)}|)$, $[\lambda^2\mu]^{(-k)} = \zeta(\texttt{scale},|I^{(-k)}|)$.\\ 
  Obtain $\hat{q}_t^{\pi~(-k)}$ by \eqref{eq: RKHS solution} with data whose indices are in $I^{(-k)}$.\\
  $[M/\delta^2]^{(k)} = 1/\varsigma^2(|I^{(k)}|)$.\\
  Calculate $\epsilon_i = Y_{t,i} - \hat{q}_t^{\pi~(-k)}(W_{t,i},S_{t,i},A_{t,i})$ for $i\in I^{(k)}$.\\
  $\texttt{Loss}^{(k)}(\texttt{scale}) = \epsilon^{\top} \bM_{I^{(k)}}\epsilon$, where $\epsilon = [\epsilon_i]_{i\in I^{(k)}}^{\top}$ and $\bM_{I^{(k)}}$ is obtained by data in $I^{(k)}$.\\
  \Indm
  $\texttt{Loss}(\texttt{scale}) = K^{-1}\sum_{k=1}^K \texttt{Loss}^{(k)}(\texttt{scale})$.\\
  \Indm
  $\texttt{scale}^{*} = \argmin_{\texttt{scale}\in\texttt{SCALE}} \texttt{Loss}(\texttt{scale})$.\\
  Obtain $\hqpi_t$ by \eqref{eq: RKHS solution} with all data and $M/\delta^2 = 1/\varsigma^2(n)$, $\lambda^2\mu=\zeta(\texttt{scale}^{*},n)$.\\
  
\textbf{Output:}  $\{\hvpi_{t}(W_{t,i}, S_{t,i}) = \sum_{a\in\mA}\hqpi_{t}(W_{t,i}, S_{t,i}, a)\pi(a\given S_{t,i})\}_{i=1}^n$.

\caption{Min-max NPIV estimation with RKHSs}
\end{algorithm}
\end{minipage}

Below we summarize our proposed FQE-type algorithm using a sequential NPIV estimation with tuning procedure described in Algorithm \ref{alg:DetailedFQE}.

\begin{minipage}{\linewidth}
\begin{algorithm}[H] \label{alg:DetailedFQE}
  \SetAlgoLined
  \textbf{Input:} Batch Data $\mD_n = \{\{S_{t,i}, W_{t,i}, Z_{t,i}, A_{t,i}, R_{t,i}\}_{t=1}^T\}_{i=1}^n$, a target policy $\pi = \{\pi_t\}_{t=1}^T$, kernels $\{K_{\mH^{(t)}}, K_{\mF^{(t)}}\}_{t=1}^{T}$, set \texttt{SCALE} as some positive scaling factors, number of cross-validation partition $K$.\\
  Let $\hvpi_{T+1}=0$.\\
  Repeat for $t=T,\dots,1$:\\
  \Indp
  Obtain $\{\hvpi_{t}(W_{t,i}, S_{t,i})\}_{i=1}^n$ by Algorithm \ref{alg: one-step NPIV}.\\
  \Indm
  \textbf{Output:} $\widehat\mV(\pi) = n^{-1}\sum_{k=1}^n\hvpi_1(W_{1,k},S_{1,k})$.
  \caption{A FQE-type algorithm by sequential min-max NPIV estimation}
\end{algorithm}
\end{minipage}

\section{Simulation details}
\label{sec: simulation details}
In this section, we perform a simulation study to evaluate the performance of our proposed OPE estimation and to verify the finite-sample error bound of our OPE estimator in the main result Theorem \ref{thm:main}.

\subsection{Simulation setup}

Let $\mS=\RR^2$, $\mU=\RR,\mW=\RR,\mZ=\RR$, and $\mA=\left\{ 1,-1 \right\}$.
\paragraph{MDP setting.}
At time $t$, given $(S_t,U_t,A_t)$, we generate
\begin{equation*}
S_{t+1} = S_t + A_tU_t\ones_2 + e_{S_{t+1}},
\end{equation*}
where $\ones_2 = [1,1]^{\top}$ and the random error $e_{S_{t+1}}\sim \mN([0,0]^{\top}, \bI_2)$ with $\bI_2$ denoting the $2$-by-$2$ identity matrix. 

The behavior policy is 
\begin{equation*}
\tilde{\pi}_t^b (A_t\given U_t,S_t) = \expit \left\{ -A_t \left( t_0 + t_uU_t+t_s^{\top}S_t \right) \right\},
\end{equation*}
where $t_0=0$, $t_u=1$, and $t_s^{\top}=[-0.5, -0.5]$.

By this behavior policy 
$$\pi_t^{b}(A_t\given S_t) = \EE[\tilde{\pi}_t^b(A_t\given U_t,S_t)\given A_t,S_t] = \expit\{-A_t\left(t_0+t_u\kappa_0 + (t_s+t_u\kappa_s)^{\top}S_t\right)\},$$ 
provided that the following conditional distribution is used.

We generate the hidden state $U_t$, and two proximal variables $Z_t$ and $W_t$ by the following conditional multivariate normal distribution given $(S_t,A_t)$:
\begin{equation*}
(Z_t,W_t,U_t)\given (S_t,A_t) \sim N \left(
  \begin{bmatrix}
    \alpha_0 +\alpha_aA_t +\alpha_sS_t\\
    \mu_0    +\mu_aA_t    +\mu_sS_t\\
    \kappa_0 +\kappa_aA_t +\kappa_sS_t\\
  \end{bmatrix},
  \Sigma = \begin{bmatrix}
    \sigma_z^2 & \sigma_{zw} & \sigma_{zu}\\
    \sigma_{zw} & \sigma_w^2 & \sigma_{wu}\\
    \sigma_{zu} & \sigma_{wu} & \sigma_u^2
  \end{bmatrix}
\right), 
\end{equation*}
where 
\begin{itemize}
    \item  $\alpha_0=0$, $\alpha_a=0.5$, $\alpha_s^{\top} = [0.5,0.5]$,
    \item  $\mu_0=0$, $\mu_a=-0.25$, $\mu_s^{\top} = [0.5,0.5]$,
    \item  $\kappa_0=0$, $\kappa_a=-0.5$, $\kappa_s^{\top} = [0.5,0.5]$
    \item the covariance matrix 
    \[\Sigma = \begin{bmatrix}
      1 & 0.25 & 0.5\\
      0.25 & 1 & 0.5\\
      0.5 & 0.5 & 1
    \end{bmatrix}\]
\end{itemize}
The initial $S_1$ is uniformly sampled\footnote{Sample by \texttt{gym} package build in function \texttt{spaces.sample()} from \texttt{spaces.Box(low=-np.inf, high=np.inf, shape=(2,), dtype=np.float32)}.
} from $\RR^2$. 

\paragraph{Reward setting.}
The reward is given by
\[R_t = \expit\left\{\frac{1}{2}A_t(U_t+[1,-2]S_t)\right\} + e_t,\]
where $e_t\sim \text{Uniform}[-0.1,0.1]$. One can verify that our simulation setting satisfies the conditions in Section \ref{sec: basic assumptions} so that our method can be applied.

\paragraph{Target policy.} We evaluate a $\epsilon$-greedy policy $\pi(a\given S_t)$ 
maximizing the immediate reward:
\begin{equation*}
    A_t\given S_t \sim \left\{\begin{array}{lr} \text{sign}\left\{ \EE[U_t+[1,-2]S_t\given S_t]\right\} & \text{with probability } 1-\epsilon, \\
    \text{Uniform}\{-1,1\} & \text{with probability } \epsilon.
    \end{array}\right.
\end{equation*}
We set $\epsilon=0.2$. 

\subsection{Implementation}

We present the results of policy evaluation for the simulation setup above. Specifically, to evaluate the finite-sample error bound of the proposed estimator in terms of the sample size $n$, we consider $T=1,3,5$ and let $n=256, 512, 1024, 2048, 4096$; to evaluate the estimation error of our OPE estimator in terms of the length of horizon $T$, we fix $n=512$ and let $T=1,2,4,8,16,24,32,48,64$. For each setting of $(n,T)$, we repeat 100 times. All simulation are computed on a desktop with one AMD Ryzen 3800X CPU, 32GB of DDR4 RAM and one Nvidia RTX 3080 GPU.

We choose $\mF^{(t)}$ and $\mH^{(t)}$ as RKHSs endowed with Gaussian kernels, with bandwidths selected according to the median heuristic trick by \cite{fukumizu2009kernel} for each $1\leq t\leq T$. The pool of scaling factors \texttt{SCALE} contains 30 positive numbers spaced evenly on a log scale between 0.001 to 0.05. The number of cross-validation partition $K=5$.
The true target policy value of $\pi$ is estimated by the mean cumulative rewards of $50,000$ Monte Carlo trajectories with policy $\pi$.

\end{document}